\documentclass{article}

\usepackage{arxiv}

\usepackage{natbib}

\usepackage{microtype}
\usepackage{graphicx}
\usepackage{subcaption}
\usepackage{booktabs} 
\usepackage{placeins}

\usepackage{hyperref}

\usepackage{algorithm}
\usepackage{algorithmic}

\usepackage{hyperref}
\usepackage{url}

\usepackage{amsmath}
\usepackage{amssymb}
\usepackage{mathtools}
\usepackage{amsthm}

\usepackage{algorithm}
\usepackage{algorithmic}

\usepackage{multirow}

\theoremstyle{plain}
\newtheorem{theorem}{Theorem}[section]
\newtheorem{proposition}[theorem]{Proposition}
\newtheorem{lemma}[theorem]{Lemma}

\theoremstyle{definition}

\title{Learning to Shuffle: Block Reshuffling and Reversal Schemes for Stochastic Optimization}

\author{Lam M. Nguyen$^{1}$, Dzung T. Phan$^{1}$, Jayant Kalagnanam$^{1}$ \\
$^{1}$ IBM Research, Thomas J. Watson Research Center, Yorktown Heights, NY, USA \\
\\
\texttt{LamNguyen.MLTD@ibm.com}, \texttt{phandu@us.ibm.com},
\texttt{jayant@us.ibm.com}
}

\usepackage{pifont}

\usepackage{epsfig}
\usepackage{amssymb}
\usepackage{amsmath}
\usepackage{amsthm}
\usepackage{amsfonts}
\usepackage{bbding}
\usepackage{array}
\usepackage{paralist}
\usepackage{xargs}                      
\usepackage{caption}

\hypersetup{
  colorlinks   = true, 
  urlcolor     = blue, 
  linkcolor    = blue, 
  citecolor   = blue 
}

\newcolumntype{C}[1]{>{\centering\let\newline\\\arraybackslash\hspace{0pt}}m{#1}}

\newcommand\tagthis{\addtocounter{equation}{1}\tag{\theequation}}

\usepackage{xcolor}

\newcommand{\zero}[1]{{\boldsymbol{0}}}

\usepackage{multirow}


\begin{document}

\maketitle

\begin{abstract}
Shuffling strategies for stochastic gradient descent (SGD), including incremental gradient, shuffle-once, and random reshuffling, are supported by rigorous convergence analyses for arbitrary within-epoch permutations. In particular, random reshuffling is known to improve optimization constants relative to cyclic and shuffle-once schemes. However, existing theory offers limited guidance on how to design new data-ordering schemes that further improve optimization constants or stability beyond random reshuffling. In this paper, we design a pipeline using a large language model (LLM)-guided program evolution framework to discover an effective shuffling rule for without-replacement SGD. Abstracting from this instance, we identify two fundamental structural components: block reshuffling and paired reversal. We analyze these components separately and show that block reshuffling strictly reduces prefix-gradient variance constants within the unified shuffling framework, yielding provable improvements over random reshuffling under mild conditions. Separately, we show that paired reversal symmetrizes the epoch map and cancels the leading order-dependent second-order term, reducing order sensitivity from quadratic to cubic in the step size. Numerical experiments with the discovered algorithm validate the theory and demonstrate consistent gains over standard shuffling schemes across convex and nonconvex benchmarks.
\end{abstract}

\section{Introduction}

Stochastic gradient descent (SGD) remains a fundamental optimization method for large-scale machine learning \citep{Nemirovski2009,ghadimi2013stochastic,Bottou2018}.
In practice, SGD is almost always implemented in an epoch-based, without replacement manner, where the training data are processed according to a permutation at the beginning of each epoch.
Such shuffling strategies are standard in modern machine learning frameworks, including TensorFlow, PyTorch, and Keras \citep{tensorflow2015-whitepaper,pytorch,chollet2015keras}.
Classical approaches include incremental gradient, shuffle-once, and random reshuffling, whose convergence properties are now well understood under standard smoothness assumptions \citep{nedic2001incremental}. Incremental gradient (IG) uses a fixed data order, shuffle-once (SO) samples one random order and reuses it across epochs, and random reshuffling (RR) draws a new random permutation at each epoch. In particular, recent work shows that RR often enjoys improved optimization constants compared to IG and SO \citep{mishchenko2020random}, and unified shuffling analyses establish convergence guarantees for arbitrary within-epoch permutations \citep{nguyen2021unified}.

Despite these advances, existing theory largely treats permutations as unstructured objects.
As a result, it provides limited guidance on how to design new data-ordering schemes that further improve optimization constants or enhance stability beyond RR. In this work, we adopt a different perspective.
Rather than proposing a shuffling rule solely from analytical intuition, we construct a pipeline based on large language model (LLM)-guided program evolution to explore the space of valid without replacement shuffling rules.
This process yields a concrete adaptive reshuffling algorithm, which we formalize as \emph{Adaptive Block Reshuffling with Periodic Transforms} (APR) in Algorithm~\ref{alg:apr}.
APR consistently outperforms standard shuffling schemes across a range of benchmarks.
The LLM is used only offline as a discovery tool and does not participate in training or inference.

Motivated by the structure of the discovered algorithm, we abstract away from its specific parameters and isolate two recurring components: \emph{block reshuffling} and \emph{paired reversal}.
We analyze these components separately.
Within the unified shuffling framework, we show that block reshuffling strictly reduces prefix-gradient variance constants under mild conditions, yielding provable improvements over the existing shuffling schemes.
Separately, we show that paired reversal symmetrizes the epoch map and cancels the leading order-dependent second-order term, reducing order sensitivity from quadratic to cubic in the learning rate.

Our experiments, conducted using the LLM-discovered APR algorithm, validate these theoretical predictions and demonstrate consistent improvements over IG, SO, and RR on both convex and non-convex problems.
Together, these results suggest that structured data-ordering schemes beyond random reshuffling can offer meaningful optimization benefits, and that automated discovery can help reveal such structure.

\paragraph{Contributions.}
Our main contributions are:
\begin{itemize}
    \item We introduce an LLM-guided program evolution pipeline to discover effective shuffling rules for without-replacement SGD, resulting in the APR algorithm.
    \item We abstract the discovered algorithm into two core components, \textit{block reshuffling} and \textit{paired reversal}, and study them independently. To the best of our knowledge, these components have not been previously formalized as independent structures within a unified theory for without replacement SGD. 

    \item We show that block reshuffling reduces prefix-gradient variance constants within the unified shuffling framework, yielding provable improvements over existing schemes.
    \item We prove that paired reversal cancels the leading order-dependent second-order term in the epoch map, reducing order sensitivity from $\Theta(\gamma^2)$ to $O(\gamma^3)$ (in learning rate $\gamma$).
    \item We demonstrate empirically that APR improves performance over standard shuffling schemes across convex and nonconvex benchmarks.
\end{itemize}

\subsection{Related Work}

\paragraph{Without-replacement SGD and shuffling schemes.}
Without-replacement SGD is widely used in finite-sum optimization.
Classical shuffling strategies include IG, SO, and RR.
Their convergence properties have been extensively analyzed in both convex and nonconvex settings
\citep{Gurbuzbalaban2019,haochen2019random,safran2020good,nagaraj2019sgd,rajput2020closing,nguyen2021unified,mishchenko2020random,ahn2020sgd,pmlr-v139-tran21b,Tran2022_ShufflingNesterov,nguyen2023convergence},
with RR often exhibiting improved optimization constants relative to IG and SO.
Unified shuffling analyses further establish convergence guarantees for arbitrary within-epoch permutations.
However, these results largely treat the permutation as an unstructured object and do not provide constructive principles for designing improved data-ordering schemes beyond RR.

\paragraph{Variance and order sensitivity reductions.}
Variance reduction plays a central role in improving the efficiency and stability of stochastic optimization.
Classical variance reduction methods such as SAG \citep{SAG}, SAGA \citep{SAGA}, SVRG \citep{SVRG}, and SARAH \citep{Nguyen2017b} achieve this by storing gradient information or computing periodic full gradients.
Shuffling based methods do not reduce variance in the same sense, but the choice of data ordering affects the variance constants appearing in without replacement analyses.
Our results show that block reshuffling reduces prefix-gradient variance constants relative to existing shuffling schemes without requiring additional gradient evaluations or storage.

Paired reversal addresses a complementary issue: sensitivity to the order of data within each epoch.
By symmetrizing the epoch update through averaging a permutation and its reverse, paired reversal cancels the leading order-dependent second-order term in the epoch-map expansion.
This reduces permutation induced variability and leads to more stable optimization dynamics.

\paragraph{LLM-guided algorithm discovery.}
Recent work demonstrates that large language models can assist in discovering algorithms through program synthesis and evolution (see e.g. \citep{romera2024mathematical,novikov2025alphaevolve}).
In contrast to approaches that deploy LLMs during training or inference, we use an LLM purely as an offline discovery tool.
The resulting algorithm is explicit and deterministic, and our theoretical analysis applies independently of the LLM itself.

\section{Problem Setup and Background}

We consider the finite-sum optimization problem
\begin{equation}\label{eq_finite_sum_opt_problem}
\min_{w \in \mathbb{R}^d} F(w) := \frac{1}{n} \sum_{i=1}^n f_i(w).
\end{equation}
We study epoch-based without replacement stochastic gradient methods.
At the beginning of each epoch $e$, a permutation $\pi_e$ of $\{1,\dots,n\}$ is generated, and the iterates are updated sequentially as $w_{t} = w_{t-1} - \gamma \nabla f_{\pi_e(t)}(w_{t-1})$, $t=1,\dots,n$. Let $T_{\pi}$ denote the epoch map corresponding to permutation $\pi$, i.e., $T_{\pi}(w)$ is the iterate after one full epoch starting from $w$.
Recent analyses characterize convergence in terms of permutation sensitive quantities such as prefix-gradient sums and associated variance constants. These quantities will play a central role in our analysis of block reshuffling \citep{mishchenko2020random}. The unified analyses show that convergence guarantees hold for any permutation selected at each epoch \citep{nguyen2021unified}. Consequently, the relevant algorithmic question is no longer whether a permutation converges, but rather how the structure of the permutation affects optimization efficiency.

\subsection{LLM-Guided Discovery of Shuffling Rules}

We use OpenEvolve \citep{openevolve}, an LLM-guided program evolution framework, to discover effective reshuffling rules for without-replacement SGD. The search space was restricted to deterministic Python functions producing permutations of $\{1,\dots,n\}$ (see Appendix~\ref{sec_app_LLM_guided} for more detail). 
Guided by empirical training feedback such as loss statistics, the LLM proposes concrete shuffling programs with fixed parameter choices and was prohibited from modifying the model, optimizer, or learning rate. One such discovered instance consistently outperformed standard shuffling schemes in preliminary experiments. The detailed settings are in Appendix~\ref{sec_app_experiment_settings}.

Motivated by the LLM-discovered instance, we generalize its behavior into Algorithm~\ref{alg:apr}, which generates a single permutation $\pi_e \in \mathcal{S}_n$ at each epoch $e$, where $\mathcal{S}_n$ denotes the symmetric group on $n$ elements.
The permutation is selected based on the epoch-level training loss $\ell_e$ through the loss ratio $\rho_e = \ell_e / (\ell_{e-1} + \varepsilon)$, which determines the reshuffling regime.
Block sizes are specified by fractions $\alpha_{\mathrm{strong}}$ and $\alpha_{\mathrm{mild}}$, yielding
$b_{\mathrm{strong}} = \max(1, \lfloor \alpha_{\mathrm{strong}} n \rfloor)$ and
$b_{\mathrm{mild}} = \max(1, \lfloor \alpha_{\mathrm{mild}} n \rfloor)$.
The algorithm optionally applies simple sequence transforms, including reversal $\mathrm{Rev}(x_1,\dots,x_n) = (x_n,\dots,x_1)$ and even-odd interleaving $\mathrm{EO}(x) = (x_1,x_3,\dots;\,x_2,x_4,\dots)$.
All randomness is governed by a deterministic epoch dependent seed $u_e$.
Importantly, Algorithm~1 always outputs a valid permutation and therefore remains within the standard without-replacement SGD framework, making it compatible with unified shuffling convergence analyses.
A detailed description of this procedure is provided in Appendix~\ref{sec_app_experiment_settings}.

\begin{algorithm}[t]
\caption{Adaptive Block Reshuffling with Periodic Transforms (APR)}
\label{alg:apr}
\begin{algorithmic}[1]
\REQUIRE Dataset size $n$, epoch index $e$, current loss $\ell_e$
\STATE \textbf{Parameters:} $0<\tau_{\mathrm{strong}}<\tau_{\mathrm{mild}}$, 
$\alpha_{\mathrm{strong}},\alpha_{\mathrm{mild}}\in(0,1]$,
$(p_{\mathrm{rev}},r_{\mathrm{rev}})$, $(p_{\mathrm{eo}},r_{\mathrm{eo}})$, $\varepsilon>0$
\STATE $u_e \leftarrow \mathrm{Seed}(e)$; \ $I \leftarrow (1,2,\dots,n)$
\STATE $b_{\mathrm{strong}} \leftarrow \max\!\bigl(1,\lfloor \alpha_{\mathrm{strong}} n \rfloor\bigr)$; \\
$b_{\mathrm{mild}} \leftarrow \max\!\bigl(1,\lfloor \alpha_{\mathrm{mild}} n \rfloor\bigr)$

\IF{$e=0$}
    \STATE Sample $\pi_0 \sim \mathrm{Unif}(\mathcal{S}_n)$ using seed $u_0$
    \STATE \textbf{output} $\pi_0$ and stop
\ENDIF

\STATE $\rho_e \leftarrow \ell_e/(\ell_{e-1}+\varepsilon)$

\IF{$\rho_e < \tau_{\mathrm{strong}}$}
    \STATE Partition $I$ into consecutive blocks of size $b_{\mathrm{strong}}$
    \STATE Randomly permute blocks using seed $u_e$; concatenate to form $\pi_e$
    \IF{$e \equiv r_{\mathrm{rev}} \;(\mathrm{mod}\; p_{\mathrm{rev}})$}
        \STATE $\pi_e \leftarrow \mathrm{Rev}(\pi_e)$
    \ENDIF
\ELSE
    \IF{$\rho_e < \tau_{\mathrm{mild}}$}
        \STATE Partition $I$ into consecutive blocks of size $b_{\mathrm{mild}}$
        \STATE Randomly permute blocks using seed $u_e$; concatenate to form $\pi_e$
    \ELSE
        \STATE Sample $\pi_e \sim \mathrm{Unif}(\mathcal{S}_n)$ using seed $u_e$
        \IF{$e \equiv r_{\mathrm{eo}} \;(\mathrm{mod}\; p_{\mathrm{eo}})$}
            \STATE $\pi_e \leftarrow \mathrm{EO}(\pi_e)$
        \ENDIF
    \ENDIF
\ENDIF

\STATE \textbf{output} $\pi_e$
\end{algorithmic}
\end{algorithm}

Inspection of the LLM-generated code and its generalized form reveals two recurring structural patterns: reshuffling data in contiguous blocks and reversing the data order to mitigate systematic ordering effects. Importantly, Algorithm~\ref{alg:apr} generates a single valid permutation at each epoch, and therefore remains fully within the standard without replacement SGD framework. As a result, its convergence behavior can be analyzed using existing unified shuffling theory, which applies to arbitrary (possibly data dependent) permutations. Rather than analyzing the full adaptive logic of Algorithm~\ref{alg:apr}, we isolate these two components for theoretical study. Block reshuffling partitions the dataset into consecutive blocks and randomly permutes the blocks while preserving the order within each block. This operation lies within the unified shuffling framework and is shown to strictly reduce prefix-gradient variance constants relative to standard schemes, including random reshuffling, under mild conditions. While the discovered instance applies one sided reversal heuristically, we analyze a paired reversal operator that averages the effects of a permutation and its reverse. This symmetrization cancels the leading order-dependent second-order term in the epoch map expansion, reducing order sensitivity from quadratic to cubic in the learning rate. 

\subsection{Block Reshuffling: Variance Reduction}

Block reshuffling partitions the index set $\{1,\dots,n\}$ into consecutive blocks of size $b$, randomly permutes the blocks, and processes the blocks sequentially while preserving the order within each block. Let $I = (1,2,\dots,n)$ and partition $I$ into blocks $\mathcal{B}(I;b) = (B_1,\dots,B_K)$ and $B_k = \bigl((k-1)b+1,\dots,\min(kb,n)\bigr)$,
where $K = \lceil n/b \rceil$.
Given a random permutation $\sigma$ of $\{1,\dots,K\}$, the block-reshuffled order is defined as $\mathrm{BlockShuffle}(I;b,\sigma) = (B_{\sigma(1)},\dots,B_{\sigma(K)})$. 

Block reshuffling interpolates between classical schemes: (1) $b=1$ corresponds to random reshuffling (RR); and (2) $b=n$ corresponds to incremental gradient (IG) methods.

From a statistical perspective, block reshuffling can be viewed as sampling block-averaged gradients without replacement rather than individual gradients. Let $G_k(w)$ denote the average gradient within block $k$. A standard variance decomposition yields $\sigma^2_{\mathrm{ind}}(w) =
\sigma^2_{\mathrm{within}}(w) + \sigma^2_{\mathrm{blk}}(w)$, where $\sigma^2_{\mathrm{blk}}(w)$ is the variance of block-averaged gradients. Whenever gradients within blocks are non-identical, $\sigma^2_{\mathrm{blk}}(w) < \sigma^2_{\mathrm{ind}}(w)$, implying that block reshuffling strictly reduces the variance constants that govern without-replacement SGD. Importantly, this improvement is achieved without altering the convergence rate guarantees provided by unified shuffling theory.

Within the unified shuffling framework, convergence bounds depend on prefix-gradient variance constants that measure the deviation between partial sums of gradients along a permutation and the full gradient.
We show that block reshuffling strictly reduces these constants under mild coherence assumptions on the gradients within each block.

\subsection{Paired Reversal: Symmetrization of Order Effects}

Sequential gradient updates within an epoch introduce order-dependent drift relative to full gradient descent. Under standard smoothness assumptions, this drift appears at second order in the stepsize and depends explicitly on the permutation. The reversal operator $\mathrm{Rev}(\cdot)$ maps a permutation 
$\pi = (\pi(1),\dots,\pi(n))
\quad$ $\mapsto$ $\mathrm{Rev}(\pi) = (\pi(n),\dots,\pi(1))$. 

Instead of using a single permutation $\pi$, we consider the symmetrized epoch map obtained by averaging the updates induced by $\pi$ and $\mathrm{Rev}(\pi)$, that is, $\bar{T}_\pi = \tfrac{1}{2}(T_\pi + T_{\mathrm{rev}(\pi)})$. This construction preserves the without replacement structure while introducing a fundamental symmetry in the update sequence.

Under standard smoothness assumptions, the epoch map admits a second-order expansion in the stepsize. Averaging a permutation with its reversal cancels the entire order-dependent second-order term, reducing permutation sensitivity from $\Theta(\gamma^2)$ to $O(\gamma^3)$. As a consequence, paired reversal provably stabilizes without replacement SGD beyond what is achievable by random reshuffling alone. Importantly, this symmetrization does not introduce additional randomness. It purely exploits structural symmetry in the permutation.


\section{Advantages of Structured Shuffling}
\label{sec:advantages-main}

This section presents two complementary, provable mechanisms that improve the practical behavior of without-replacement SGD: (i) variance-constant reduction via block reshuffling, and (ii) order-sensitivity reduction via paired reversal. All proofs are deferred to Appendix~\ref{sec:appendix-proofs}.

\subsection{Variance Reduction by Block Reshuffling}
\label{sec:block-main}

Fix an iterate $w\in\mathbb{R}^d$. Define component gradients and the full gradient
\begin{equation}
g_i(w):=\nabla f_i(w), 
\qquad 
\nabla F(w):=\frac{1}{n}\sum_{i=1}^n g_i(w).
\end{equation}
Define the population variance of individual gradients
\begin{equation}
\sigma_{\mathrm{ind}}^2(w)
:=\frac{1}{n}\sum_{i=1}^n \|g_i(w)-\nabla F(w)\|^2.
\label{eq:def-sigma-ind-main}
\end{equation}

Partition $\{1,\dots,n\}$ into $K$ disjoint blocks $B_1,\dots,B_K$ of equal size $b=n/K$ (assume $K\mid n$ for simplicity).
Define the block-averaged gradients
\begin{equation}
G_r(w):=\frac{1}{b}\sum_{i\in B_r} g_i(w),\qquad r=1,\dots,K,
\label{eq:def-block-grad-main}
\end{equation}
the block-level variance
\begin{equation}
\sigma_{\mathrm{blk}}^2(w)
:=\frac{1}{K}\sum_{r=1}^K \|G_r(w)-\nabla F(w)\|^2,
\label{eq:def-sigma-blk-main}
\end{equation}
and the within-block variance
\begin{equation}
\sigma_{\mathrm{within}}^2(w)
:=\frac{1}{K}\sum_{r=1}^K\frac{1}{b}\sum_{i\in B_r}\|g_i(w)-G_r(w)\|^2.
\label{eq:def-sigma-within-main}
\end{equation}

\begin{lemma}
\label{lem:exact-decomp-main}
With equal block sizes $b=n/K$,
\begin{equation}
\sigma_{\mathrm{ind}}^2(w)=\sigma_{\mathrm{within}}^2(w)+\sigma_{\mathrm{blk}}^2(w).
\label{eq:exact-decomp-main}
\end{equation}
Consequently, $\sigma_{\mathrm{blk}}^2(w)\le \sigma_{\mathrm{ind}}^2(w)$, with strict inequality whenever $\sigma_{\mathrm{within}}^2(w)>0$.
\end{lemma}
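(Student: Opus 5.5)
The identity \eqref{eq:exact-decomp-main} is the law of total variance for a uniformly random index, and we prove it by a direct Pythagorean computation. Fix $w$ and drop it from the notation. For each $i\in B_r$ we write
\begin{equation*}
g_i-\nabla F=(g_i-G_r)+(G_r-\nabla F),
\end{equation*}
and expand the squared norm:
\begin{equation*}
\|g_i-\nabla F\|^2=\|g_i-G_r\|^2+\|G_r-\nabla F\|^2+2\langle g_i-G_r,\,G_r-\nabla F\rangle .
\end{equation*}
We then sum over $i\in B_r$ and over $r=1,\dots,K$, and divide by $n=Kb$. Since the blocks partition $\{1,\dots,n\}$, the left-hand side becomes $\sigma_{\mathrm{ind}}^2$.

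The key step is to show that the cross term vanishes. For a fixed block $r$, the vector $G_r-\nabla F$ does not depend on $i$, so
\begin{equation*}
\sum_{i\in B_r}\langle g_i-G_r,\,G_r-\nabla F\rangle=\Bigl\langle \sum_{i\in B_r}(g_i-G_r),\,G_r-\nabla F\Bigr\rangle .
\end{equation*}
Because $|B_r|=b$, definition \eqref{eq:def-block-grad-main} gives $\sum_{i\in B_r}(g_i-G_r)=bG_r-bG_r=0$. The cross term is therefore zero block by block.

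It remains to match the normalizations, which is the only place the equal-size assumption is used.
\begin{itemize}
\item The within-block term is $\frac{1}{Kb}\sum_r\sum_{i\in B_r}\|g_i-G_r\|^2$, which is exactly $\sigma_{\mathrm{within}}^2$ by \eqref{eq:def-sigma-within-main}.
\item The between-block term is $\frac{1}{Kb}\sum_r b\,\|G_r-\nabla F\|^2=\frac1K\sum_r\|G_r-\nabla F\|^2=\sigma_{\mathrm{blk}}^2$, by \eqref{eq:def-sigma-blk-main}.
\end{itemize}
We should also note that $\nabla F=\frac1K\sum_r G_r$ when all blocks have size $b$, although the computation above does not actually need this fact. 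With unequal block sizes, the uniform average over blocks in \eqref{eq:def-sigma-blk-main} would have to be replaced by a size-weighted average, which is why the lemma assumes $K\mid n$.

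The consequences follow immediately. Since $\sigma_{\mathrm{within}}^2\ge 0$, we get $\sigma_{\mathrm{blk}}^2\le\sigma_{\mathrm{ind}}^2$. The difference $\sigma_{\mathrm{ind}}^2-\sigma_{\mathrm{blk}}^2$ equals $\sigma_{\mathrm{within}}^2$ exactly, so the inequality is strict whenever $\sigma_{\mathrm{within}}^2>0$. No step here is a genuine obstacle; the only care needed is in the bookkeeping of the $1/K$ and $1/b$ factors described above.
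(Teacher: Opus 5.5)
Your proof is correct and follows essentially the same route as the paper. You split $g_i-\nabla F=(g_i-G_r)+(G_r-\nabla F)$, kill the cross term block by block via $\sum_{i\in B_r}(g_i-G_r)=0$, normalize using $n=Kb$, and get the inequality from $\sigma_{\mathrm{within}}^2(w)\ge 0$; your side remark that the identity never uses $\nabla F=\frac1K\sum_r G_r$ is also accurate.
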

\noindent\textbf{Proof.} Deferred to Appendix~\ref{app:proof-exact-decomp}. \qed

Unified analyses for without-replacement SGD often control the deviation of prefix averages
\begin{equation}
\Delta_{\pi}^{(k)}(w)
:=\frac{1}{k}\sum_{t=1}^k g_{\pi(t)}(w)-\nabla F(w),
\label{eq:def-prefix-dev-main}
\end{equation}
where $\pi$ denotes the within-epoch permutation. The next lemma gives the exact second moment of the prefix mean under uniform sampling without replacement, a key ingredient in RR-type bounds.

\begin{lemma}[\citep{mishchenko2020random}]
\label{lem:wor-main}
Let $X_1,\dots,X_m\in\mathbb{R}^d$ be fixed with $\bar X=\frac{1}{m}\sum_{i=1}^m X_i$ and
$\sigma^2=\frac{1}{m}\sum_{i=1}^m\|X_i-\bar X\|^2$.
Let $\pi\sim\mathrm{Unif}(\mathcal{S}_m)$ and define $\bar X_\pi^{(k)}=\frac{1}{k}\sum_{t=1}^k X_{\pi(t)}$.
Then
\begin{equation}
\mathbb{E}\big[\bar X_\pi^{(k)}\big]=\bar X,
\qquad
\mathbb{E}\big\|\bar X_\pi^{(k)}-\bar X\big\|^2=\frac{m-k}{k(m-1)}\,\sigma^2.
\label{eq:wor-main}
\end{equation}
\end{lemma}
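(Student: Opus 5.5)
We prove the standard sampling-without-replacement identity by centering the vectors and computing second moments through pairwise correlations. Set $Y_i := X_i-\bar X$, so that $\sum_{i=1}^m Y_i=0$ and $\frac1m\sum_i\|Y_i\|^2=\sigma^2$. Then $\bar X_\pi^{(k)}-\bar X=\frac1k\sum_{t=1}^k Y_{\pi(t)}$.

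\emph{Unbiasedness.} Under $\pi\sim\mathrm{Unif}(\mathcal{S}_m)$, each $\pi(t)$ is marginally uniform on $\{1,\dots,m\}$. Hence $\mathbb{E}[Y_{\pi(t)}]=\frac1m\sum_i Y_i=0$, and linearity gives $\mathbb{E}[\bar X_\pi^{(k)}]=\bar X$.

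\emph{Second moment.} Expand the square:
\begin{equation*}
\mathbb{E}\Big\|\frac1k\sum_{t=1}^k Y_{\pi(t)}\Big\|^2=\frac1{k^2}\Big(\sum_{t=1}^k\mathbb{E}\|Y_{\pi(t)}\|^2+\sum_{s\neq t}\mathbb{E}\langle Y_{\pi(s)},Y_{\pi(t)}\rangle\Big).
\end{equation*}
Each diagonal term equals $\sigma^2$ by the uniform marginal. For $s\neq t$, the pair $(\pi(s),\pi(t))$ is uniform over ordered pairs of distinct indices, so
\begin{equation*}
\mathbb{E}\langle Y_{\pi(s)},Y_{\pi(t)}\rangle=\frac{1}{m(m-1)}\sum_{i\neq j}\langle Y_i,Y_j\rangle .
\end{equation*}
Because $\sum_i Y_i=0$, we have $0=\|\sum_i Y_i\|^2=\sum_i\|Y_i\|^2+\sum_{i\neq j}\langle Y_i,Y_j\rangle$. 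Therefore $\sum_{i\neq j}\langle Y_i,Y_j\rangle=-m\sigma^2$, and each cross term equals $-\sigma^2/(m-1)$.

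\emph{Assembly.} There are $k$ diagonal terms and $k(k-1)$ cross terms, so
\begin{equation*}
\frac{1}{k^2}\Big(k\sigma^2-\frac{k(k-1)}{m-1}\sigma^2\Big)=\frac{\sigma^2}{k}\cdot\frac{m-1-(k-1)}{m-1}=\frac{m-k}{k(m-1)}\sigma^2,
\end{equation*}
which is the claimed identity. This assumes $m\ge2$; the case $m=1$ is trivial since then $k=1$ and both sides vanish, with the right-hand side read as $0$.

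The only step that needs care is computing the pair distribution correctly and using $\sum_i Y_i=0$ to evaluate the cross-correlation sum. Everything else is linearity of expectation.
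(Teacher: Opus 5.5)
Your proof is correct and follows essentially the same route as the source the paper cites for this lemma (the paper gives no proof of its own). That argument also centers the vectors, uses the uniform single-index and ordered-pair marginals, and evaluates each cross-covariance as $-\sigma^2/(m-1)$ via $\sum_i (X_i-\bar X)=0$.
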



\begin{proposition}[RR prefix variance (sample-level)]
\label{prop:rr-prefix-main}
Let $\pi\sim\mathrm{Unif}(\mathcal{S}_n)$ and $\widehat g_m(w)=\frac{1}{m}\sum_{t=1}^m g_{\pi(t)}(w)$.
Then
\begin{equation}
\mathbb{E}\big\|\widehat g_m(w)-\nabla F(w)\big\|^2
=
\frac{n-m}{m(n-1)}\,\sigma_{\mathrm{ind}}^2(w).
\label{eq:rr-prefix-main}
\end{equation}
\end{proposition}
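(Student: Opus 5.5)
This is a direct specialization of Lemma~\ref{lem:wor-main}. We apply it with $m=n$, $X_i=g_i(w)$ for $i=1,\dots,n$, and $k=m$ (the prefix length in the proposition). With $w$ held fixed, the vectors $X_i$ are deterministic, so the only randomness is $\pi\sim\mathrm{Unif}(\mathcal{S}_n)$, exactly as the lemma requires.

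The identifications are immediate. First, $\bar X=\frac1n\sum_i g_i(w)=\nabla F(w)$ by definition of the full gradient. Second, $\sigma^2=\frac1n\sum_i\|g_i(w)-\nabla F(w)\|^2=\sigma^2_{\mathrm{ind}}(w)$ by \eqref{eq:def-sigma-ind-main}. Third, $\bar X^{(m)}_\pi=\widehat g_m(w)$. Substituting these into \eqref{eq:wor-main} gives \eqref{eq:rr-prefix-main}, and as a by-product the unbiasedness $\mathbb{E}[\widehat g_m(w)]=\nabla F(w)$.

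If a self-contained argument is preferred over citing the lemma, the standard computation runs as follows. Center by writing $Y_i=g_i(w)-\nabla F(w)$, so that $\sum_i Y_i=0$. Then expand
\[
\mathbb{E}\Big\|\sum_{t=1}^m Y_{\pi(t)}\Big\|^2 = m\,\mathbb{E}\|Y_{\pi(1)}\|^2 + m(m-1)\,\mathbb{E}\langle Y_{\pi(1)},Y_{\pi(2)}\rangle .
\]
The first term equals $m\sigma^2_{\mathrm{ind}}$. For the cross term, $\mathbb{E}\langle Y_{\pi(1)},Y_{\pi(2)}\rangle=\frac{1}{n(n-1)}\sum_{i\ne j}\langle Y_i,Y_j\rangle$. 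Since $\sum_{i\ne j}\langle Y_i,Y_j\rangle=\|\sum_i Y_i\|^2-\sum_i\|Y_i\|^2=-n\sigma^2_{\mathrm{ind}}$, this expectation is $-\sigma^2_{\mathrm{ind}}/(n-1)$. Combining, the sum equals $m\sigma^2_{\mathrm{ind}}\bigl(1-\frac{m-1}{n-1}\bigr)=\frac{m(n-m)}{n-1}\sigma^2_{\mathrm{ind}}$, and dividing by $m^2$ gives the claim.

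There is no real obstacle here; the only care needed is the pairwise-sampling identity for the cross term, which relies on exchangeability of the uniform permutation. The edge case $n=1$ (which forces $m=1$) makes the formula degenerate, but both sides vanish under the convention $0/0=0$.
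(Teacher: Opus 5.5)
Your proposal is correct and matches the paper's proof: both specialize Lemma~\ref{lem:wor-main} with $m=n$ and $X_i=g_i(w)$, and identify $\bar X=\nabla F(w)$ and $\sigma^2=\sigma_{\mathrm{ind}}^2(w)$. You also correctly handle the clash between the lemma's $m$ and the proposition's prefix length $m$ by taking $k=m$, and your optional exchangeability computation is correct, though the paper does not include it.
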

\noindent\textbf{Proof.} Deferred to Appendix~\ref{app:proof-rr-prefix}. \qed

\begin{proposition}[Block reshuffling prefix variance (block-level)]
\label{prop:block-prefix-main}
Let $\sigma\sim\mathrm{Unif}(\mathcal{S}_K)$ and $\widehat G_k(w)=\frac{1}{k}\sum_{j=1}^k G_{\sigma(j)}(w)$.
Then
\begin{equation}
\mathbb{E}\big\|\widehat G_k(w)-\nabla F(w)\big\|^2
=
\frac{K-k}{k(K-1)}\,\sigma_{\mathrm{blk}}^2(w).
\label{eq:block-prefix-main}
\end{equation}
\end{proposition}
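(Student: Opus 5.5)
The plan is to reduce Proposition~\ref{prop:block-prefix-main} to Lemma~\ref{lem:wor-main} applied at the block level. I will take $m=K$ and $X_r = G_r(w)$ for $r=1,\dots,K$, with the uniformly random permutation $\sigma\sim\mathrm{Unif}(\mathcal{S}_K)$ in place of $\pi$. Then $\bar X_\sigma^{(k)} = \frac{1}{k}\sum_{j=1}^k G_{\sigma(j)}(w) = \widehat G_k(w)$, so the left-hand side of \eqref{eq:block-prefix-main} is exactly the quantity that Lemma~\ref{lem:wor-main} controls.

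Two identifications are needed before invoking the lemma. First, the mean of the block gradients is the full gradient. Because the blocks have equal size $b=n/K$ and partition $\{1,\dots,n\}$,
\[
\bar X=\frac{1}{K}\sum_{r=1}^K G_r(w)=\frac{1}{Kb}\sum_{r=1}^K\sum_{i\in B_r} g_i(w)=\frac{1}{n}\sum_{i=1}^n g_i(w)=\nabla F(w).
\]
Second, the population variance $\sigma^2$ from Lemma~\ref{lem:wor-main} becomes $\frac{1}{K}\sum_{r}\|G_r(w)-\nabla F(w)\|^2$. By \eqref{eq:def-sigma-blk-main}, this is precisely $\sigma_{\mathrm{blk}}^2(w)$.

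Substituting these into \eqref{eq:wor-main} gives $\mathbb{E}\|\widehat G_k(w)-\nabla F(w)\|^2=\frac{K-k}{k(K-1)}\sigma_{\mathrm{blk}}^2(w)$, which is \eqref{eq:block-prefix-main}. The only delicate points are the two conditions the reduction relies on. The equal-block-size assumption $K\mid n$ is what makes the block average coincide with $\nabla F(w)$; unequal blocks would require weights. The formula also needs $K\ge 2$ so that $K-1\neq 0$. For $K=1$ and $k=1$ the deviation is identically zero, so that case can be read as the degenerate case or excluded.
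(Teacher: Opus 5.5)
Your proposal is correct and is the paper's proof: apply Lemma~\ref{lem:wor-main} with $m=K$ and $X_r=G_r(w)$, identify $\bar X=\nabla F(w)$ and $\sigma^2=\sigma_{\mathrm{blk}}^2(w)$, and read off \eqref{eq:block-prefix-main}. You also spell out two points the paper leaves implicit: the equal block sizes ($K\mid n$) are what make the block mean equal $\nabla F(w)$, and $K\ge 2$ is needed for the formula to be defined.
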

\noindent\textbf{Proof.} Deferred to Appendix~\ref{app:proof-block-prefix}. \qed

Equations~\eqref{eq:rr-prefix-main}-\eqref{eq:block-prefix-main} show that the same without replacement prefix-variance formula holds at the block level, but with variance constant $\sigma_{\mathrm{blk}}^2(w)$ in place of $\sigma_{\mathrm{ind}}^2(w)$. By Lemma~\ref{lem:exact-decomp-main}, this constant is strictly smaller whenever within-block gradients are not identical. Thus, in regimes where gradients are locally coherent, block reshuffling reduces the magnitude of prefix-gradient noise and improves the constants appearing in convergence bounds driven by $\mathbb{E}\|\Delta_\pi^{(k)}(w)\|^2$.

\subsection{Order-Sensitivity Reduction by Paired Reversal}
\label{sec:rev-main}

We now formalize the benefit of paired reversal. Consider one epoch of SGD with stepsize $\gamma>0$ under a permutation $\pi\in\mathcal{S}_n$, starting from $w\in\mathbb{R}^d$:
\begin{align*}
    & w_0=w, \\
    & w_t=w_{t-1}-\gamma\,\nabla f_{\pi(t)}(w_{t-1}),\quad t=1,\dots,n,  \\
    & T_\pi(w):=w_n. \tagthis \label{eq:def-epoch-map-main}
\end{align*}

Let $\mathrm{Rev}(\pi)$ denote the reversed permutation $\mathrm{Rev}(\pi)(t)=\pi(n+1-t)$.

\paragraph{Assumptions.}
Assume each $f_i$, $i = 1,\dots,n$, is twice continuously differentiable and satisfies:
\begin{align}
\|\nabla f_i(x)-\nabla f_i(y)\| &\le L\|x-y\|, \label{eq:A1-main}\\
\|\nabla f_i(x)\| &\le G, \label{eq:A2-main}\\
\|\nabla^2 f_i(x)-\nabla^2 f_i(y)\| &\le \rho\|x-y\|. \label{eq:A3-main}
\end{align}
Define $g_i:=\nabla f_i(w)$ and $H_i:=\nabla^2 f_i(w)$.

\begin{lemma}
\label{lem:epoch-expansion-main}
Assume \eqref{eq:A1-main}-\eqref{eq:A3-main} hold. For any $\pi\in\mathcal{S}_n$,
\begin{equation}
T_\pi(w)
=
w-\gamma\sum_{i=1}^n g_i
+\gamma^2\sum_{1\le s<t\le n} H_{\pi(t)}\,g_{\pi(s)}
+R_\pi(w),
\label{eq:epoch-expansion-main}
\end{equation}
where the remainder satisfies
\begin{equation}
\|R_\pi(w)\|\le C_{\mathrm{rem}}\,\gamma^3 n^3,
\qquad
C_{\mathrm{rem}}:=\frac{\rho G^2}{2}+\frac{L^2G}{2}.
\label{eq:rem-bound-main}
\end{equation}
\end{lemma}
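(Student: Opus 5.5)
The plan is to expand the iterates one step at a time, keep every term of order at most $\gamma^2$, and collect everything else into the remainder. Set $d_t := w_t - w = -\gamma\sum_{s=1}^t \nabla f_{\pi(s)}(w_{s-1})$. By \eqref{eq:A2-main}, $\|d_t\|\le \gamma G t$. Taylor's theorem with integral remainder and \eqref{eq:A3-main} give
\begin{equation*}
\nabla f_{\pi(t)}(w_{t-1}) = g_{\pi(t)} + H_{\pi(t)} d_{t-1} + e_t,
\qquad
\|e_t\|\le \tfrac{\rho}{2}\|d_{t-1}\|^2 \le \tfrac{\rho}{2}\gamma^2G^2(t-1)^2 .
\end{equation*}
Summing over $t$ gives
\begin{equation*}
T_\pi(w) = w - \gamma\sum_t g_{\pi(t)} - \gamma\sum_t H_{\pi(t)} d_{t-1} - \gamma\sum_t e_t .
\end{equation*}
Because $\pi$ is a permutation, the first sum equals $\sum_i g_i$.

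Next I would split $d_{t-1}$ into its first-order part and a correction. Write $d_{t-1} = -\gamma\sum_{s<t} g_{\pi(s)} + \delta_{t-1}$ with $\delta_{t-1} = -\gamma\sum_{s<t}\bigl(\nabla f_{\pi(s)}(w_{s-1}) - g_{\pi(s)}\bigr)$. By \eqref{eq:A1-main}, $\|\nabla f_{\pi(s)}(w_{s-1}) - g_{\pi(s)}\|\le L\|d_{s-1}\|\le L\gamma G(s-1)$, so $\|\delta_{t-1}\|\le \gamma^2 LG\sum_{s<t}(s-1)\le \gamma^2 LG (t-1)^2/2$. Substituting, the term $-\gamma\sum_t H_{\pi(t)}\bigl(-\gamma\sum_{s<t}g_{\pi(s)}\bigr)$ is exactly the $\gamma^2\sum_{s<t}H_{\pi(t)}g_{\pi(s)}$ term in \eqref{eq:epoch-expansion-main}. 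The remainder is therefore
\begin{equation*}
R_\pi(w) = -\gamma\sum_t \bigl(H_{\pi(t)}\delta_{t-1} + e_t\bigr).
\end{equation*}

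To bound the remainder, use $\|H_i\|\le L$, which follows from \eqref{eq:A1-main} for $C^2$ functions. This gives
\begin{equation*}
\|R_\pi(w)\|\le \gamma\sum_{t=1}^n\Bigl(\tfrac{L^2G}{2}+\tfrac{\rho G^2}{2}\Bigr)\gamma^2(t-1)^2 .
\end{equation*}
Since $\sum_{t=1}^n(t-1)^2\le n^3$ (in fact $\le n^3/3$), we get $\|R_\pi(w)\|\le C_{\mathrm{rem}}\gamma^3n^3$, which is \eqref{eq:rem-bound-main}.

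The main obstacle is the bookkeeping in the second paragraph. The correction $\delta_{t-1}$ comes from replacing gradients at the intermediate iterates by gradients at $w$, and its bound needs the Lipschitz constant $L$ from \eqref{eq:A1-main}, whereas the bound on $e_t$ needs the Hessian-Lipschitz constant $\rho$ from \eqref{eq:A3-main}. Keeping these two sources separate is what produces the two summands of $C_{\mathrm{rem}}$. Once that is done, the rest is a crude summation with room to spare.
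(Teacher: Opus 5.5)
Your proposal is correct and follows the paper's proof essentially step for step. It uses the telescoping sum and the integral-form Taylor expansion with Hessian-Lipschitz remainder $\frac{\rho}{2}\|w_{t-1}-w\|^2$. It then splits $w_{t-1}-w=-\gamma\sum_{s<t}g_{\pi(s)}+\delta_{t-1}$ with $\|\delta_{t-1}\|\le \frac{LG}{2}\gamma^2(t-1)^2$, and finishes with $\|H_i\|\le L$ and the crude bound $\sum_{t=1}^n(t-1)^2\le n^3$; these are exactly the paper's Steps 1--4.
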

\noindent\textbf{Proof.} Deferred to Appendix~\ref{app:proof-epoch-expansion}. \qed

\begin{theorem}
\label{thm:paired-rev-main}
Define the paired-reversal averaged epoch map
\begin{equation}
\bar T_\pi(w):=\frac{1}{2}\Big(T_\pi(w)+T_{\mathrm{Rev}(\pi)}(w)\Big).
\label{eq:def-Tbar-main}
\end{equation}
Under \eqref{eq:A1-main}-\eqref{eq:A3-main} assumptions, $\bar T_\pi(w)$ admits the expansion
\begin{equation}
\bar T_\pi(w)
=
w-\gamma\sum_{i=1}^n g_i
+\frac{\gamma^2}{2}\sum_{i\neq j} H_i\,g_j
+\bar R_\pi(w),
\label{eq:paired-rev-expansion-main}
\end{equation}
where the second-order term is independent of $\pi$ and $\|\bar R_\pi(w)\|\le C_{\mathrm{rem}}\gamma^3 n^3$.
Consequently, for any $\pi,\pi'\in\mathcal{S}_n$,
\begin{equation}
\|\bar T_\pi(w)-\bar T_{\pi'}(w)\|
\le 2C_{\mathrm{rem}}\,\gamma^3 n^3,
\label{eq:order-sensitivity-main}
\end{equation}
i.e., order sensitivity is $O(\gamma^3)$ rather than $\Theta(\gamma^2)$ in general.
\end{theorem}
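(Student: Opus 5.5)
The plan is to apply Lemma~\ref{lem:epoch-expansion-main} twice, once to $\pi$ and once to $\mathrm{Rev}(\pi)$, and average the two expansions. The zeroth- and first-order terms $w-\gamma\sum_i g_i$ do not depend on the order, so they pass through the average unchanged. The remainder is taken to be $\bar R_\pi(w):=\tfrac12\big(R_\pi(w)+R_{\mathrm{Rev}(\pi)}(w)\big)$. By the triangle inequality and \eqref{eq:rem-bound-main} applied to both permutations, this gives $\|\bar R_\pi(w)\|\le \tfrac12(C_{\mathrm{rem}}\gamma^3n^3+C_{\mathrm{rem}}\gamma^3n^3)=C_{\mathrm{rem}}\gamma^3n^3$.

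The only real work is identifying the averaged second-order term. For $\pi$, the sum $\sum_{s<t}H_{\pi(t)}g_{\pi(s)}$ runs over ordered pairs of distinct indices $(i,j)$, writing $i=\pi(t)$ and $j=\pi(s)$, in which $j$ is processed before $i$. For $\mathrm{Rev}(\pi)$ we have $\mathrm{Rev}(\pi)(t)=\pi(n+1-t)$. Substituting $s'=n+1-s$ and $t'=n+1-t$, the condition $s<t$ becomes $t'<s'$, so the reversed sum is $\sum_{t'<s'}H_{\pi(t')}g_{\pi(s')}$. This runs over the ordered pairs $(i,j)$ with $i$ processed before $j$ under $\pi$. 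Each ordered pair $(i,j)$ with $i\ne j$ therefore appears in exactly one of the two sums, and together they give $\sum_{i\ne j}H_ig_j$. Multiplying by $\tfrac{\gamma^2}{2}$ produces the stated term, and this term visibly does not depend on $\pi$. This bijection is the step I expect to need the most care: I will have to check the index bookkeeping and confirm that no diagonal terms $H_ig_i$ appear, which holds because $s\neq t$ forces $\pi(s)\neq\pi(t)$.

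For the order-sensitivity bound \eqref{eq:order-sensitivity-main}, I will subtract the expansions for $\pi$ and $\pi'$. The terms of order zero, one and two all cancel exactly, which leaves $\bar R_\pi(w)-\bar R_{\pi'}(w)$. The triangle inequality then bounds this by $2C_{\mathrm{rem}}\gamma^3n^3$.

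To justify the remark that the unsymmetrized sensitivity is $\Theta(\gamma^2)$, I will note that $T_\pi-T_{\mathrm{Rev}(\pi)}$ has second-order part
\[
\gamma^2\Big(\sum_{s<t}H_{\pi(t)}g_{\pi(s)}-\sum_{s<t}H_{\pi(s)}g_{\pi(t)}\Big).
\]
This is generally nonzero, for example when $n=2$ and $H_2g_1\ne H_1g_2$. I would present this only as an illustration, since the claim is stated as holding "in general."
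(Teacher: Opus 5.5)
Your proposal is correct and matches the paper's proof essentially step for step. You average the two expansions from Lemma~\ref{lem:epoch-expansion-main} and bound $\bar R_\pi$ by the triangle inequality, then use the substitution $t'=n+1-t$, $s'=n+1-s$ to get $B_\pi+B_{\mathrm{Rev}(\pi)}=\sum_{i\neq j}H_ig_j$, and finally subtract the expansions for $\pi$ and $\pi'$ to obtain $2C_{\mathrm{rem}}\gamma^3n^3$ (your $n=2$ illustration is in the spirit of Proposition~\ref{prop:order-sens-lb-main}).
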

\noindent\textbf{Proof.} Deferred to Appendix~\ref{app:proof-paired-rev}. \qed

Theorem~\ref{thm:paired-rev-main} shows that averaging a permutation with its reversal removes the entire order-dependent $\gamma^2$ term in the epoch map expansion. Thus, when paired reversal is used, the leading contribution of permutation induced variability is pushed to higher order, providing a rigorous stability mechanism complementary to block variance reduction.

\subsection{Order Sensitivity: A Rigorous $O(\gamma^2)$ Upper Bound}
\label{sec:order-upper-main}

A standard second-order expansion immediately implies that, without any symmetrization, the dependence of the epoch map on the within-epoch order is \emph{at most quadratic} in the stepsize.

\begin{theorem}
\label{thm:order-upper-main}
Assume \eqref{eq:A1-main}-\eqref{eq:A3-main} hold. Let $T_\pi$ be the epoch map defined in \eqref{eq:def-epoch-map-main}.
Then for any $w\in\mathbb{R}^d$ and any two permutations $\pi,\pi'\in\mathcal{S}_n$,
\begin{equation}
\|T_\pi(w)-T_{\pi'}(w)\|
\;\le\;
LG\,\gamma^2\,n(n-1)
\;+\;
2C_{\mathrm{rem}}\gamma^3 n^3,
\label{eq:order-upper-main}
\end{equation}
where $C_{\mathrm{rem}}=\frac{\rho G^2}{2}+\frac{L^2G}{2}$ is the constant from Lemma~\ref{lem:epoch-expansion-main}.
In particular, for fixed $n$ the order sensitivity is $O(\gamma^2)$ as $\gamma\to 0$.
\end{theorem}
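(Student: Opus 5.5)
We apply Lemma~\ref{lem:epoch-expansion-main} to both $\pi$ and $\pi'$ and subtract. The zeroth- and first-order terms $w-\gamma\sum_{i=1}^n g_i$ do not depend on the permutation, so they cancel exactly. This leaves
\begin{equation*}
T_\pi(w)-T_{\pi'}(w)=\gamma^2\Big(\sum_{1\le s<t\le n}H_{\pi(t)}g_{\pi(s)}-\sum_{1\le s<t\le n}H_{\pi'(t)}g_{\pi'(s)}\Big)+R_\pi(w)-R_{\pi'}(w).
\end{equation*}
By the triangle inequality and the remainder bound \eqref{eq:rem-bound-main}, $\|R_\pi(w)-R_{\pi'}(w)\|\le 2C_{\mathrm{rem}}\gamma^3n^3$, which is the second term in \eqref{eq:order-upper-main}.

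For the $\gamma^2$ term, we first note that the Lipschitz-gradient assumption \eqref{eq:A1-main} together with twice continuous differentiability gives $\|H_i\|\le L$ in operator norm: $\nabla^2 f_i(w)v$ is the directional derivative of $\nabla f_i$, whose norm is at most $L\|v\|$. Combined with \eqref{eq:A2-main}, each product satisfies $\|H_{\pi(t)}g_{\pi(s)}\|\le LG$.

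The main point is to get the constant $n(n-1)$ rather than the cruder $2\binom{n}{2}LG=n(n-1)LG$; these are in fact equal, so the bound follows either way. We have two options.

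\emph{Direct bound.} Each sum has $\binom{n}{2}=n(n-1)/2$ terms, so each sum has norm at most $LG\,n(n-1)/2$. The triangle inequality then bounds the difference by $LG\,n(n-1)$.

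\emph{Reindexed bound.} Alternatively, we rewrite each sum as $\sum_{i\ne j}\mathbf{1}[\pi^{-1}(j)<\pi^{-1}(i)]H_ig_j$. The difference then collects only the ordered pairs whose relative order differs between $\pi$ and $\pi'$. Each unordered pair $\{i,j\}$ contributes at most $\|H_ig_j\|+\|H_jg_i\|\le 2LG$, and there are at most $\binom n2$ such pairs, giving the same bound $LG\,n(n-1)$. This version is slightly sharper in spirit, but the direct bound suffices for the statement.

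Combining the two pieces yields \eqref{eq:order-upper-main}. For fixed $n$ the right-hand side is $LGn(n-1)\gamma^2+O(\gamma^3)$, hence $O(\gamma^2)$ as $\gamma\to0$. The only step needing care is justifying $\|H_i\|\le L$ from \eqref{eq:A1-main}; everything else is bookkeeping with Lemma~\ref{lem:epoch-expansion-main}.
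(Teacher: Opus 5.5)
Your proposal is correct and follows the paper's proof essentially verbatim. Like the paper, you subtract the two expansions from Lemma~\ref{lem:epoch-expansion-main}, bound the remainders by $2C_{\mathrm{rem}}\gamma^3 n^3$, and bound each $B_\pi(w)$ by $LG\,n(n-1)/2$ using $\|H_i\|\le L$ and $\|g_i\|\le G$. Your reindexed variant is also valid and slightly sharper: it bounds the $\gamma^2$ coefficient by $2LG$ times the number of pairs that $\pi$ and $\pi'$ order differently. It is not needed for the stated bound.
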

\noindent\textbf{Proof.} Deferred to Appendix~\ref{app:proof-order-upper}. \qed

\begin{proposition}
\label{prop:order-sens-lb-main}
There exist smooth finite-sum problems and two permutations $\pi,\pi'$ such that for all $\gamma>0$,
\[
\|T_\pi(w)-T_{\pi'}(w)\|\ge c\,\gamma^2
\]
for a constant $c>0$.
\end{proposition}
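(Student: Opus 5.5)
The plan is to give an explicit one-dimensional example with $n=2$ in which the two possible epoch maps can be computed in closed form, with no Taylor remainder to control. The natural choice is one quadratic component and one linear component. The quadratic has a nonzero Hessian and the linear one has a nonzero constant gradient, so the cross term $H_{\pi(t)}g_{\pi(s)}$ in Lemma~\ref{lem:epoch-expansion-main} is nonzero for exactly one of the two orders.

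Concretely, take $d=1$, $n=2$, and fix constants $a,b\neq 0$. Set
\begin{equation*}
f_1(x)=\tfrac{a}{2}x^2, \qquad f_2(x)=bx .
\end{equation*}
Both are $C^\infty$, $f_1'$ is $|a|$-Lipschitz, $f_2'$ is constant, and both Hessians are constant, so \eqref{eq:A3-main} holds with $\rho=0$. Let $\pi=(1,2)$ and $\pi'=(2,1)=\mathrm{Rev}(\pi)$. Unrolling \eqref{eq:def-epoch-map-main} gives
\begin{equation*}
T_\pi(w)=(1-\gamma a)w-\gamma b, \qquad T_{\pi'}(w)=(1-\gamma a)(w-\gamma b).
\end{equation*}
Subtracting the two expressions gives $T_\pi(w)-T_{\pi'}(w)=-\gamma^2 ab$, so
\begin{equation*}
\|T_\pi(w)-T_{\pi'}(w)\|=|ab|\,\gamma^2
\end{equation*}
for every $w$ and every $\gamma>0$. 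The claim therefore holds with $c=|ab|>0$, and with equality rather than just an inequality.

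As a consistency check against Lemma~\ref{lem:epoch-expansion-main}, the order-dependent second-order term is $\gamma^2 H_2g_1=0$ for $\pi$ and $\gamma^2 H_1 g_2=\gamma^2 ab$ for $\pi'$. Their difference is exactly the computed gap, and the remainder vanishes identically because the maps are affine. This shows the $\gamma^2$ rate in Theorem~\ref{thm:order-upper-main} cannot be improved without symmetrization.

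The one delicate point is that $f_1'(x)=ax$ is unbounded, so the bounded-gradient condition \eqref{eq:A2-main} fails globally. The proposition only asks for smooth finite-sum problems, so this does not affect its validity. If one wants the example to also satisfy \eqref{eq:A1-main}--\eqref{eq:A3-main}, I would replace $f_1$ by a smooth function that agrees with $\tfrac a2x^2$ on a bounded interval and has bounded gradient outside it. The identity then holds exactly for all $\gamma\le\gamma_0$ whenever $w$ and the trajectory stay in that interval, which suffices for the asymptotic lower bound $\Omega(\gamma^2)$ as $\gamma\to0$.
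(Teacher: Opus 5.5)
Your proposal is correct and matches the paper's proof: it uses the same $n=2$ instance $f_1(x)=\tfrac{a}{2}x^2$, $f_2(x)=bx$ with the orders $(1,2)$ and $(2,1)$, the same closed-form epoch maps, and the same constant $c=|ab|$. Your side remark is also fair: $f_1$ violates the bounded-gradient condition \eqref{eq:A2-main} globally, a point the paper does not raise, but this does not affect the proposition as stated.
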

\noindent\textbf{Proof.} Deferred to Appendix~\ref{app:proof-order-lb}. \qed

Theorem~\ref{thm:order-upper-main} shows that permutation effects enter at second order through the term
$B_\pi(w)=\sum_{1\le s<t\le n} H_{\pi(t)}g_{\pi(s)}$ (Lemma~\ref{lem:epoch-expansion-main}),
so one cannot expect order dependence larger than $O(\gamma^2)$ for sufficiently small learning rate. Our explicit $n=2$ construction (Proposition~\ref{prop:order-sens-lb-main}) shows that $\Theta(\gamma^2)$ order dependence can occur, hence the quadratic scaling is unavoidable in general. Moreover, paired reversal (Theorem~\ref{thm:paired-rev-main}) cancels the entire order-dependent $\gamma^2$ term exactly, improving the bound from $O(\gamma^2)$ to $O(\gamma^3)$, which is a strict stability gain.

\subsection{Permutation-Variance Reduction and Sharpness}
\label{sec:permvar-main}

Paired reversal also reduces the permutation-induced variance of epoch maps.

\paragraph{Permutation variance.}
Let $\pi$ be random (typically $\pi\sim\mathrm{Unif}(\mathcal{S}_n)$). Define
\begin{align*}
    & \mathrm{Var}_\pi\!\big(T_\pi(w)\big)
:=\mathbb{E}_\pi\big\|T_\pi(w)-\mathbb{E}_\pi[T_\pi(w)]\big\|^2, \\
& \mathrm{Var}_\pi\!\big(\bar T_\pi(w)\big)
:=\mathbb{E}_\pi\big\|\bar T_\pi(w)-\mathbb{E}_\pi[\bar T_\pi(w)]\big\|^2. \tagthis \label{eq:def-perm-var-main}
\end{align*}

\paragraph{Second-order decomposition.}
From Lemma~\ref{lem:epoch-expansion-main},
\allowdisplaybreaks
\begin{align*}
    & T_\pi(w)=A(w)+\gamma^2 B_\pi(w)+R_\pi(w), \\
    & A(w):=w-\gamma\sum_{i=1}^n g_i, \tagthis \label{eq:T-decomp-main} \\
    & B_\pi(w):=\sum_{1\le s<t\le n} H_{\pi(t)}\,g_{\pi(s)}. 
\end{align*}

From Theorem~\ref{thm:paired-rev-main},
\begin{align*}
    & \bar T_\pi(w)=\bar A(w)+\bar R_\pi(w), \\
    & \bar A(w):=
w-\gamma\sum_{i=1}^n g_i
+\frac{\gamma^2}{2}\sum_{i\neq j} H_i\,g_j, \tagthis \label{eq:Tbar-decomp-main}
\end{align*}
where $\bar A(w)$ is deterministic (independent of $\pi$) and $\|\bar R_\pi(w)\|\le C_{\mathrm{rem}}\gamma^3 n^3$.

\begin{theorem}
\label{thm:perm-var-upper-main}
Assume \eqref{eq:A1-main}-\eqref{eq:A3-main} hold. Then for any fixed $w$,
\begin{align}
\mathrm{Var}_\pi\!\big(T_\pi(w)\big)
&\le
2\gamma^4\,\mathrm{Var}_\pi\!\big(B_\pi(w)\big)
+
2C_{\mathrm{rem}}^2\,\gamma^6 n^6,
\label{eq:var-T-upper-main}
\\
\mathrm{Var}_\pi\!\big(\bar T_\pi(w)\big)
&\le
C_{\mathrm{rem}}^2\,\gamma^6 n^6.
\label{eq:var-Tbar-upper-main}
\end{align}
\end{theorem}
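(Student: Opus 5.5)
The plan is to combine the variational characterization of variance with the decompositions \eqref{eq:T-decomp-main} and \eqref{eq:Tbar-decomp-main}. The only property of variance needed is that for any random vector $X$ and any deterministic $c$,
\[
\mathrm{Var}(X)=\mathbb{E}\|X-\mathbb{E}X\|^2\le \mathbb{E}\|X-c\|^2 ,
\]
since the mean minimizes the expected squared distance. We also use the elementary inequality $\|a+b\|^2\le 2\|a\|^2+2\|b\|^2$. Throughout, $w$ is fixed, so the quantities $A(w)$, $\bar A(w)$, $g_i$, $H_i$ are deterministic and only $\pi$ is random.

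\textbf{Bound \eqref{eq:var-T-upper-main}.} By Lemma~\ref{lem:epoch-expansion-main},
\[
T_\pi(w)=A(w)+\gamma^2B_\pi(w)+R_\pi(w),
\]
with $A(w)$ independent of $\pi$. Hence
\[
T_\pi-\mathbb{E}T_\pi=\gamma^2\big(B_\pi-\mathbb{E}B_\pi\big)+\big(R_\pi-\mathbb{E}R_\pi\big).
\]
Squaring and applying $\|a+b\|^2\le 2\|a\|^2+2\|b\|^2$ inside the expectation gives
\[
\mathrm{Var}_\pi(T_\pi)\le 2\gamma^4\mathrm{Var}_\pi(B_\pi)+2\,\mathrm{Var}_\pi(R_\pi).
\]
For the last term we apply the variational bound with $c=0$:
\[
\mathrm{Var}_\pi(R_\pi)\le \mathbb{E}\|R_\pi\|^2\le C_{\mathrm{rem}}^2\gamma^6n^6,
\]
using the pointwise bound \eqref{eq:rem-bound-main}, which holds uniformly in $\pi$. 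This yields \eqref{eq:var-T-upper-main}.

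\textbf{Bound \eqref{eq:var-Tbar-upper-main}.} By Theorem~\ref{thm:paired-rev-main},
\[
\bar T_\pi(w)=\bar A(w)+\bar R_\pi(w),
\]
where $\bar A(w)$ is deterministic. Taking $c=\bar A(w)$ in the variational bound gives
\[
\mathrm{Var}_\pi(\bar T_\pi)\le \mathbb{E}\|\bar T_\pi-\bar A\|^2=\mathbb{E}\|\bar R_\pi\|^2\le C_{\mathrm{rem}}^2\gamma^6n^6 .
\]
Because we center at the deterministic second-order part rather than at the mean, no factor of $2$ appears, which matches the stated constant.

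\textbf{Main obstacle.} The argument is routine, and the only point needing care is the remainder bound for $\bar R_\pi$. Theorem~\ref{thm:paired-rev-main} asserts $\|\bar R_\pi\|\le C_{\mathrm{rem}}\gamma^3n^3$, which follows because
\[
\bar R_\pi=\tfrac12\big(R_\pi+R_{\mathrm{Rev}(\pi)}\big),
\]
so it is controlled by the triangle inequality applied to two copies of \eqref{eq:rem-bound-main}. We take that bound as given. We also need all expectations to be finite, which is immediate because $\mathcal{S}_n$ is finite.
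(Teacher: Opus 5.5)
Your proposal is correct and follows the paper's proof essentially step for step. The paper uses the same decomposition, the same bound $\|a+b\|^2\le 2\|a\|^2+2\|b\|^2$, and the same bound $\mathrm{Var}_\pi(R_\pi)\le\mathbb{E}\|R_\pi\|^2$. Your centering of the paired map at the deterministic $\bar A(w)$ is equivalent to the paper's bound $\mathbb{E}\|\bar R_\pi-\mathbb{E}\bar R_\pi\|^2\le\mathbb{E}\|\bar R_\pi\|^2$.
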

\noindent\textbf{Proof.} Deferred to Appendix~\ref{app:proof-permvar-upper}. \qed

The following explicit constructions show that $\Theta(\gamma^2)$ order dependence of $T_\pi$ is unavoidable in general, and $\mathrm{Var}_\pi(T_\pi(w))=\Theta(\gamma^4)$ in general.

\begin{proposition}
\label{prop:permvar-lb-main}
There exist smooth finite-sum problems and an iterate $w$ such that for $\pi\sim\mathrm{Unif}(\mathcal{S}_2)$,
\[
\mathrm{Var}_\pi\!\big(T_\pi(w)\big)=\Theta(\gamma^4)\qquad (\text{for fixed }n=2).
\]
\end{proposition}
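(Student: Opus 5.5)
We will prove the claim with an explicit $n=2$ construction in which the epoch map can be computed in closed form, so no remainder estimates are needed. For $n=2$ the uniform permutation $\pi\sim\mathrm{Unif}(\mathcal{S}_2)$ takes the values $\pi=(1,2)$ and $\pi'=(2,1)$ with probability $1/2$ each. For any two-point distribution the variance is
\[
\mathrm{Var}_\pi\big(T_\pi(w)\big)=\tfrac14\|T_{(1,2)}(w)-T_{(2,1)}(w)\|^2 .
\]
So it suffices to build a problem with $\|T_{(1,2)}(w)-T_{(2,1)}(w)\|=\Theta(\gamma^2)$, that is, bounded above and below by constant multiples of $\gamma^2$. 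This is exactly the kind of instance promised by Proposition~\ref{prop:order-sens-lb-main}; we make it explicit here so that we also get the matching upper bound.

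We take quadratics on $\mathbb{R}^2$ (or $\mathbb{R}$ with an affine term), $f_i(x)=\tfrac12 x^\top H_i x - b_i^\top x$, chosen so that the two Hessians do not commute. A concrete choice is $H_1=\mathrm{diag}(1,0)$ with $b_1=0$, and $f_2(x)=-c^\top x$ with $H_2=0$ and $c=(1,0)^\top$. Then $\nabla f_1(x)=H_1x$ and $\nabla f_2(x)=-c$, and we fix $w=0$. The computation is as follows.
\begin{itemize}
\item Order $(1,2)$: the first step gives $w_1=0-\gamma H_1\cdot 0=0$, and then $w_2=\gamma c$.
\item Order $(2,1)$: the first step gives $w_1=\gamma c$, and then $w_2=\gamma c-\gamma H_1\gamma c=\gamma c-\gamma^2 c$.
\end{itemize}
The difference is $\gamma^2 c$, whose norm is exactly $\gamma^2$. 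This agrees with Lemma~\ref{lem:epoch-expansion-main}: the order-dependent term is $\gamma^2(H_{\pi(2)}g_{\pi(1)})$, which equals $0$ for one order and $\gamma^2 H_1 g_2=-\gamma^2 c$ for the other. Substituting into the variance formula gives
\[
\mathrm{Var}_\pi\big(T_\pi(0)\big)=\tfrac14\gamma^4 ,
\]
which is exactly $\Theta(\gamma^4)$ for all $\gamma>0$.

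The main obstacle is reconciling the construction with the standing assumptions \eqref{eq:A1-main}--\eqref{eq:A3-main}, since Proposition~\ref{prop:permvar-lb-main} only asks for "smooth finite-sum problems".
\begin{itemize}
\item Smoothness and Hessian-Lipschitz: quadratics are $C^\infty$, satisfy \eqref{eq:A1-main} with $L=1$, and satisfy \eqref{eq:A3-main} with $\rho=0$.
\item Bounded gradients: $f_1$ violates the global bound \eqref{eq:A2-main}. Only a local bound matters for the computation, because all iterates stay in the ball of radius $\gamma(1+\gamma)$. If a global $G$ is wanted, we will replace $f_1$ by $\psi(x_1)$ with $\psi$ smooth, $\psi(s)=s^2/2$ on $[-2,2]$, and $\psi'$ bounded. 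For $\gamma\le1$ the trajectory lies in the region where $\psi$ is quadratic, so the computation above is unchanged.
\item Range of $\gamma$: for $\gamma>1$ the modified $f_1$ is still exact if the quadratic region of $\psi$ is taken large enough. Alternatively, we simply state the result for $\gamma\in(0,\gamma_0]$, since $\Theta(\cdot)$ refers to the small-$\gamma$ regime.
\end{itemize}
The unmodified quadratic instance already gives the exact identity $\mathrm{Var}=\gamma^4/4$ for every $\gamma$.
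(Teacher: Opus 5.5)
Your argument is correct, and it reaches the result by a more elementary route than the paper, although the instance is essentially the same. The paper takes $f_1=\frac a2x^2$, $f_2=bx$; yours is the case $a=1$, $b=-1$, $w=0$, padded to $\mathbb{R}^2$.

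\textbf{The paper's route.} It does not compute $\mathrm{Var}_\pi(T_\pi(w))$ directly. It writes $T_\pi=A+\gamma^2B_\pi+R_\pi$ and observes $\mathrm{Var}_\pi(B_\pi)=(ab)^2/4$. It then absorbs the remainder using $\mathrm{Var}(X+Y)\ge\frac12\mathrm{Var}(X)-\mathbb{E}\|Y\|^2$ together with $\|R_\pi\|\le 8C_{\mathrm{rem}}\gamma^3$. This yields $\mathrm{Var}_\pi(T_\pi(w))\ge\frac{(ab)^2}{16}\gamma^4$, but only for sufficiently small $\gamma$. The matching upper bound is imported from Theorem~\ref{thm:perm-var-upper-main}.

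\textbf{Your route.} The two-point identity $\mathrm{Var}=\frac14\|T_{(1,2)}(w)-T_{(2,1)}(w)\|^2$, combined with the closed-form epoch maps, gives the exact value $\gamma^4/4$ for all $\gamma>0$. For $n=2$ with quadratic components the remainder is identically zero, so the paper's perturbation step is not actually needed. Your route also avoids a point the paper passes over: $C_{\mathrm{rem}}$ involves the global gradient bound $G$ of \eqref{eq:A2-main}, which $\frac a2x^2$ does not satisfy.

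\textbf{What each buys.} The paper's route gives a template. The same perturbation argument yields $\mathrm{Var}_\pi(T_\pi(w))=\Omega(\gamma^4)$ as $\gamma\to0$ for any problem obeying \eqref{eq:A1-main}--\eqref{eq:A3-main} with $\mathrm{Var}_\pi(B_\pi(w))>0$, even when no closed form is available. Your route gives an exact identity with no dependence on the standing assumptions.

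Two small remarks. First, in your concrete choice the Hessians do commute, since $H_2=0$. What creates the gap is $H_1g_2\neq H_2g_1$, i.e.\ $B_{(1,2)}\neq B_{(2,1)}$, not noncommutativity. Second, under a global bound $G$ one has $\mathrm{Var}_\pi(T_\pi(w))\le(n\gamma G)^2$. So a fixed bounded-gradient problem can exhibit $\Theta(\gamma^4)$ only on a range $(0,\gamma_0]$. Letting the quadratic region of $\psi$ grow with $\gamma$ would make the problem depend on $\gamma$, so your second option, restricting the range of $\gamma$, is the right one.
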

\noindent\textbf{Proof.} Deferred to Appendix~\ref{app:proof-permvar-lb}. \qed


We summarize the key properties of the two components as follows.
\begin{itemize}
\item \textbf{Block reshuffling.}
Lemma~\ref{lem:exact-decomp-main} shows that $\sigma_{\mathrm{blk}}^2(w)\le \sigma_{\mathrm{ind}}^2(w)$, with strict improvement when within-block gradients are non-identical.
Propositions~\ref{prop:rr-prefix-main}-\ref{prop:block-prefix-main} further show that block reshuffling obeys the same without-replacement prefix-variance formula with the reduced constant $\sigma_{\mathrm{blk}}^2(w)$, improving constants in unified bounds involving $\mathbb{E}\|\Delta_\pi^{(k)}(w)\|^2$.

\item \textbf{Paired reversal.}
Theorem~\ref{thm:paired-rev-main} shows that averaging a permutation and its reverse cancels the order-dependent $\gamma^2$ term, reducing order sensitivity from $\Theta(\gamma^2)$ to $O(\gamma^3)$.
\end{itemize}

Together, these results show that block reshuffling and paired reversal capture complementary mechanisms in without replacement SGD.
Block reshuffling reduces variance constants through structured aggregation, while paired reversal reduces order sensitivity through symmetry.
These principles suggest broader classes of structured permutations and open directions for designing new algorithms and convergence theory.

\section{Numerical Experiments}\label{sec_experiment}

We evaluate the empirical performance of the shuffling rule discovered by the LLM-guided program evolution pipeline (APR) and compare it against standard shuffling schemes on classification and regression tasks.

\begin{figure}[h]
    \centering

    \begin{subfigure}[t]{\textwidth}
        \centering
        \includegraphics[width=0.33\textwidth]{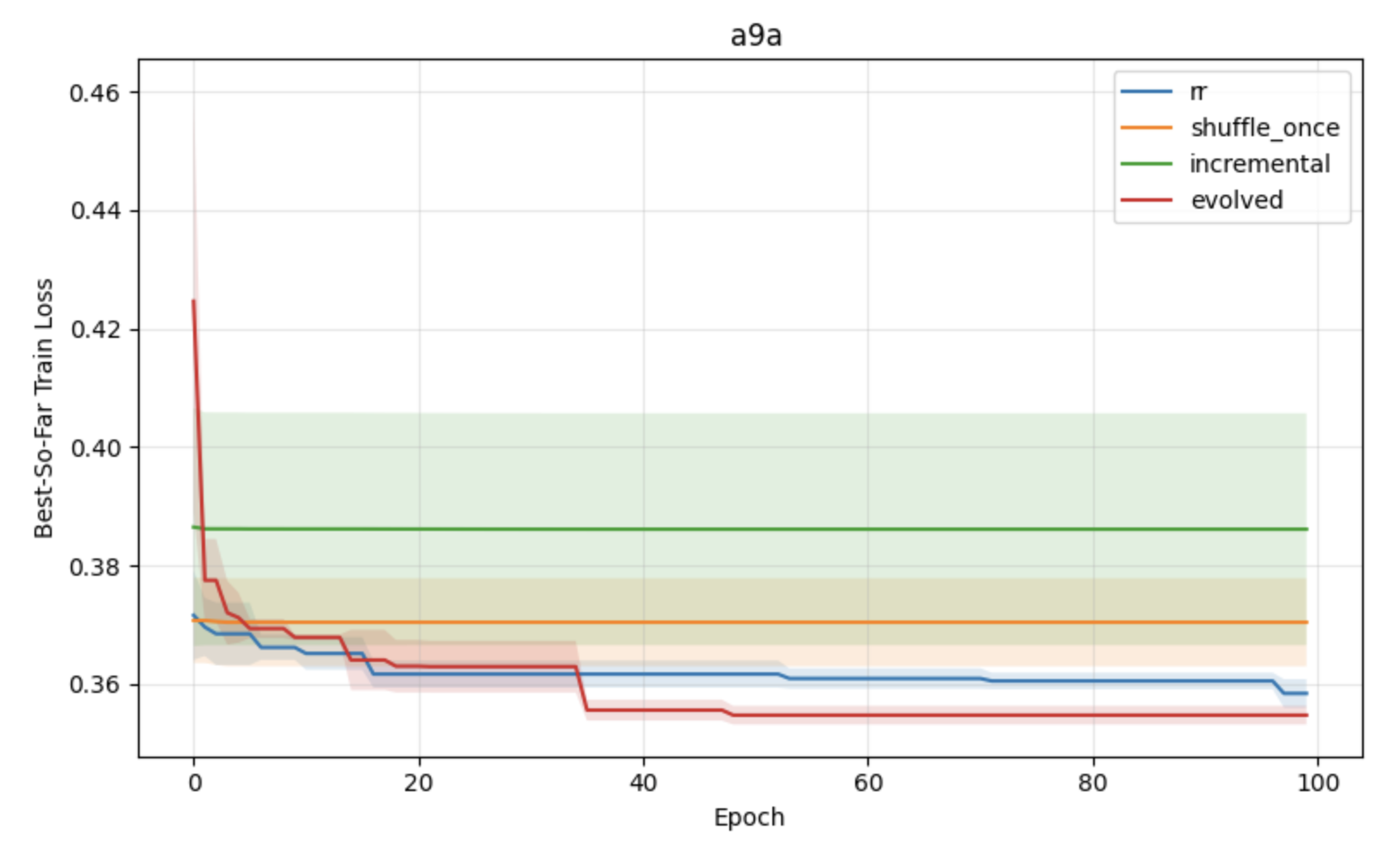}
        \includegraphics[width=0.33\textwidth]{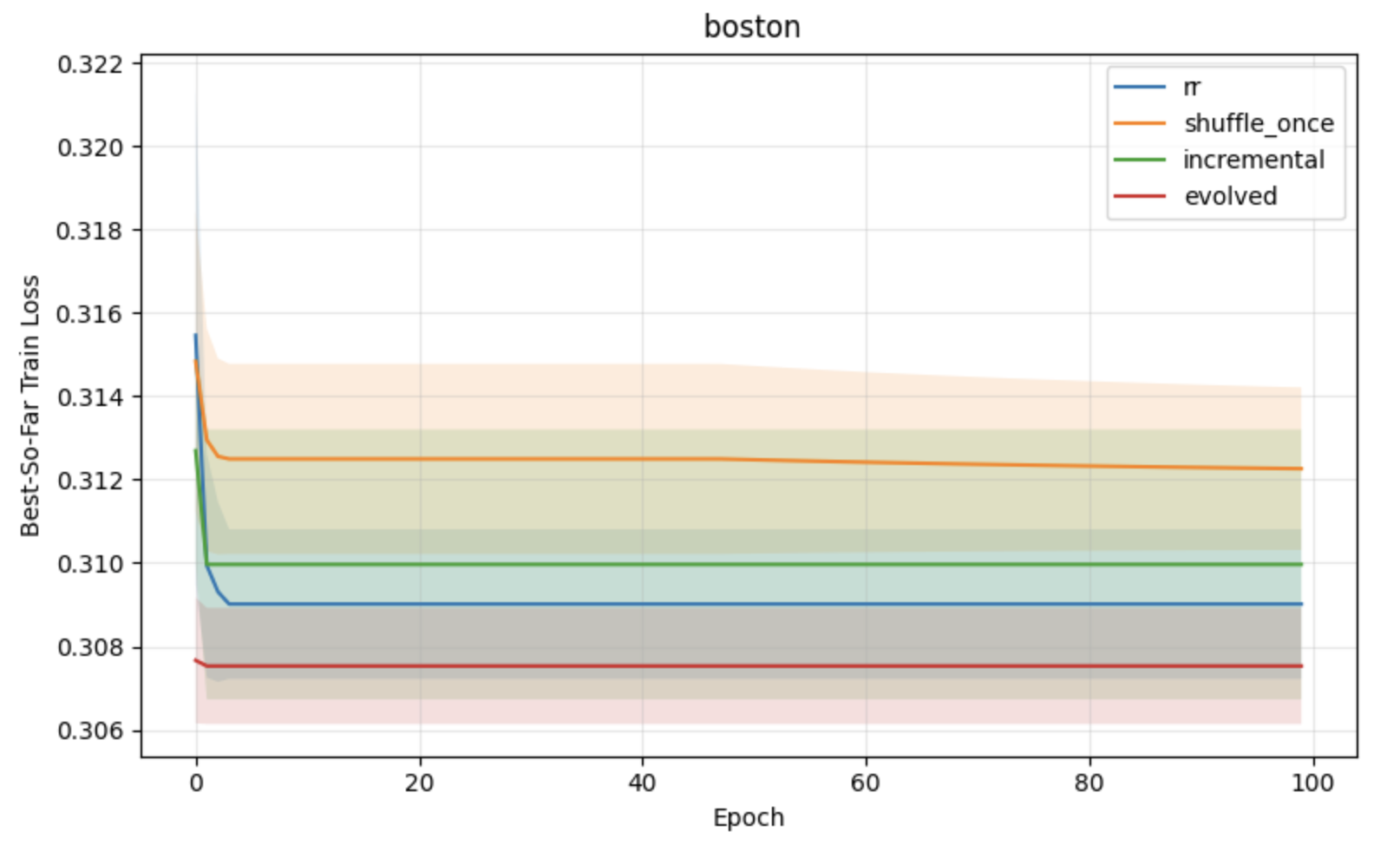}
        \includegraphics[width=0.33\textwidth]{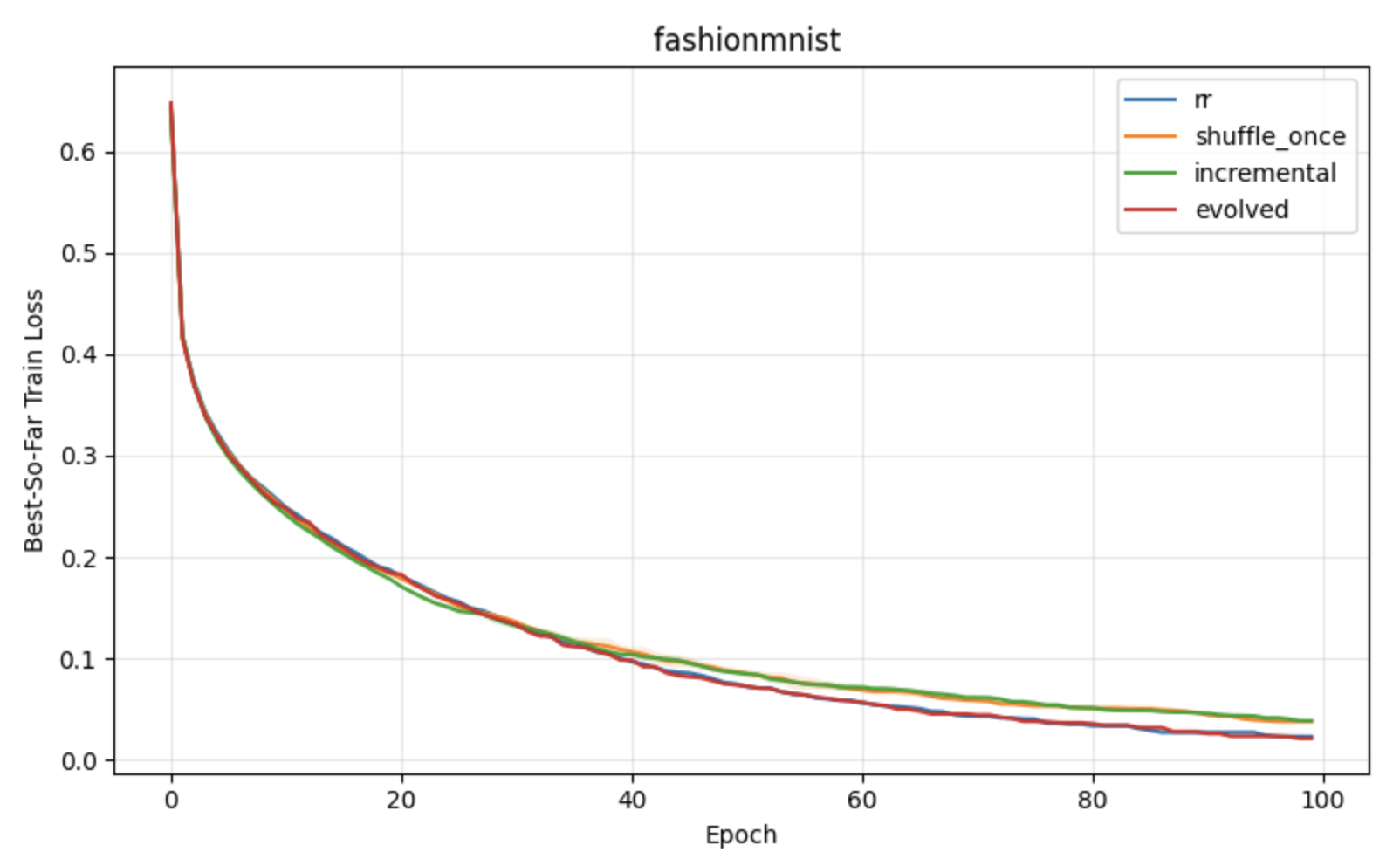}
        \caption{Constant learning rates}
        \label{fig:constant_lr_main}
    \end{subfigure}

    \vspace{0.2em}

    \begin{subfigure}[t]{\textwidth}
        \centering
        \includegraphics[width=0.33\textwidth]{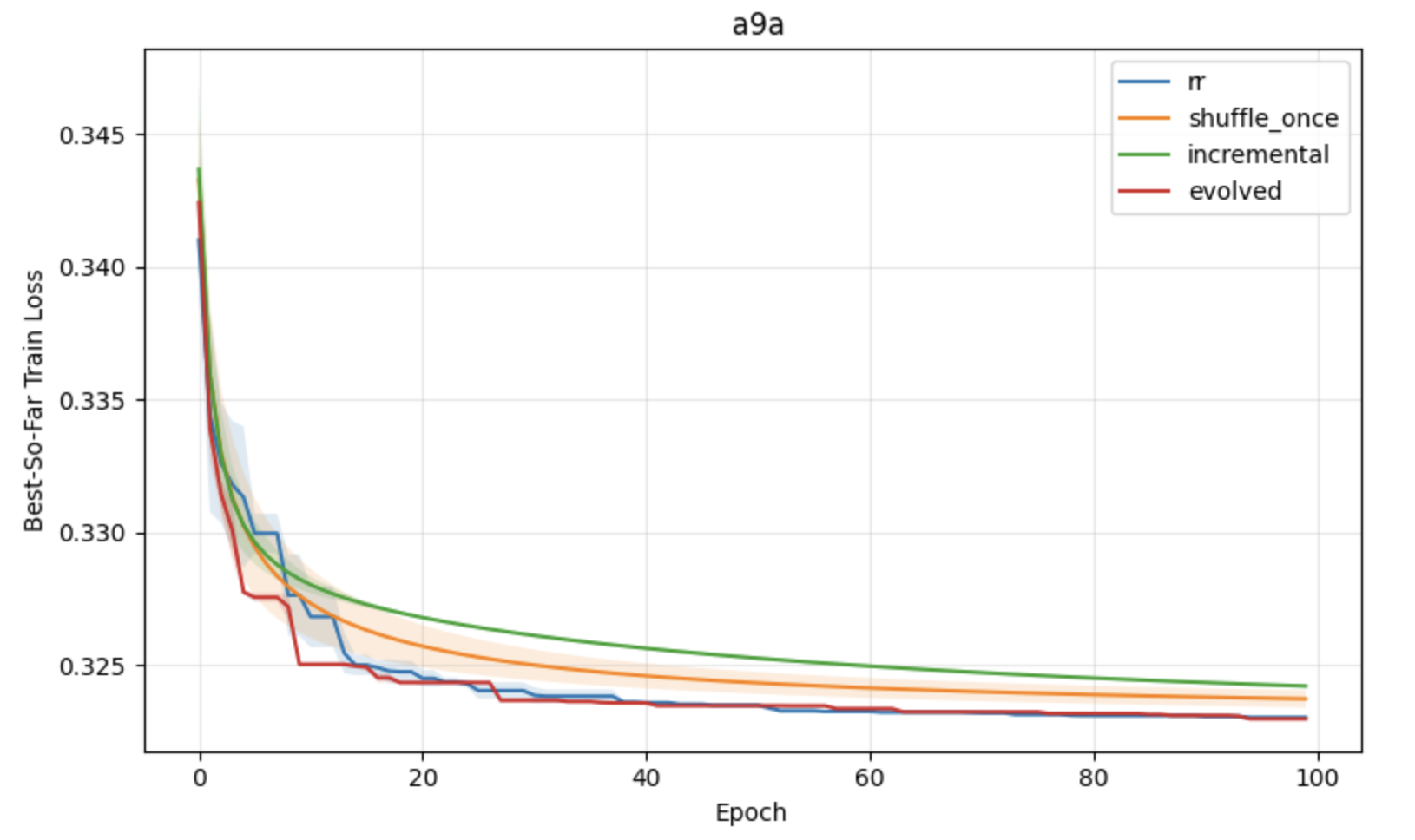}
        \includegraphics[width=0.33\textwidth]{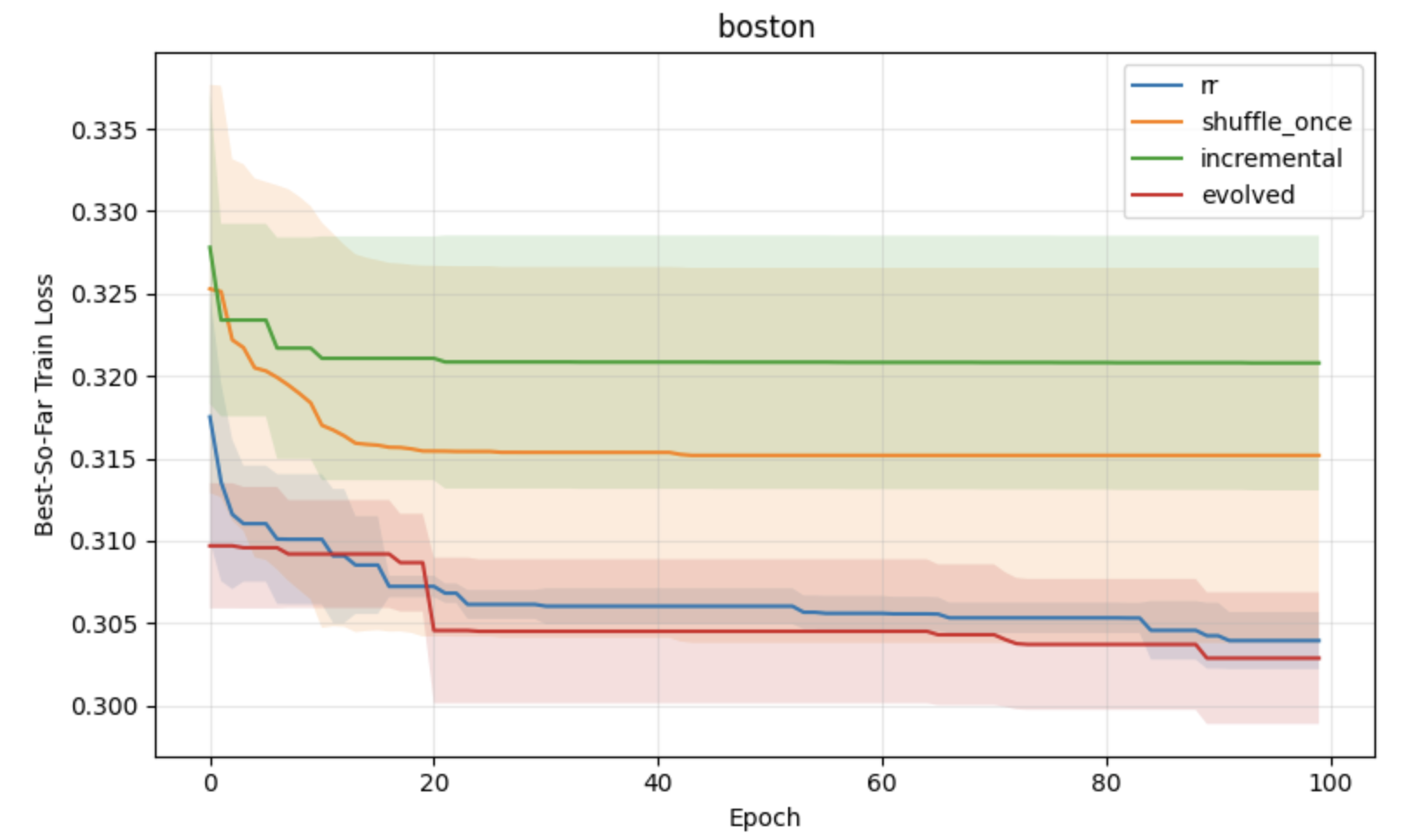}
        \includegraphics[width=0.33\textwidth]{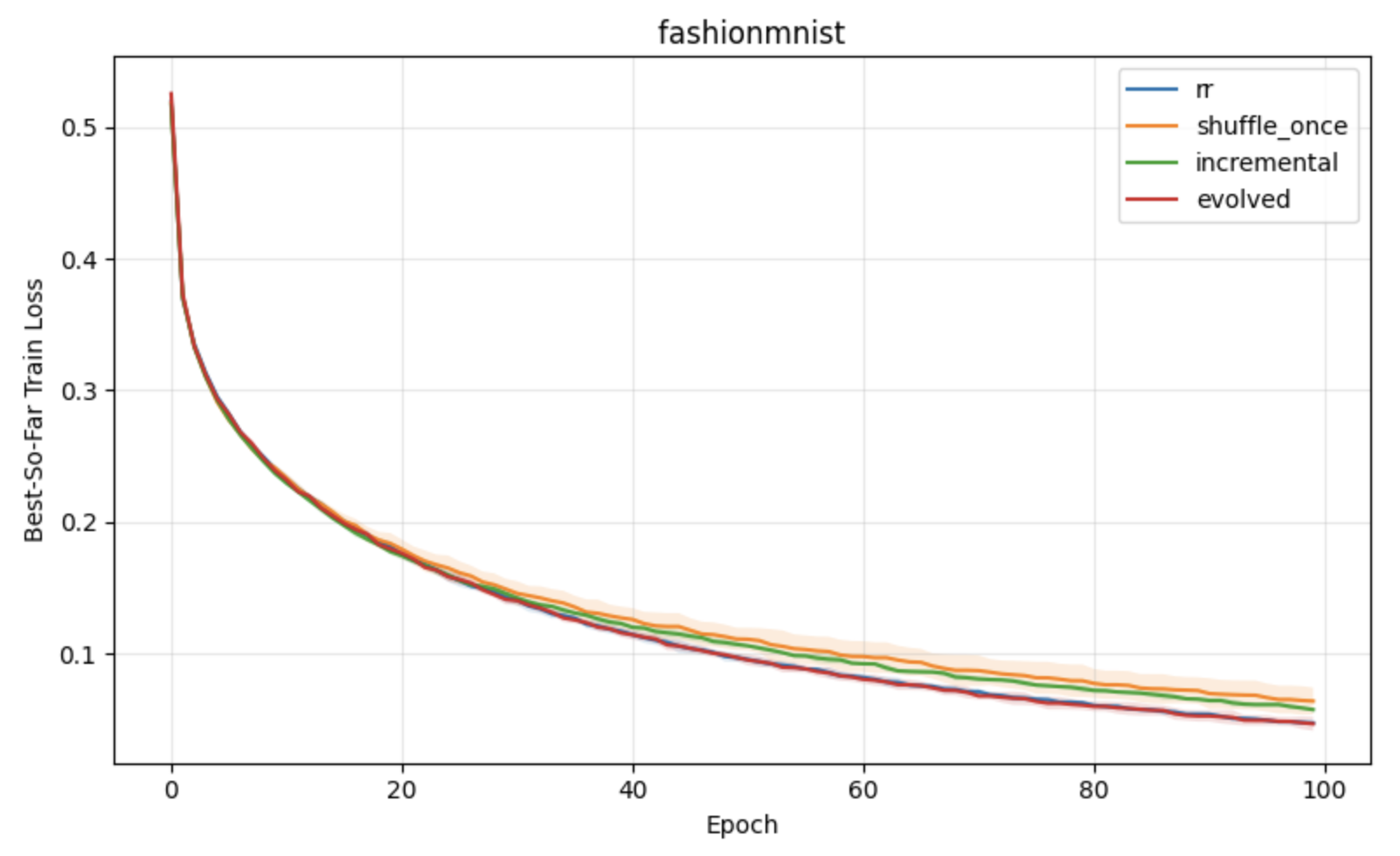}
        \caption{Diminishing learning rates}
        \label{fig:diminishing_lr_main}
    \end{subfigure}

    \vspace{0.2em}

    \begin{subfigure}[t]{\textwidth}
        \centering
        \includegraphics[width=0.33\textwidth]{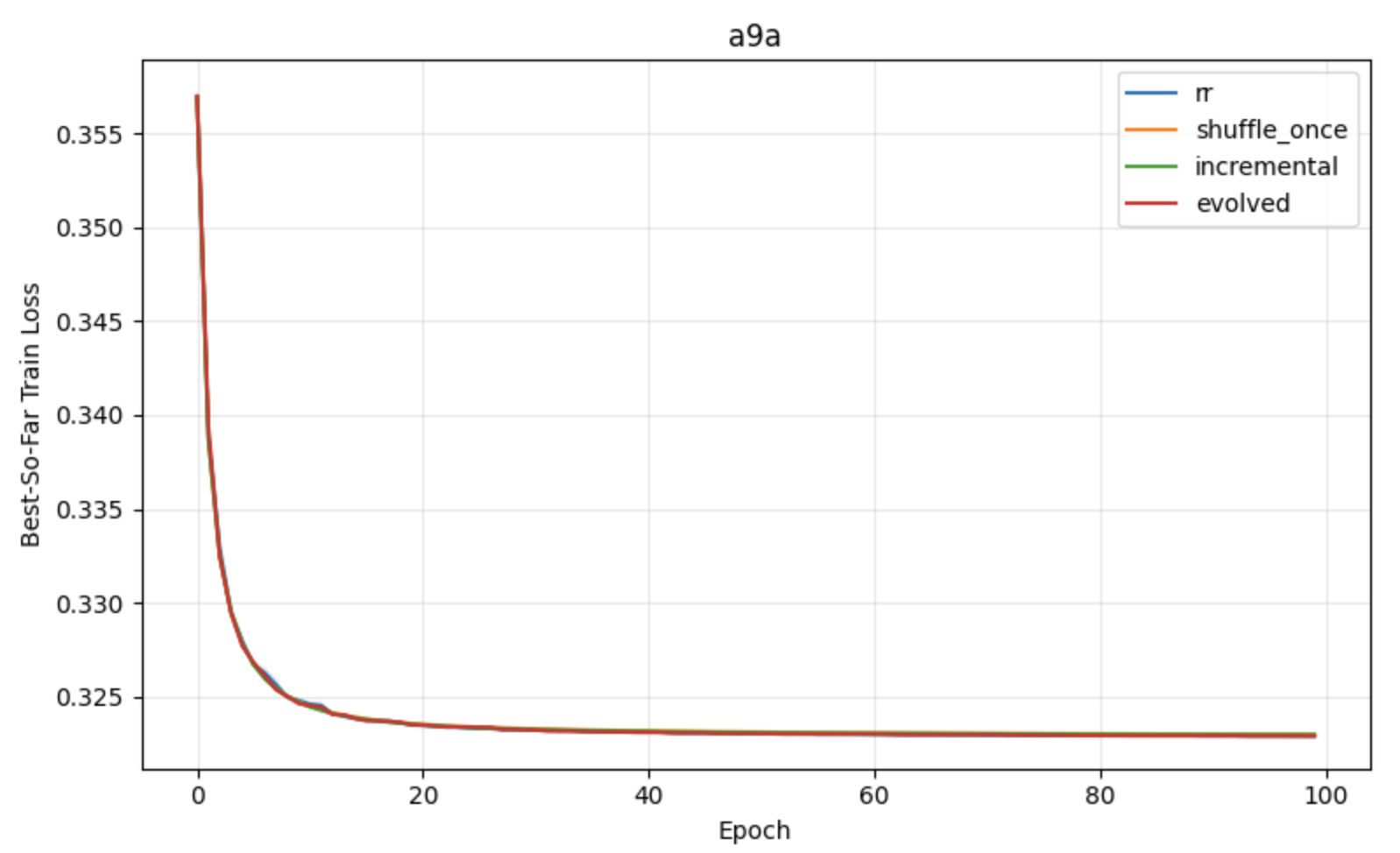}
        \includegraphics[width=0.33\textwidth]{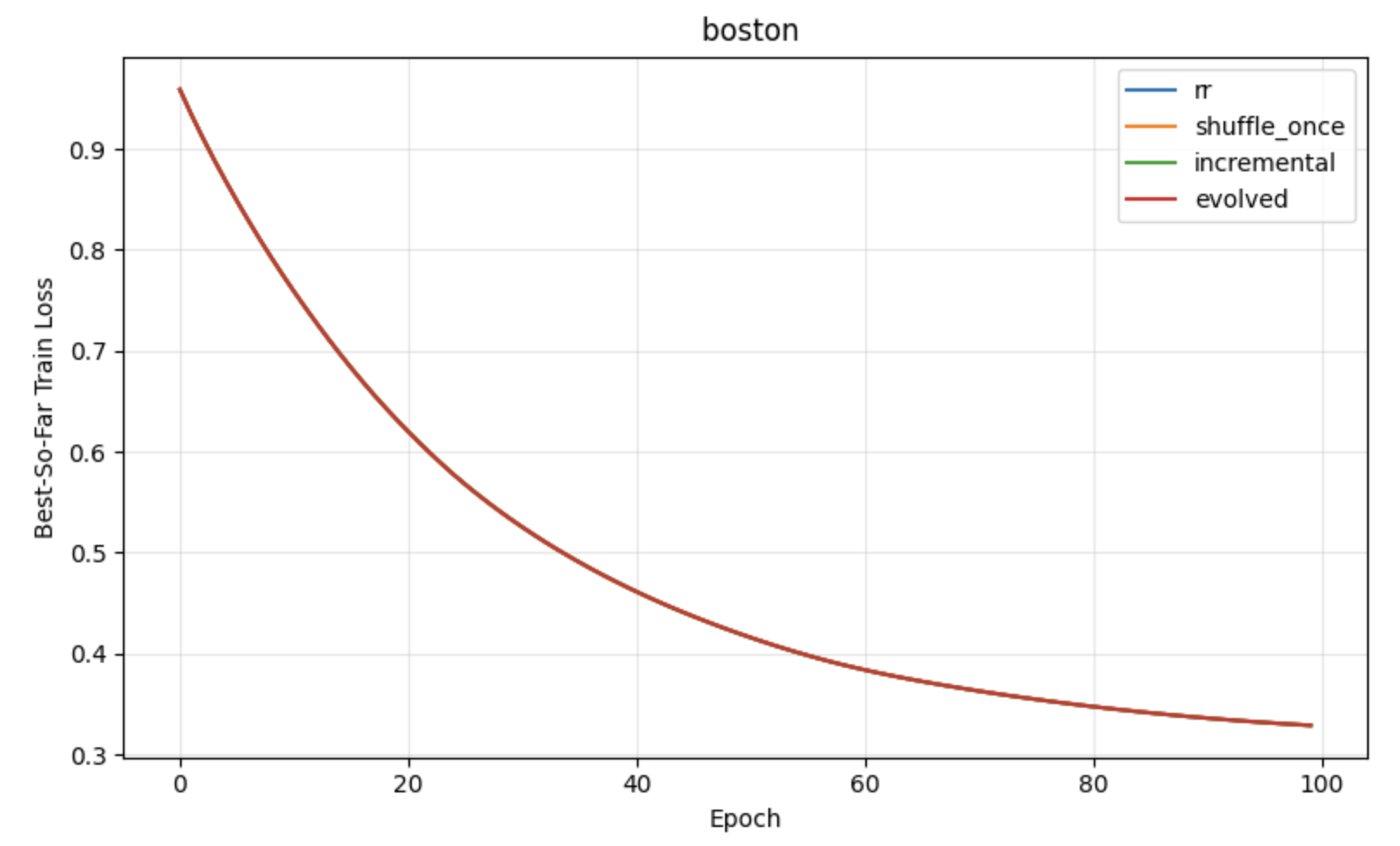}
        \includegraphics[width=0.33\textwidth]{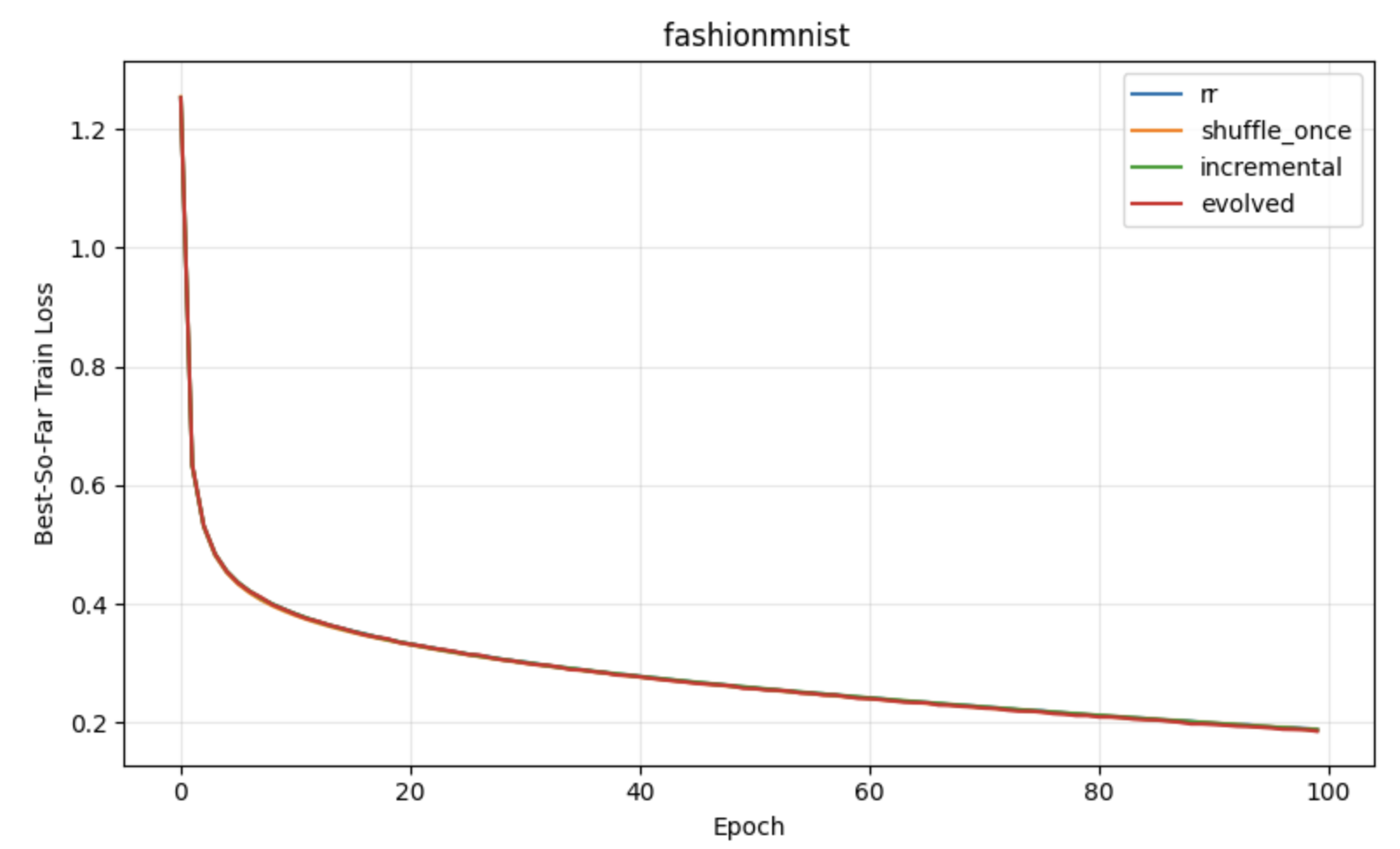}
        \caption{Small learning rates}
        \label{fig:small_variance_main}
    \end{subfigure}

    \caption{Performance comparison under different learning-rate schedules for \texttt{a9a}, \texttt{boston}, and \texttt{FashionMNIST} datasets.}
    \label{fig:main_experiments}
\end{figure}

For the \texttt{a9a}, \texttt{breast\_cancer}, and \texttt{digits} datasets \citep{LIBSVM,kaynak1995methods}, we use logistic regression models for classification, resulting in convex objectives. For the \texttt{california}, \texttt{boston}, and \texttt{diabetes} datasets \citep{pace1997sparse,buitinck2013api}, we use linear regression with mean squared error loss, yielding convex regression problems. For the \texttt{MNIST} and \texttt{FashionMNIST} datasets \citep{MNIST,xiao2017fashion}, we use fully connected neural networks for classification, leading to non-convex optimization problems. Additional details on datasets, models, and training configurations are provided in Appendix~\ref{sec_app_more_experiments}.

\begin{table*}[t]
\caption{Comparisons using different shuffling schemes to SGD. RR: Random Reshuffle; SO: Shuffle Once; IG: Incremental Gradient
}\label{table_1}
\begin{center}
\begin{tabular}{|l|l|l|l|l|l|l|} 
 \hline
 \textbf{Dataset} & \textbf{Task} & \textbf{LR} & \textbf{Evolved (APR)} & \textbf{RR} & \textbf{SO} & \textbf{IG} \\ 
 \hline
 \multirow{4}{*}{\textit{a9a}} & Classification & Fixed  &  \textcolor{blue}{$0.354733$}  & $0.358406$ & $0.370421$ & $0.386135$
  \\ 
     & (Convex) &  & \textcolor{red}{$\pm 0.001576$} & $\pm 0.002402$ & $\pm 0.007436$ & $\pm 0.019584$
  \\ 
  \cline{2-7}
     & Classification  & Dimishing  & \textcolor{blue}{$0.322985$} & $0.323040$ & $0.323738$ & $0.324219$
  \\ 
    &  (Convex) &  & \textcolor{red}{$\pm 0.000009$} & $\pm 0.000062$ & $\pm 0.000323$ & $\pm 0.000041$
 \\ 
  \hline
  \multirow{4}{*}{\textit{breast\_cancer}} & Classification & Fixed  & \textcolor{blue}{$0.043627$}   & $0.045695$ & $0.074279$ & $0.056839$
  \\ 
     & (Convex) &  & \textcolor{red}{$\pm 0.001694$} & $\pm 0.003468$ & $\pm 0.021865$ & $\pm 0.012038$
  \\ 
  \cline{2-7}
     & Classification & Dimishing  & $0.038790$ & \textcolor{blue}{$0.038632$} & $0.040084$ & $0.039313$
  \\ 
    & (Convex)  &  & $\pm 0.000249$ & \textcolor{red}{$\pm 0.000112$} & $\pm 0.001086$ & $\pm 0.000288$
 \\ 
  \hline
  \multirow{4}{*}{\textit{digits}} & Classification & Fixed  &  \textcolor{blue}{$0.199566$}  & $0.199883$  & $0.201699$ & $0.204401$
  \\ 
     & (Convex) &  & \textcolor{red}{$\pm 0.000046$} & $\pm 0.000162$ & $\pm 0.000904$ & $\pm 0.000331$
  \\ 
  \cline{2-7}
     & Classification & Dimishing  & \textcolor{blue}{$0.199568$} & $0.199974$ & $0.201680$ & $0.204488$
  \\ 
    & (Convex) &  & \textcolor{red}{$\pm 0.000048$} & $\pm 0.000230$  & $\pm 0.000910$ & $\pm 0.000327$
 \\ 
 \hline
 \multirow{4}{*}{\textit{california}} & Regression & Fixed  & \textcolor{blue}{$0.371610$}   & $0.375588$ & $0.460118$ & $0.568954$
  \\ 
     & (Convex) &  & \textcolor{red}{$\pm 0.002141$} & $\pm 0.004742$ & $\pm 0.029614$ & $\pm 0.027146$
  \\ 
  \cline{2-7}
     & Regression  & Dimishing  & \textcolor{blue}{$0.373325$} & $0.374319$  & $0.414826$ & $0.543131$
  \\ 
    & (Convex)  &  & \textcolor{red}{$\pm 0.001430$} & $\pm 0.002249$ & $\pm 0.019466$ & $\pm 0.020387$
 \\ 
  \hline
  \multirow{4}{*}{\textit{boston}} & Regression & Fixed  &  \textcolor{blue}{$0.307530$}  & $0.309014$ & $0.312260$ & $0.309964$
  \\ 
     & (Convex) &  & \textcolor{red}{$\pm 0.001394$} & $\pm 0.001794$ & $\pm 0.001948$ & $\pm 0.003237$
  \\ 
  \cline{2-7}
     & Regression  & Dimishing  & \textcolor{blue}{$0.308947$} & $0.309462$  & $0.313370$ & $0.313111$
  \\ 
    & (Convex)  &  & $\pm 0.003023$ & $\pm 0.001816$  & \textcolor{red}{$\pm 0.000803$} & $\pm 0.001262$
 \\ 
  \hline
  \multirow{4}{*}{\textit{diabetes}} & Regression & Fixed  &  \textcolor{blue}{$0.494990$}  & $0.495689$ & $0.508570$ & $0.497465$
  \\ 
     & (Convex) &  & $\pm 0.000777$ & \textcolor{red}{$\pm 0.000378$} & $\pm 0.008987$ & $\pm 0.002258$
  \\ 
  \cline{2-7}
     & Regression  & Dimishing  & \textcolor{blue}{$0.494849$} & $0.494986$ & $0.500723$ & $0.498138$
  \\ 
    & (Convex)  &  & \textcolor{red}{$\pm 0.000732$} & $\pm 0.000737$ & $\pm 0.000952$ & $\pm 0.002611$
 \\ 
 \hline
\end{tabular}
\end{center}
\end{table*}

\begin{table*}[!]
\caption{Comparisons using different shuffling schemes to Adam. RR: Random Reshuffle; SO: Shuffle Once; IG: Incremental Gradient
}\label{table_2}
\begin{center}
\begin{tabular}{|l|l|l|l|l|l|l|} 
 \hline
 \textbf{Dataset} & \textbf{Task} & \textbf{LR} & \textbf{Evolved (APR)} & \textbf{RR} & \textbf{SO} & \textbf{IG} \\ 
 \hline
  \multirow{4}{*}{\textit{MNIST}} & Classification & Fixed  &  \textcolor{blue}{$0.000001$}   & \textcolor{blue}{$0.000001$} & \textcolor{blue}{$0.000001$} & \textcolor{blue}{$0.000001$}
  \\ 
     & (Non-Convex) &  & \textcolor{red}{$\pm 0.000001$} & \textcolor{red}{$\pm 0.000001$} & \textcolor{red}{$\pm 0.000001$} & \textcolor{red}{$\pm 0.000001$}
  \\ 
  \cline{2-7}
     & Classification  & Dimishing  & $0.000016$   & $0.000024$ & \textcolor{blue}{$0.000003$} & $0.000005$
  \\ 
     & (Non-Convex) &  & \textcolor{red}{$\pm 0.000001$} & \textcolor{red}{$\pm 0.000001$} & \textcolor{red}{$\pm 0.000001$} & \textcolor{red}{$\pm 0.000001$}
 \\ 
  \hline
  \multirow{4}{*}{\textit{FashionMNIST}} & Classification & Fixed  &  \textcolor{blue}{$0.021357$}  & $0.023210$ & $0.038008$ & $0.038755$
  \\ 
     & (Non-Convex) &  & $\pm 0.002395$ & $\pm 0.000929$ & \textcolor{red}{$\pm 0.000675$} & $\pm 0.001205$
  \\ 
  \cline{2-7}
     & Classification  & Dimishing  & \textcolor{blue}{$0.022238$} & $0.022286$ & $0.040572$ & $0.046295$
  \\ 
    & (Non-Convex) &  & $\pm 0.003201$ & \textcolor{red}{$\pm 0.002430$} & $\pm 0.004566$ & $\pm 0.004722$
 \\ 
 \hline
\end{tabular}
\end{center}
\end{table*}

We compare the proposed shuffling strategy, the LLM-discovered APR scheme (Evolved), against standard baselines including incremental gradient (IG), shuffle-once (SO), and random reshuffling (RR) on a range of convex and nonconvex problems.
All experiments focus on training performance, consistent with our theoretical analysis.
For each dataset, model, learning-rate schedule, and shuffling scheme, we repeat experiments using $5$ random initializations and $5$ independent runs, and report the mean and standard deviation of the best-so-far training loss after 100 epochs.
Complete experimental settings are provided in Appendix~\ref{sec_app_experiment_settings}.

Table~\ref{table_1} reports results for SGD under constant and diminishing learning rates.
Across all convex classification and regression tasks, the evolved shuffling strategy consistently achieves lower best-so-far training loss than IG, SO, and RR.
The gains are most pronounced under constant learning rates, where order-dependent effects are strongest.
In addition to improved mean performance, the evolved scheme typically exhibits smaller standard deviations, indicating more stable optimization trajectories.
This behavior aligns with our theory: block reshuffling reduces prefix-gradient variance constants, leading to smoother within-epoch updates.
Under diminishing learning rates, performance differences become smaller, but the evolved scheme remains competitive and often retains a slight advantage. Table~\ref{table_2} summarizes results for Adam \citep{Kingma2014} on \texttt{MNIST} and \texttt{FashionMNIST}.
Although Adam introduces additional adaptivity beyond data ordering, the evolved shuffling strategy remains competitive and often improves best-so-far training loss relative to IG, SO, and RR.
These results suggest that structured shuffling can be beneficial beyond plain SGD.

Figure~\ref{fig:main_experiments} illustrates training dynamics on representative datasets (\texttt{a9a}, \texttt{boston}, and \texttt{FashionMNIST}) under different learning rate schedules. Under constant learning rates (Figure~\ref{fig:constant_lr_main}), the evolved scheme converges faster and to lower training loss, with visibly tighter confidence bands, reflecting reduced variance across runs. Under diminishing learning rates (Figure~\ref{fig:diminishing_lr_main} ), convergence curves become more similar, though the evolved scheme remains competitive throughout training. In the small-variance regime induced by very small learning rates (Figure~\ref{fig:small_variance_main}), all shuffling schemes exhibit nearly indistinguishable behavior. This empirical observation matches our theoretical results, which show that order sensitivity and permutation-induced variance vanish rapidly as the learning rate decreases. Consequently, differences between shuffling strategies become negligible in this regime.

Overall, the experiments support the theoretical insights of this work.
APR improves both training loss and stability, particularly when learning rates are not overly small.
These benefits are achieved without additional computation or memory overhead, highlighting the practical value of structured data ordering.

\section{Conclusion}\label{sec_conclusion}


We study structured data-ordering strategies for without-replacement stochastic optimization by combining
LLM-guided algorithm discovery with theoretical analysis. Starting from an LLM-discovered reshuffling rule, we isolate two core components, block reshuffling and paired reversal, and show that they address complementary sources of inefficiency in epoch-based SGD. Our results establish that block reshuffling reduces prefix-gradient variance constants within the unified shuffling framework, while paired reversal cancels the leading order-dependent second-order term and reduces order sensitivity. Empirical results on convex and nonconvex benchmarks validate these insights and show consistent improvements over standard shuffling schemes, particularly outside the small learning rate regime.

Overall, this work takes a first step toward understanding how structured data ordering can improve optimization beyond random reshuffling. Our findings indicate that block reshuffling and paired reversal reflect underlying variance and symmetry principles in without-replacement SGD. These principles suggest a broader design space of structured permutations that remain unexplored, and further investigation may lead to new algorithms and sharper convergence theory.



\appendix


\appendix
\section{Proofs for Section~\ref{sec:advantages-main}}
\label{sec:appendix-proofs}

\subsection{Proof of Lemma~\ref{lem:exact-decomp-main}}
\label{app:proof-exact-decomp}

\begin{proof}
Fix $w\in\mathbb{R}^d$ and a block $B_r$. For any $i\in B_r$,
\begin{align*}
    g_i(w)-\nabla F(w) = \bigl(g_i(w)-G_r(w)\bigr)+\bigl(G_r(w)-\nabla F(w)\bigr).
\end{align*}
Taking squared norms and summing over $i\in B_r$ yields
\begin{align*}
\sum_{i\in B_r}\|g_i(w)-\nabla F(w)\|^2
&=
\sum_{i\in B_r}\|g_i(w)-G_r(w)\|^2
+2\Big\langle \sum_{i\in B_r}(g_i(w)-G_r(w)),\,G_r(w)-\nabla F(w)\Big\rangle\\
&\qquad\qquad\qquad
+\sum_{i\in B_r}\|G_r(w)-\nabla F(w)\|^2.
\end{align*}
By the definition of $G_r(w)$, we have $\sum_{i\in B_r}(g_i(w)-G_r(w))=0$, so the cross term vanishes. Hence
\[
\sum_{i\in B_r}\|g_i(w)-\nabla F(w)\|^2
=
\sum_{i\in B_r}\|g_i(w)-G_r(w)\|^2
+
b\,\|G_r(w)-\nabla F(w)\|^2.
\]
Divide by $b$, then average over $r=1,\dots,K$ and use $n=Kb$:
\begin{align*}
\frac{1}{n}\sum_{i=1}^n\|g_i(w)-\nabla F(w)\|^2
&=
\frac{1}{K}\sum_{r=1}^K \frac{1}{b}\sum_{i\in B_r}\|g_i(w)-G_r(w)\|^2
+
\frac{1}{K}\sum_{r=1}^K \|G_r(w)-\nabla F(w)\|^2\\
&=\sigma_{\mathrm{within}}^2(w)+\sigma_{\mathrm{blk}}^2(w),
\end{align*}
which is exactly \eqref{eq:exact-decomp-main}. The inequality $\sigma_{\mathrm{blk}}^2(w)\le \sigma_{\mathrm{ind}}^2(w)$ follows immediately since $\sigma_{\mathrm{within}}^2(w)\ge 0$, and it is strict whenever $\sigma_{\mathrm{within}}^2(w)>0$.
\end{proof}

\subsection{Proof of Proposition~\ref{prop:rr-prefix-main}}
\label{app:proof-rr-prefix}

\begin{proof}
Apply Lemma~\ref{lem:wor-main} with $m=n$ and $X_i=g_i(w)$. Then $\bar X=\nabla F(w)$ and
$\sigma^2=\sigma_{\mathrm{ind}}^2(w)$ by \eqref{eq:def-sigma-ind-main}. For $\widehat g_m(w)=\frac{1}{m}\sum_{t=1}^m g_{\pi(t)}(w)$,
Lemma~\ref{lem:wor-main} yields \eqref{eq:rr-prefix-main}.
\end{proof}

\subsection{Proof of Proposition~\ref{prop:block-prefix-main}}
\label{app:proof-block-prefix}

\begin{proof}
Apply Lemma~\ref{lem:wor-main} with $m=K$ and $X_r=G_r(w)$. Then $\bar X=\nabla F(w)$ and
$\sigma^2=\sigma_{\mathrm{blk}}^2(w)$ by \eqref{eq:def-sigma-blk-main}. For $\widehat G_k(w)=\frac{1}{k}\sum_{j=1}^k G_{\sigma(j)}(w)$,
Lemma~\ref{lem:wor-main} yields \eqref{eq:block-prefix-main}.
\end{proof}

\subsection{Proof of Lemma~\ref{lem:epoch-expansion-main}}
\label{app:proof-epoch-expansion}

\begin{proof}
Fix $w\in\mathbb{R}^d$ and $\pi\in\mathcal{S}_n$. Define the within-epoch iterates
\[
w_0:=w,
\qquad
w_t:=w_{t-1}-\gamma \nabla f_{\pi(t)}(w_{t-1}),\quad t=1,\dots,n,
\qquad
T_\pi(w):=w_n.
\]

\paragraph{Step 1: Telescoping representation.}
From the recursion,
\begin{equation}
T_\pi(w)-w
=
w_n-w_0
=
-\gamma\sum_{t=1}^n \nabla f_{\pi(t)}(w_{t-1}).
\label{eq:telescoping-app}
\end{equation}

\paragraph{Step 2: Second-order expansion of each gradient around $w$.}
Fix $t\in\{1,\dots,n\}$. By the fundamental theorem of calculus applied to the gradient map,
\begin{equation}
\nabla f_{\pi(t)}(w_{t-1})
=
\nabla f_{\pi(t)}(w)
+
\int_0^1 \nabla^2 f_{\pi(t)}\!\big(w+\tau(w_{t-1}-w)\big)\,(w_{t-1}-w)\,d\tau.
\label{eq:grad-ftc-app}
\end{equation}
Note that $g_i:=\nabla f_i(w)$ and $H_i:=\nabla^2 f_i(w)$, $i = 1,\dots,n$. Add and subtract $H_{\pi(t)}(w)(w_{t-1}-w)$ inside the integral to obtain
\begin{equation}
\nabla f_{\pi(t)}(w_{t-1})
=
g_{\pi(t)} + H_{\pi(t)}(w)(w_{t-1}-w) + r_t,
\label{eq:update_grad_f}
\end{equation}
where
\begin{equation}
r_t
:=
\int_0^1
\Big(\nabla^2 f_{\pi(t)}\!\big(w+\tau(w_{t-1}-w)\big)-\nabla^2 f_{\pi(t)}(w)\Big)\,(w_{t-1}-w)\,d\tau.
\label{eq:def-rt-app}
\end{equation}
Using the Hessian-Lipschitz condition \eqref{eq:A3-main},
\[
\Big\|\nabla^2 f_{\pi(t)}\!\big(w+\tau(w_{t-1}-w)\big)-\nabla^2 f_{\pi(t)}(w)\Big\|
\le
\rho\,\tau\|w_{t-1}-w\|,
\]
hence
\begin{equation}
\|r_t\|
\le \int_0^1
\Big\| \nabla^2 f_{\pi(t)}\!\big(w+\tau(w_{t-1}-w)\big)-\nabla^2 f_{\pi(t)}(w) \Big\| \, \| w_{t-1}-w \| \,d\tau \le
\int_0^1 \rho\,\tau \|w_{t-1}-w\|^2\,d\tau
=
\frac{\rho}{2}\|w_{t-1}-w\|^2.
\label{eq:rt-bound-app}
\end{equation}

Substituting \eqref{eq:update_grad_f} into \eqref{eq:telescoping-app} gives
\begin{equation}
T_\pi(w)-w
=
-\gamma\sum_{t=1}^n g_{\pi(t)}
-\gamma\sum_{t=1}^n H_{\pi(t)}(w)(w_{t-1}-w)
-\gamma\sum_{t=1}^n r_t.
\label{eq:after-taylor-app}
\end{equation}
Since $\sum_{t=1}^n g_{\pi(t)}=\sum_{i=1}^n g_i$ for any permutation, the first term is the desired first-order term.

\paragraph{Step 3: First-order expansion of $w_{t-1}-w$ and collection of second-order terms.}
From the recursion,
\begin{equation}
w_{t-1}-w
=
-\gamma\sum_{s=1}^{t-1}\nabla f_{\pi(s)}(w_{s-1}).
\label{eq:w-diff-exact-app}
\end{equation}

We have
\[
\nabla f_{\pi(s)}(w_{s-1})
=
g_{\pi(s)} + \big(\nabla f_{\pi(s)}(w_{s-1})-\nabla f_{\pi(s)}(w)\big).
\]
Define the accumulated error
\begin{equation}
\delta_{t-1}
:=
-\gamma\sum_{s=1}^{t-1}\big(\nabla f_{\pi(s)}(w_{s-1})-\nabla f_{\pi(s)}(w)\big),
\label{eq:def-delta-app}
\end{equation}
so that
\begin{equation}
w_{t-1}-w
=
-\gamma\sum_{s=1}^{t-1} g_{\pi(s)} + \delta_{t-1}.
\label{eq:w-diff-first-app}
\end{equation}
Plugging \eqref{eq:w-diff-first-app} into the Hessian term in \eqref{eq:after-taylor-app} yields
\begin{align}
-\gamma\sum_{t=1}^n H_{\pi(t)}(w)(w_{t-1}-w)
&=
-\gamma\sum_{t=1}^n H_{\pi(t)}(w)\Big(-\gamma\sum_{s=1}^{t-1} g_{\pi(s)}+\delta_{t-1}\Big)
\nonumber\\
&=
\gamma^2\sum_{t=1}^n\sum_{s=1}^{t-1} H_{\pi(t)}(w)\,g_{\pi(s)}
-\gamma\sum_{t=1}^n H_{\pi(t)}(w)\,\delta_{t-1}
\nonumber\\
&=
\gamma^2\sum_{1\le s<t\le n} H_{\pi(t)}\,g_{\pi(s)}
-\gamma\sum_{t=1}^n H_{\pi(t)}(w)\,\delta_{t-1}.
\label{eq:second-order-plus-error-app}
\end{align}

\paragraph{Step 4: Define and bound the remainder.}
Combine \eqref{eq:after-taylor-app} and \eqref{eq:second-order-plus-error-app} and define
\begin{equation}
R_\pi(w)
:=
-\gamma\sum_{t=1}^n r_t
-\gamma\sum_{t=1}^n H_{\pi(t)}(w)\,\delta_{t-1}.
\label{eq:def-R-app}
\end{equation}
Then \eqref{eq:epoch-expansion-main} holds.

We now bound $\|R_\pi(w)\|$.
First, using \eqref{eq:rt-bound-app},
\begin{equation}
\gamma\sum_{t=1}^n \|r_t\|
\le
\frac{\rho\gamma}{2}\sum_{t=1}^n \|w_{t-1}-w\|^2.
\label{eq:R1-app}
\end{equation}
By \eqref{eq:A2-main}, $\|w_t-w_{t-1}\|=\gamma\|\nabla f_{\pi(t)}(w_{t-1})\|\le \gamma G$, hence
\begin{equation}
\|w_{t-1}-w\|
\le
\sum_{j=1}^{t-1}\|w_j-w_{j-1}\|
\le
(t-1)\gamma G.
\label{eq:w-dist-bound-app}
\end{equation}
Therefore,
\[
\sum_{t=1}^n \|w_{t-1}-w\|^2
\le
\gamma^2 G^2\sum_{t=1}^n (t-1)^2
\le
\gamma^2 G^2\,n^3,
\]
and \eqref{eq:R1-app} gives
\begin{equation}
\gamma\sum_{t=1}^n \|r_t\|
\le
\frac{\rho}{2}\gamma^3 G^2 n^3.
\label{eq:R1-final-app}
\end{equation}

Second, bound the term involving $\delta_{t-1}$. By \eqref{eq:def-delta-app} and \eqref{eq:A1-main},
\[
\|\delta_{t-1}\|
\le
\gamma\sum_{s=1}^{t-1}\|\nabla f_{\pi(s)}(w_{s-1})-\nabla f_{\pi(s)}(w)\|
\le
\gamma\sum_{s=1}^{t-1} L\|w_{s-1}-w\|.
\]
Using \eqref{eq:w-dist-bound-app},
\[
\|\delta_{t-1}\|
\le
\gamma\sum_{s=1}^{t-1} L (s-1)\gamma G
=
LG\gamma^2\frac{(t-1)(t-2)}{2}
\le
\frac{LG}{2}\gamma^2 (t-1)^2.
\]
Moreover, since $\nabla f_i$ is $L$-Lipschitz and $f_i$ is twice differentiable, we have $\|H_i(x)\|\le L$ for all $x$, hence $\|H_{\pi(t)}(w)\|\le L$.
Thus
\begin{align}
\gamma\sum_{t=1}^n \|H_{\pi(t)}(w)\delta_{t-1}\|
&\le
\gamma\sum_{t=1}^n L\|\delta_{t-1}\|
\le
\gamma\sum_{t=1}^n L\cdot \frac{LG}{2}\gamma^2 (t-1)^2
=
\frac{L^2G}{2}\gamma^3\sum_{t=1}^n (t-1)^2
\le
\frac{L^2G}{2}\gamma^3 n^3.
\label{eq:R2-final-app}
\end{align}

Finally, combining \eqref{eq:def-R-app}, \eqref{eq:R1-final-app}, and \eqref{eq:R2-final-app},
\[
\|R_\pi(w)\|
\le
\left(\frac{\rho G^2}{2}+\frac{L^2G}{2}\right)\gamma^3 n^3
=
C_{\mathrm{rem}}\,\gamma^3 n^3,
\]
which matches \eqref{eq:rem-bound-main}.
\end{proof}

\subsection{Proof of Theorem~\ref{thm:paired-rev-main}}
\label{app:proof-paired-rev}

\begin{proof}
Start from the expansions in Lemma~\ref{lem:epoch-expansion-main} for $\pi$ and for $\mathrm{Rev}(\pi)$:
\[
T_\pi(w)
=
w-\gamma\sum_{i=1}^n g_i
+\gamma^2 B_\pi
+R_\pi(w),
\qquad
T_{\mathrm{Rev}(\pi)}(w)
=
w-\gamma\sum_{i=1}^n g_i
+\gamma^2 B_{\mathrm{Rev}(\pi)}
+R_{\mathrm{Rev}(\pi)}(w),
\]
where
\[
B_\pi:=\sum_{1\le s<t\le n} H_{\pi(t)}g_{\pi(s)}.
\]
Averaging gives
\[
\bar T_\pi(w)=\frac12\big(T_\pi(w)+T_{\mathrm{Rev}(\pi)}(w)\big)
=
w-\gamma\sum_{i=1}^n g_i
+\frac{\gamma^2}{2}\big(B_\pi+B_{\mathrm{Rev}(\pi)}\big)
+\bar R_\pi(w),
\]
where $\bar R_\pi(w):=\frac12(R_\pi(w)+R_{\mathrm{Rev}(\pi)}(w))$ so
$\|\bar R_\pi(w)\|\le C_{\mathrm{rem}}\gamma^3 n^3$.

It remains to show $B_\pi+B_{\mathrm{Rev}(\pi)}=\sum_{i\neq j}H_i g_j$, which is independent of $\pi$.
Using the index substitution $t'=n+1-t$, $s'=n+1-s$ (so that $s<t \Leftrightarrow t'<s'$),
\[
B_{\mathrm{Rev}(\pi)}
=
\sum_{1\le s<t\le n} H_{\mathrm{Rev}(\pi)(t)}\,g_{\mathrm{Rev}(\pi)(s)}
=
\sum_{1\le s<t\le n} H_{\pi(n+1-t)}\,g_{\pi(n+1-s)}
=
\sum_{1\le t'<s'\le n} H_{\pi(t')}\,g_{\pi(s')}.
\]
Therefore,
\[
B_\pi+B_{\mathrm{Rev}(\pi)}
=
\sum_{1\le s<t\le n} H_{\pi(t)}g_{\pi(s)}
+
\sum_{1\le t<s\le n} H_{\pi(t)}g_{\pi(s)}
=
\sum_{\substack{1\le s,t\le n\\ s\neq t}} H_{\pi(t)}g_{\pi(s)}
=
\sum_{i\neq j} H_i g_j,
\]
independent of $\pi$. This proves the expansion \eqref{eq:paired-rev-expansion-main}.

Finally, for any $\pi,\pi'$,
\[
\bar T_\pi(w)-\bar T_{\pi'}(w)=\bar R_\pi(w)-\bar R_{\pi'}(w),
\]
so by the triangle inequality and $\|\bar R_\pi(w)\|\le C_{\mathrm{rem}}\gamma^3 n^3$,
\[
\|\bar T_\pi(w)-\bar T_{\pi'}(w)\|
\le
\|\bar R_\pi(w)\|+\|\bar R_{\pi'}(w)\|
\le
2C_{\mathrm{rem}}\gamma^3 n^3,
\]
which is \eqref{eq:order-sensitivity-main}.
\end{proof}

\subsection{Proof of Theorem~\ref{thm:order-upper-main}}
\label{app:proof-order-upper}

\begin{proof}
Fix $w\in\mathbb{R}^d$ and two permutations $\pi,\pi'\in\mathcal{S}_n$.
By Lemma~\ref{lem:epoch-expansion-main}, we have the second-order expansions
\begin{align*}
T_\pi(w)
&=
w-\gamma\sum_{i=1}^n g_i
+\gamma^2 B_\pi(w)
+R_\pi(w),
\\
T_{\pi'}(w)
&=
w-\gamma\sum_{i=1}^n g_i
+\gamma^2 B_{\pi'}(w)
+R_{\pi'}(w),
\end{align*}
where
\[
B_\pi(w):=\sum_{1\le s<t\le n} H_{\pi(t)}\,g_{\pi(s)},
\qquad
\|R_\pi(w)\|\le C_{\mathrm{rem}}\gamma^3 n^3,
\qquad
\|R_{\pi'}(w)\|\le C_{\mathrm{rem}}\gamma^3 n^3.
\]
Subtracting yields
\[
T_\pi(w)-T_{\pi'}(w)
=
\gamma^2\big(B_\pi(w)-B_{\pi'}(w)\big)
+
\big(R_\pi(w)-R_{\pi'}(w)\big).
\]
Taking norms and using the triangle inequality gives
\begin{equation}
\|T_\pi(w)-T_{\pi'}(w)\|
\le
\gamma^2\|B_\pi(w)-B_{\pi'}(w)\|
+
\|R_\pi(w)\|+\|R_{\pi'}(w)\|.
\label{eq:U1-app}
\end{equation}

It remains to bound $\|B_\pi(w)\|$ uniformly over $\pi$.
Since each $f_i$ is twice differentiable and $\nabla f_i$ is $L$-Lipschitz by \eqref{eq:A1-main},
we have $\|H_i(x)\|\le L$ for all $x$ (in particular at $x=w$). Moreover, \eqref{eq:A2-main} implies $\|g_i\|=\|\nabla f_i(w)\|\le G$.
Therefore,
\begin{align*}
\|B_\pi(w)\|
&=
\left\|\sum_{1\le s<t\le n} H_{\pi(t)}\,g_{\pi(s)}\right\|
\le
\sum_{1\le s<t\le n} \|H_{\pi(t)}\|\,\|g_{\pi(s)}\|
\le
\sum_{1\le s<t\le n} L\,G
=
LG\cdot \frac{n(n-1)}{2}.
\end{align*}
By the same bound for $\pi'$, we get
\begin{equation}
\|B_\pi(w)-B_{\pi'}(w)\|
\le
\|B_\pi(w)\|+\|B_{\pi'}(w)\|
\le
LG\,n(n-1).
\label{eq:U2-app}
\end{equation}

Finally, combining \eqref{eq:U1-app}--\eqref{eq:U2-app} with $\|R_\pi(w)\|,\|R_{\pi'}(w)\|\le C_{\mathrm{rem}}\gamma^3 n^3$,
\[
\|T_\pi(w)-T_{\pi'}(w)\|
\le
LG\,\gamma^2\,n(n-1)
+
2C_{\mathrm{rem}}\gamma^3 n^3,
\]
which is exactly \eqref{eq:order-upper-main}.
\end{proof}

\subsection{Proof of Proposition~\ref{prop:order-sens-lb-main}}
\label{app:proof-order-lb}

\begin{proof}
Let $n=2$ and consider the one-dimensional functions
\[
f_1(x)=\frac{a}{2}x^2,
\qquad
f_2(x)=bx,
\]
with $a\neq 0$ and $b\neq 0$. Fix any $w\neq 0$ and stepsize $\gamma>0$.

For $\pi=(1,2)$:
\[
x_1=w-\gamma\nabla f_1(w)=w-\gamma a w=(1-\gamma a)w,
\qquad
x_2=x_1-\gamma\nabla f_2(x_1)=(1-\gamma a)w-\gamma b,
\]
so $T_{(1,2)}(w)=(1-\gamma a)w-\gamma b$.

For $\pi'=(2,1)$:
\[
x_1'=w-\gamma\nabla f_2(w)=w-\gamma b,
\qquad
x_2'=x_1'-\gamma\nabla f_1(x_1')
=(w-\gamma b)-\gamma a(w-\gamma b)
=(1-\gamma a)w-\gamma b+\gamma^2 ab,
\]
so $T_{(2,1)}(w)=(1-\gamma a)w-\gamma b+\gamma^2 ab$.

Thus,
\[
T_{(2,1)}(w)-T_{(1,2)}(w)=\gamma^2 ab,
\]
and therefore
\[
\big|T_{(2,1)}(w)-T_{(1,2)}(w)\big|=|ab|\,\gamma^2.
\]
This establishes the claim with $c=|ab|$.
\end{proof}

\subsection{Proof of Theorem~\ref{thm:perm-var-upper-main}}
\label{app:proof-permvar-upper}

\begin{proof}
From Lemma~\ref{lem:epoch-expansion-main}, write
\[
T_\pi(w)=A(w)+\gamma^2 B_\pi(w)+R_\pi(w),
\qquad
A(w):=w-\gamma\sum_{i=1}^n g_i,
\qquad
B_\pi(w):=\sum_{1\le s<t\le n} H_{\pi(t)}g_{\pi(s)}.
\]
Since $A(w)$ is deterministic, subtracting expectations yields
\[
T_\pi(w)-\mathbb{E}_\pi[T_\pi(w)]
=
\gamma^2\big(B_\pi(w)-\mathbb{E}B_\pi(w)\big)
+
\big(R_\pi(w)-\mathbb{E}R_\pi(w)\big).
\]
Using $\|x+y\|^2\le 2\|x\|^2+2\|y\|^2$ and $\mathbb{E}\|R_\pi-\mathbb{E}R_\pi\|^2\le \mathbb{E}\|R_\pi\|^2$,
\begin{align*}
\mathrm{Var}_\pi\!\big(T_\pi(w)\big)
&=
\mathbb{E}_\pi\big\|T_\pi(w)-\mathbb{E}_\pi[T_\pi(w)]\big\|^2\\
&\le
2\gamma^4\,\mathbb{E}_\pi\big\|B_\pi(w)-\mathbb{E}_\pi[B_\pi(w)]\big\|^2
+
2\,\mathbb{E}_\pi\|R_\pi(w)\|^2\\
&=
2\gamma^4\,\mathrm{Var}_\pi\!\big(B_\pi(w)\big)
+
2\,\mathbb{E}_\pi\|R_\pi(w)\|^2
\;\le\;
2\gamma^4\,\mathrm{Var}_\pi\!\big(B_\pi(w)\big)
+
2C_{\mathrm{rem}}^2\gamma^6 n^6,
\end{align*}
which gives \eqref{eq:var-T-upper-main}.

For the paired-reversal map, from Theorem~\ref{thm:paired-rev-main} we have
\[
\bar T_\pi(w)=\bar A(w)+\bar R_\pi(w),
\]
where $\bar A(w)$ is deterministic (independent of $\pi$). Hence
\[
\mathrm{Var}_\pi\!\big(\bar T_\pi(w)\big)
=
\mathbb{E}_\pi\big\|\bar R_\pi(w)-\mathbb{E}_\pi[\bar R_\pi(w)]\big\|^2
\le
\mathbb{E}_\pi\|\bar R_\pi(w)\|^2
\le
C_{\mathrm{rem}}^2\gamma^6 n^6,
\]
which is \eqref{eq:var-Tbar-upper-main}.
\end{proof}

\subsection{Proof of Proposition~\ref{prop:permvar-lb-main}}
\label{app:proof-permvar-lb}

\begin{proof}
We use a fully explicit $n=2$, one-dimensional instance.
Let
\[
f_1(x)=\frac{a}{2}x^2,
\qquad
f_2(x)=bx,
\]
with $a\neq 0$ and $b\neq 0$. Fix $w\neq 0$ and stepsize $\gamma>0$. Let $\pi$ be uniform over $\mathcal{S}_2$, i.e.,
$\mathbb{P}(\pi=(1,2))=\mathbb{P}(\pi=(2,1))=1/2$.

\paragraph{Bernoulli second-order term.}
For $n=2$, the second-order term is
\[
B_\pi(w)=H_{\pi(2)}(w)\,g_{\pi(1)}(w),
\]
where $g_1(w)=aw$, $H_1(w)=a$, $g_2(w)=b$, $H_2(w)=0$.
Thus,
\[
B_{(1,2)}(w)=H_2(w)g_1(w)=0,
\qquad
B_{(2,1)}(w)=H_1(w)g_2(w)=ab,
\]
so $B_\pi(w)$ is Bernoulli:
\[
B_\pi(w)=
\begin{cases}
0, & \text{w.p. }1/2,\\
ab, & \text{w.p. }1/2.
\end{cases}
\]
Hence
\[
\mathbb{E}B_\pi(w)=\frac{ab}{2},
\qquad
\mathrm{Var}_\pi(B_\pi(w))=\frac{(ab)^2}{4}.
\]

\paragraph{A variance perturbation inequality.}
We use the following inequality: for any square-integrable random vectors $X,Y$,
\begin{equation}
\mathrm{Var}(X+Y)\ge \frac12\,\mathrm{Var}(X)-\mathbb{E}\|Y\|^2.
\label{eq:var-perturb-app}
\end{equation}
To see this, let $\mu_X=\mathbb{E}X$, $\mu_Y=\mathbb{E}Y$ and write
$\mathrm{Var}(X+Y)=\mathbb{E}\|(X-\mu_X)+(Y-\mu_Y)\|^2$.
Using $\|u+v\|^2\ge \frac12\|u\|^2-\|v\|^2$ yields \eqref{eq:var-perturb-app}.

\paragraph{Lower bound for $\mathrm{Var}_\pi(T_\pi(w))$.}
From Lemma~\ref{lem:epoch-expansion-main}, for $n=2$ we have the expansion
\[
T_\pi(w)=A(w)+\gamma^2 B_\pi(w)+R_\pi(w),
\]
where $A(w)$ is deterministic and $\|R_\pi(w)\|\le C_{\mathrm{rem}}\gamma^3 n^3 = 8C_{\mathrm{rem}}\gamma^3$.
Subtracting expectations removes $A(w)$, so
\[
T_\pi(w)-\mathbb{E}T_\pi(w)
=
\gamma^2(B_\pi(w)-\mathbb{E}B_\pi(w))
+
(R_\pi(w)-\mathbb{E}R_\pi(w)).
\]
Apply \eqref{eq:var-perturb-app} with $X=\gamma^2(B_\pi-\mathbb{E}B_\pi)$ and
$Y=R_\pi-\mathbb{E}R_\pi$. Then
\begin{align*}
\mathrm{Var}_\pi(T_\pi(w))
&=\mathrm{Var}_\pi\big(\gamma^2 B_\pi(w)+R_\pi(w)\big)\\
&\ge \frac12\,\mathrm{Var}_\pi\big(\gamma^2 B_\pi(w)\big) - \mathbb{E}_\pi\|R_\pi(w)-\mathbb{E}R_\pi(w)\|^2\\
&\ge \frac12\,\gamma^4\,\mathrm{Var}_\pi\big(B_\pi(w)\big) - \mathbb{E}_\pi\|R_\pi(w)\|^2\\
&\ge \frac12\,\gamma^4\cdot \frac{(ab)^2}{4} - (8C_{\mathrm{rem}}\gamma^3)^2\\
&= \frac{(ab)^2}{8}\gamma^4 - 64C_{\mathrm{rem}}^2\gamma^6.
\end{align*}
In particular, for sufficiently small $\gamma$ (e.g., $64C_{\mathrm{rem}}^2\gamma^2\le (ab)^2/16$),
\[
\mathrm{Var}_\pi(T_\pi(w))\ge \frac{(ab)^2}{16}\gamma^4,
\]
which proves $\mathrm{Var}_\pi(T_\pi(w))=\Omega(\gamma^4)$ for this instance. Together with the general upper bound
$\mathrm{Var}_\pi(T_\pi(w))=O(\gamma^4)+O(\gamma^6)$ from the second-order representation, this establishes
$\mathrm{Var}_\pi(T_\pi(w))=\Theta(\gamma^4)$ for fixed $n=2$.
\end{proof}

\section{More Experimental Results}\label{sec_app_more_experiments}

\subsection{Datasets and Models}
\label{sec:datasets-models}

We evaluate shuffling strategies across a diverse collection of benchmark datasets,
covering both classification and regression tasks for convex and non-convex objectives.
All models are intentionally lightweight, so that performance differences primarily reflect
the effect of data ordering rather than architectural complexity.

\paragraph{Classification datasets.}
We consider the following classification benchmarks:

\begin{itemize}
    \item \textbf{a9a} and \textbf{breast\_cancer} from \texttt{LIBSVM} \citep{LIBSVM}.  
    These datasets are treated as binary classification problems and are trained using
    linear logistic regression models.
    Features are standardized, and the objective is the regularized log-loss.
    Optimization is performed using stochastic gradient descent (SGD) with constant and diminishing learning rates.
    These tasks correspond to convex finite-sum optimization problems and are well suited
    for validating theoretical predictions on variance and order sensitivity.

    \item \textbf{digits} \citep{kaynak1995methods}. 
    The Digits dataset consists of $8\times 8$ grayscale images.
    We binarize the labels (digits $>5$ vs.\ $\le 5$) and use a logistic regression model,
    yielding a convex objective.
    This dataset serves as a low-dimensional image-based benchmark that remains analytically tractable.

    \item \textbf{MNIST} \citep{MNIST} and \textbf{FashionMNIST} \citep{xiao2017fashion}.
    These datasets contain $28\times 28$ grayscale images with 10 classes both with $60,000$ samples.
    We use a simple multilayer perceptron (MLP) with two hidden layers: $784 \;\rightarrow\; 256 \;\rightarrow\; 128 \;\rightarrow\; 10$, 
    with ReLU activations.
    The networks are trained using cross-entropy loss and Adam.
    No convolutional layers, batch normalization, dropout, or data augmentation are used.
    This results in non-convex objectives while keeping the architecture minimal, allowing us
    to isolate the impact of shuffling on optimization dynamics.
\end{itemize}

\paragraph{Regression datasets.}
We also evaluate on standard regression benchmarks:

\begin{itemize}
    \item \textbf{California Housing}, \textbf{Boston Housing}, and \textbf{Diabetes} from \texttt{sklearn.datasets} \citep{pace1997sparse,buitinck2013api}.
    These datasets involve predicting continuous targets from tabular features.
    We use linear regression models trained with mean squared error (MSE) loss and SGD.
    All features are standardized.
    These problems are convex and allow us to test shuffling strategies in regression settings
    complementary to classification.
\end{itemize}

\paragraph{Training process.}
For all experiments, models are trained for $100$ epochs. To account for randomness arising from both initialization and data ordering, each configuration is evaluated using $5$ different random initializations and $5$ independent runs, resulting in $25$ trials in total. We report the mean and standard deviation of the best-so-far training loss across these trials.

For SGD-based experiments, we evaluate a range of constant learning rates 
\[
\{0.5,\, 0.1,\, 0.05,\, 0.01,\, 0.005,\, 0.001,\, 0.0005,\, 0.0001\},
\]
as well as a diminishing learning-rate schedule of the form
$\gamma_t = \gamma_0 / t^{\alpha}$ with $\alpha = 0.5$ with the mini-batch sizes of $\{64, 128, 256\}$.
This range allows us to study shuffling behavior across both moderate and small learning rate regimes,
including the small variance regime predicted by theory. For each dataset and shuffling scheme, we report results corresponding to the best performing constant learning rate selected from this grid, as well as the diminishing schedule.
This process reflects standard practice and ensures that comparisons are not biased by suboptimal step-size choices.

For MNIST and FashionMNIST, we use the Adam optimizer with learning rate $0.001$,
following standard practice.
Unless otherwise specified, all other hyperparameters are kept fixed across shuffling schemes,
ensuring that differences in performance are attributable solely to the data ordering strategy.

\FloatBarrier
\subsection{Large Learning Rates}

This appendix provides additional experimental results that complement and support the findings reported in the main text.

Across these additional experiments, we observe trends consistent with the conclusions drawn in the main text. In particular, the LLM-discovered shuffling strategy continues to outperform or match standard shuffling schemes in terms of best-so-far training loss, while often exhibiting reduced variability across runs. These results further support our claims that structured data ordering can improve optimization performance beyond random reshuffling, especially outside the small learning rate regime.

We emphasize that no new algorithmic components are introduced in this appendix.
Rather, these results serve to demonstrate the robustness and generality of the empirical behavior discussed in the main paper. Taken together with the main results, the experiments in this appendix strengthen the evidence that block reshuffling and reversal-based structure are key contributors to the observed performance gains.

\begin{figure}[!htbp]
    \centering

    \begin{subfigure}[h]{\textwidth}
        \centering
        \includegraphics[width= 0.32\textwidth]{Figs/a9a_constant_large.png}
        \includegraphics[width= 0.32\textwidth]{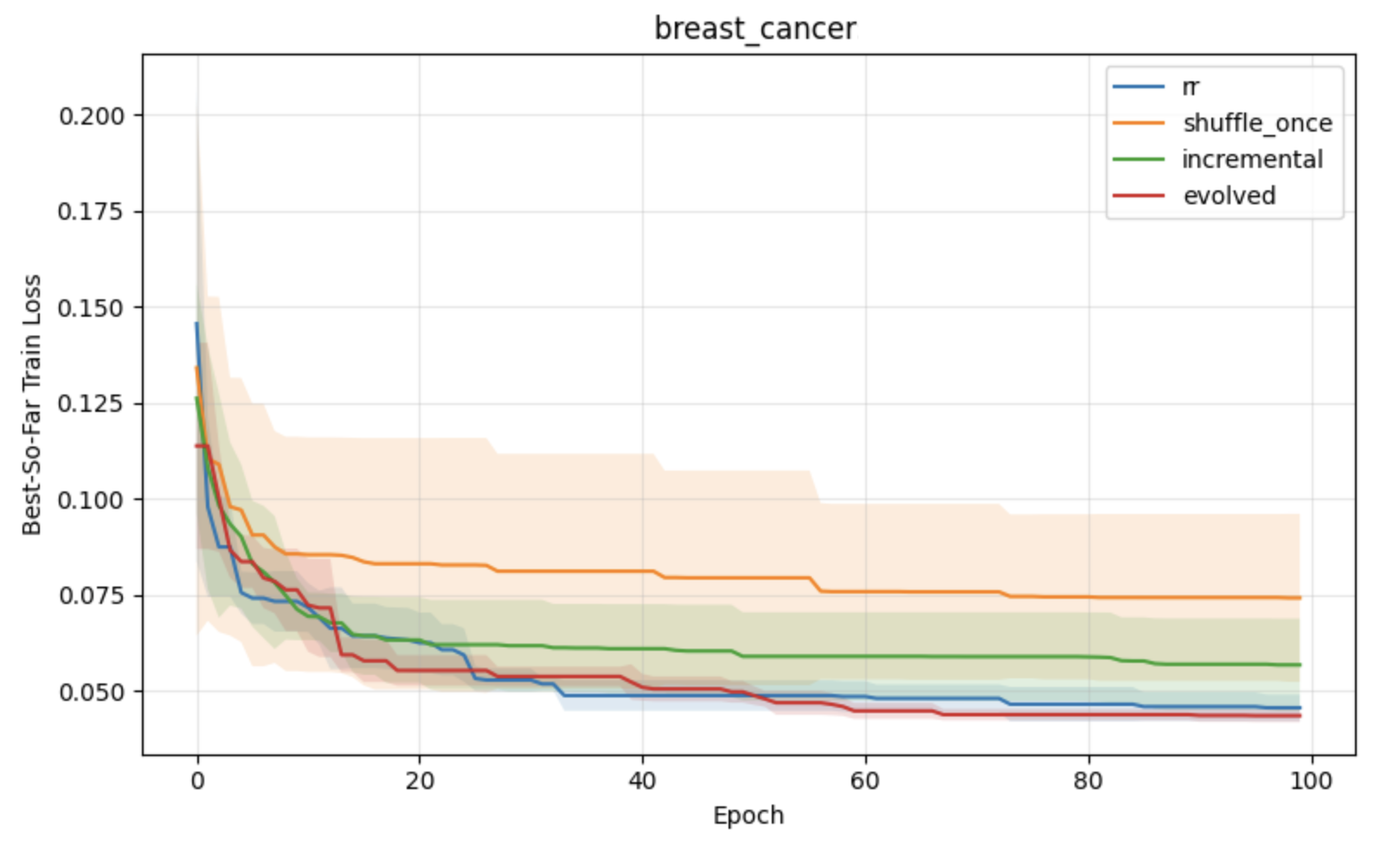}
        \includegraphics[width= 0.32\textwidth]{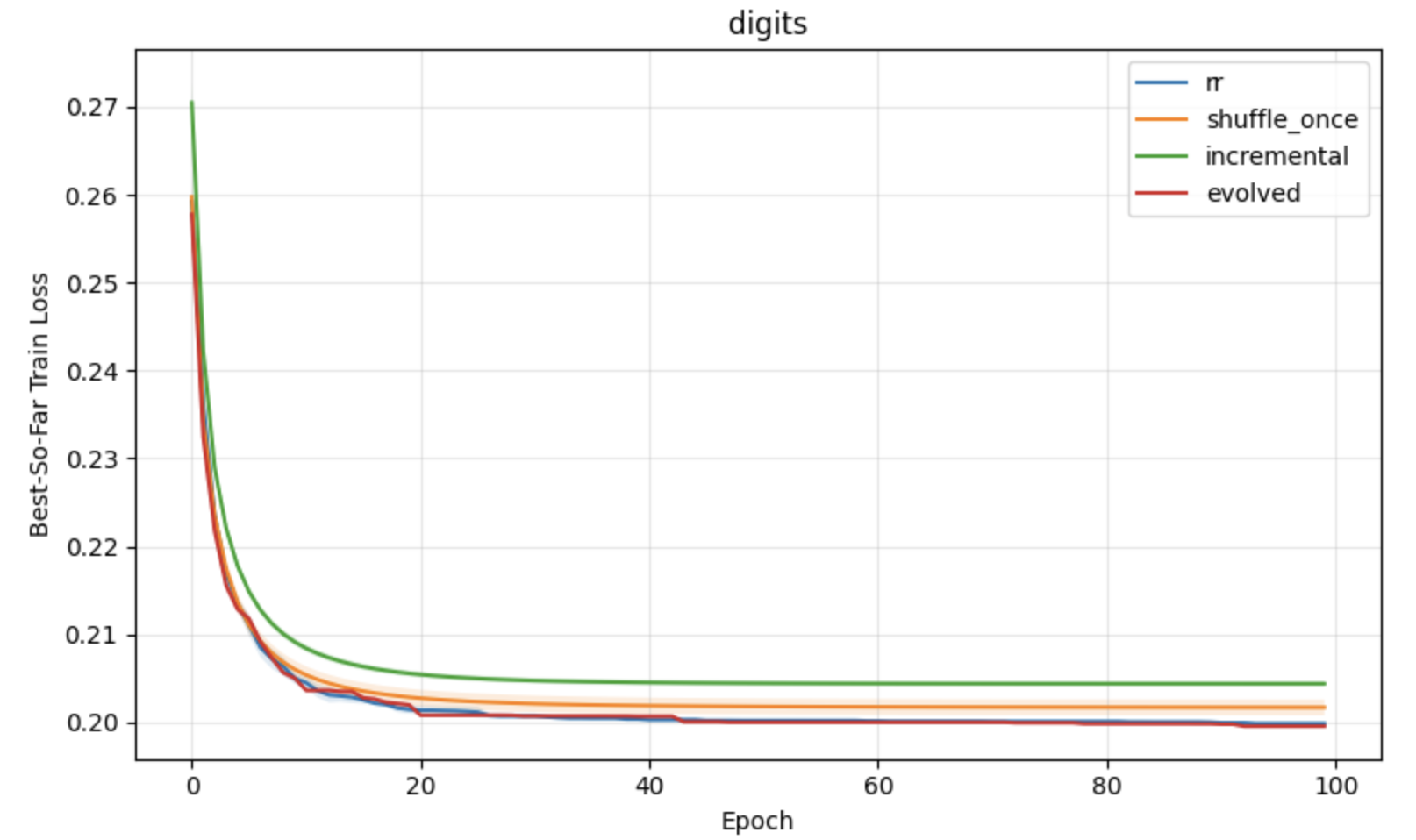}
        \caption{Classification Datasets - Constant Learning Rates}
        \label{fig_classification_constant_lr}
    \end{subfigure}

    \vspace{0.5em}

    \begin{subfigure}[h]{\textwidth}
        \centering
        \includegraphics[width= 0.32\textwidth]{Figs/a9a_poly_large.png}
        \includegraphics[width= 0.32\textwidth]{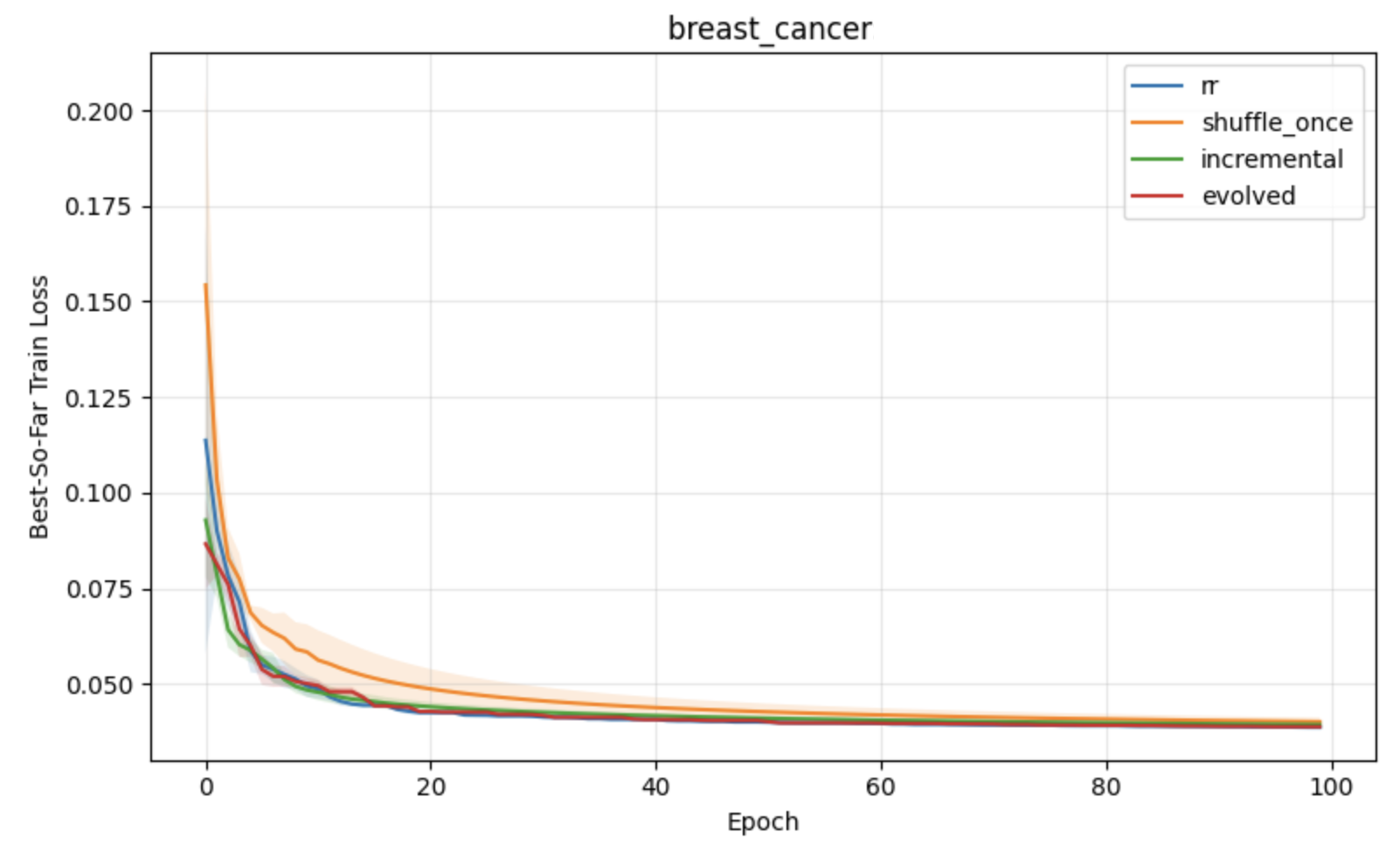}
        \includegraphics[width= 0.32\textwidth]{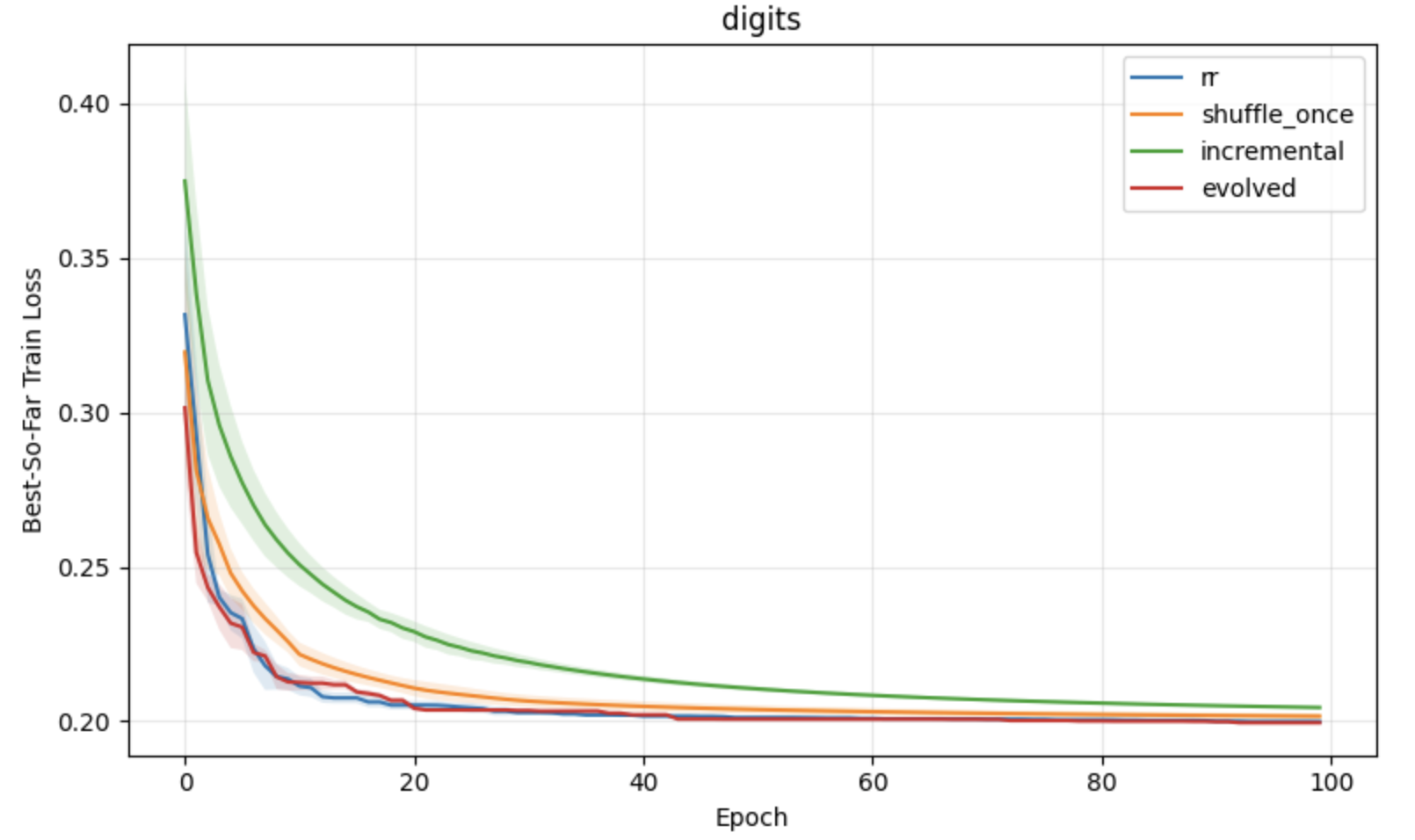}
        \caption{Classification Datasets - Diminishing Learning Rates}
        \label{fig_classification_diminishing_lr}
    \end{subfigure}

    \caption{Classification Datasets}
    \label{fig:experiments_classification_lr_large}
\end{figure}

\begin{figure}[!htbp]
    \centering

    \begin{subfigure}[h]{\textwidth}
        \centering
        \includegraphics[width= 0.32\textwidth]{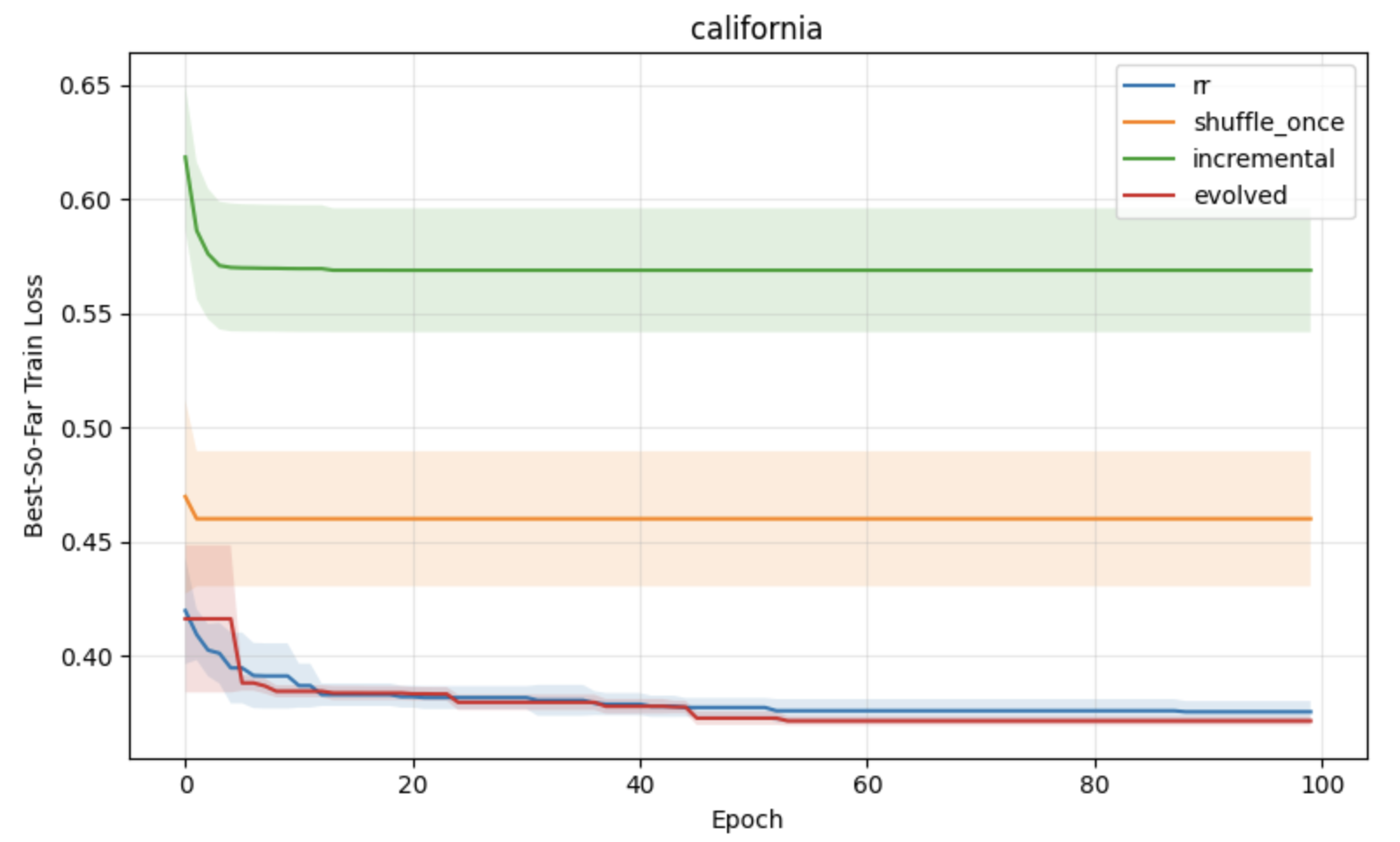}
        \includegraphics[width= 0.32\textwidth]{Figs/boston_constant_large.png}
        \includegraphics[width= 0.32\textwidth]{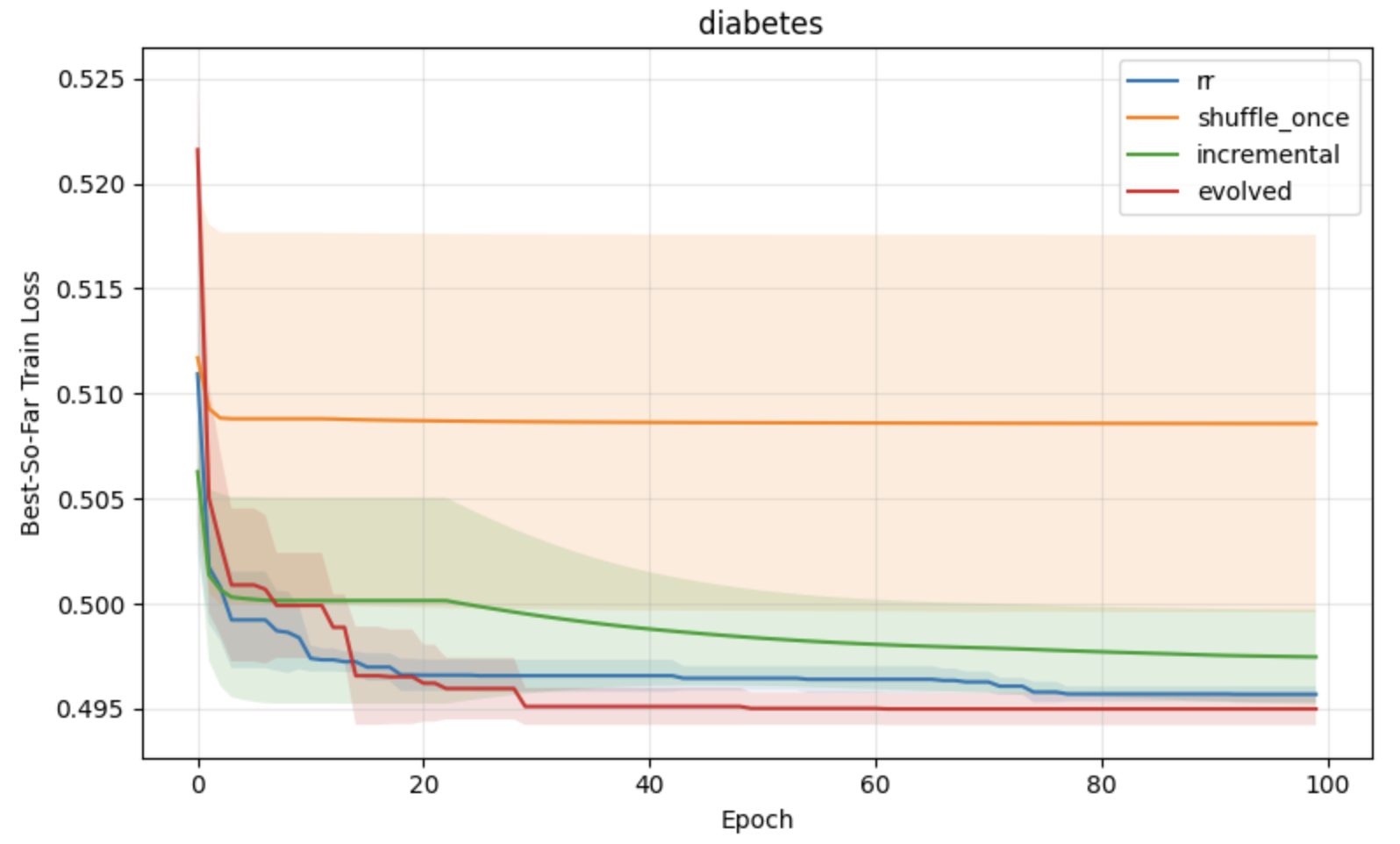}
        \caption{Regression Datasets - Constant Learning Rates}
        \label{fig_regression_constant_lr}
    \end{subfigure}

    \vspace{0.5em}

    \begin{subfigure}[h]{\textwidth}
        \centering
        \includegraphics[width= 0.32\textwidth]{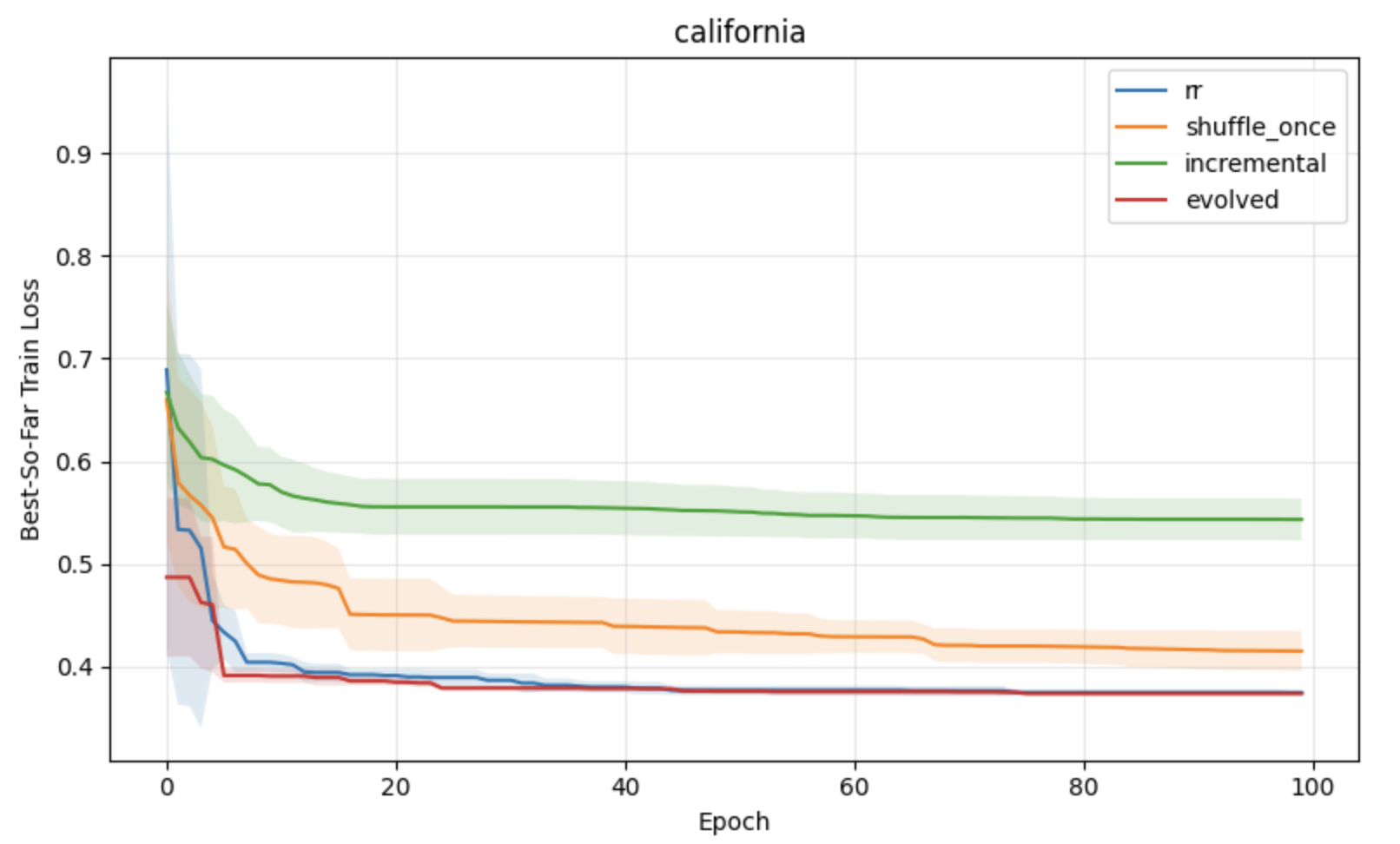}
        \includegraphics[width= 0.32\textwidth]{Figs/boston_poly_large.png}
        \includegraphics[width= 0.32\textwidth]{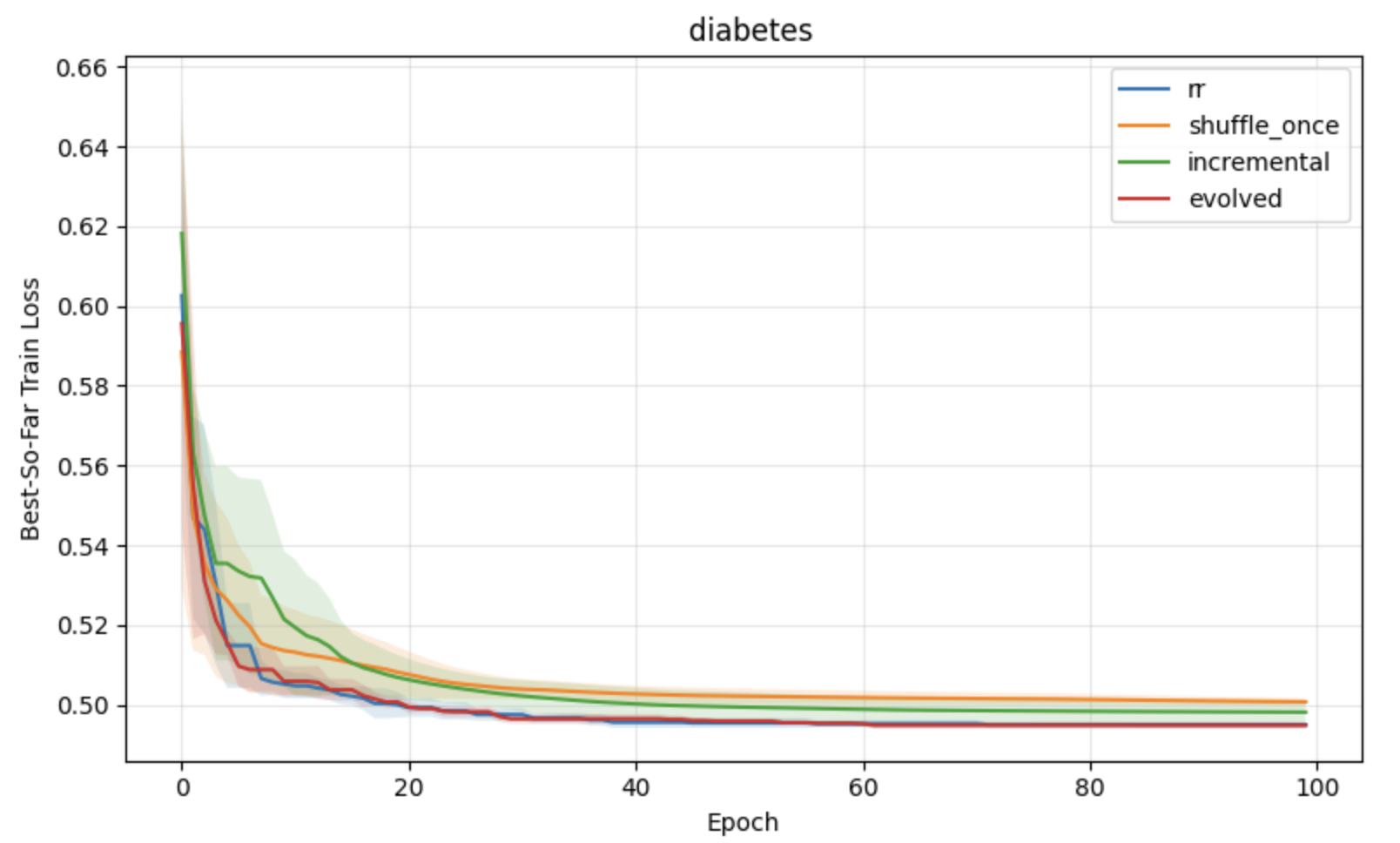}
        \caption{Regression Datasets - Diminishing Learning Rates}
        \label{fig_regression_diminishing_lr}
    \end{subfigure}

    \caption{Regression Datasets}
    \label{fig:experiments_regression_lr_large}
\end{figure}

\begin{figure}[!htbp]
    \centering

    \begin{subfigure}[h]{\textwidth}
        \centering
        \includegraphics[width= 0.45\textwidth]{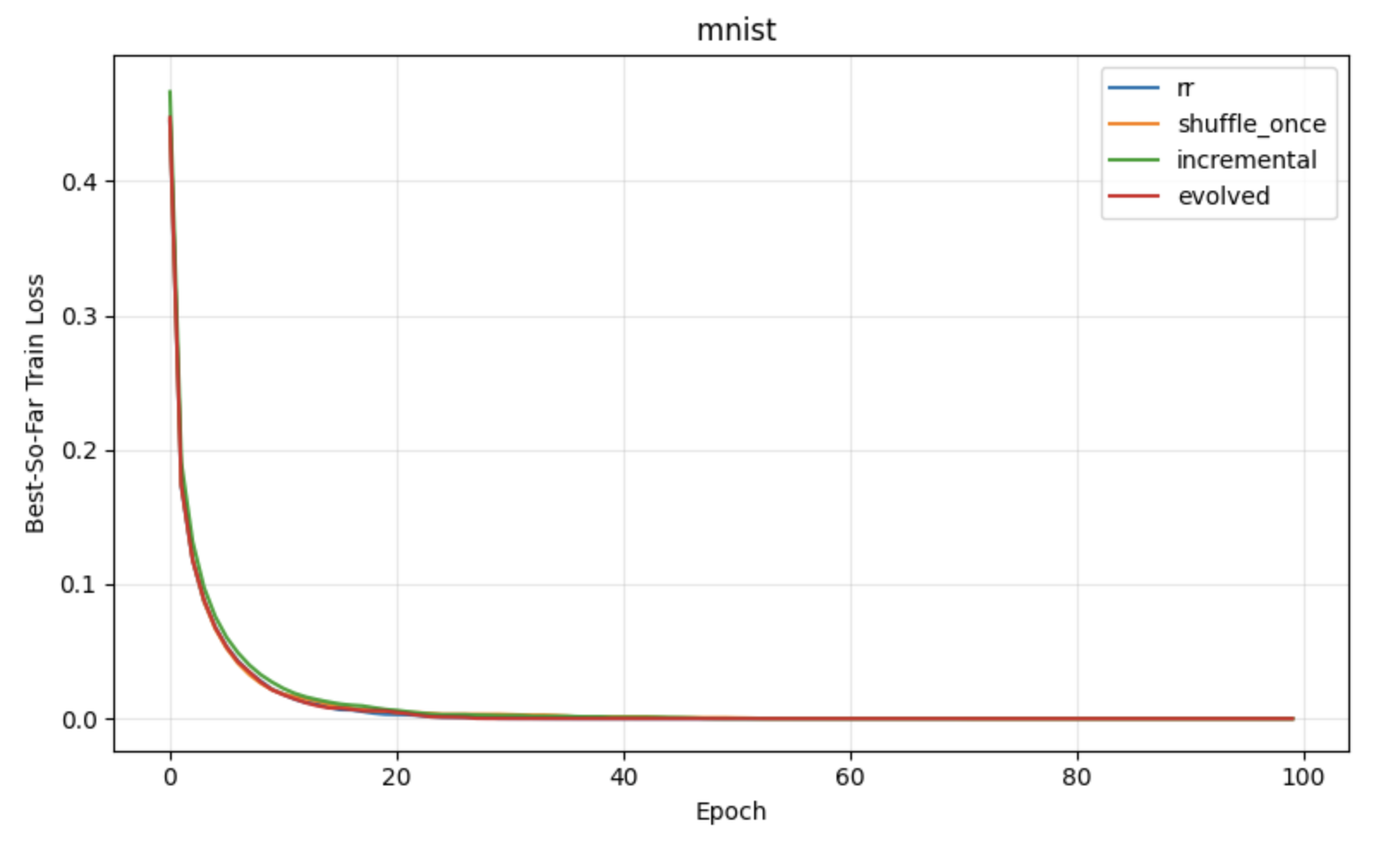}
        \includegraphics[width= 0.45\textwidth]{Figs/fashionmnist_constant_large.png}
        \caption{Classification Datasets (NN) - Constant Learning Rates}
        \label{fig_classification_nn_constant_lr}
    \end{subfigure}

    \vspace{0.5em}

    \begin{subfigure}[h]{\textwidth}
        \centering
        \includegraphics[width= 0.45\textwidth]{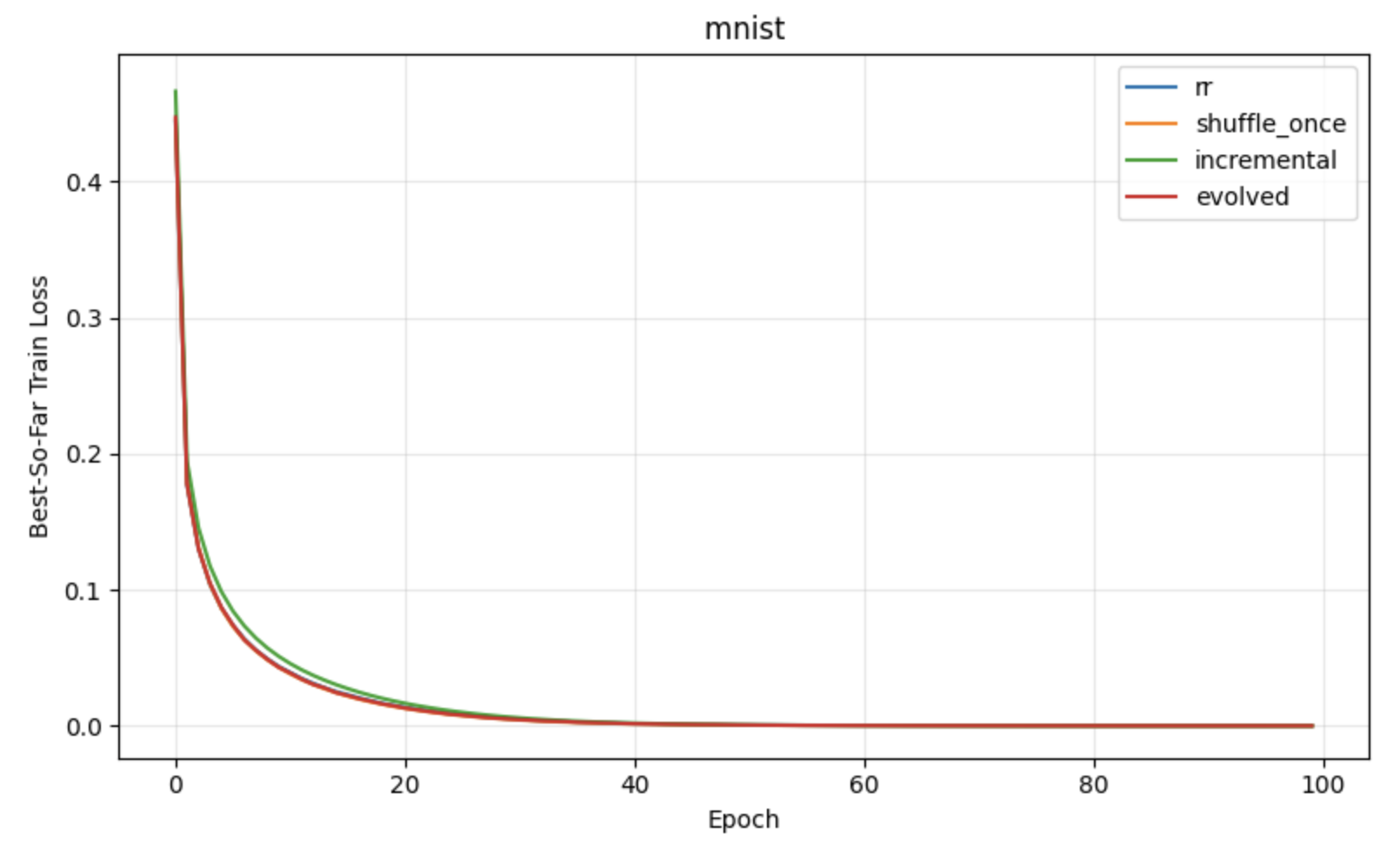}
        \includegraphics[width= 0.45\textwidth]{Figs/fashionmnist_poly_large.png}
        \caption{Classification Datasets (NN) - Diminishing Learning Rates}
        \label{fig_classification_nn_diminishing_lr}
    \end{subfigure}

    \caption{Classification Datasets (NN)}
    \label{fig:experiments_classification_NN_lr_large}
\end{figure}

\subsection{Small Learning Rates}

In the small-variance regime induced by very small learning rates, all shuffling schemes exhibit nearly indistinguishable behavior. 

\begin{figure}[!htbp]
    \centering

    \begin{subfigure}[h]{\textwidth}
        \centering
        \includegraphics[width= 0.32\textwidth]{Figs/a9a_constant_small.png}
        \includegraphics[width= 0.32\textwidth]{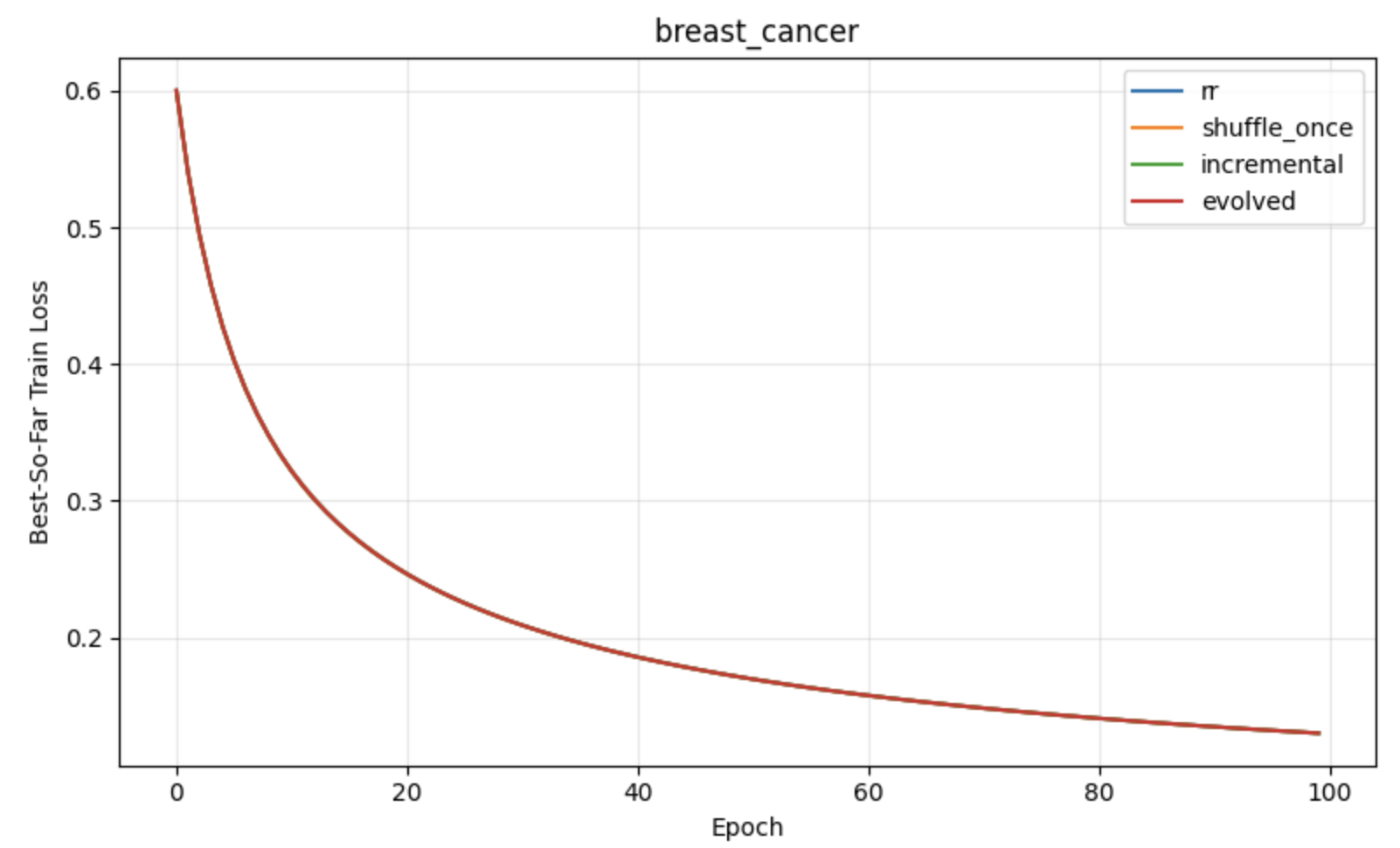}
        \includegraphics[width= 0.32\textwidth]{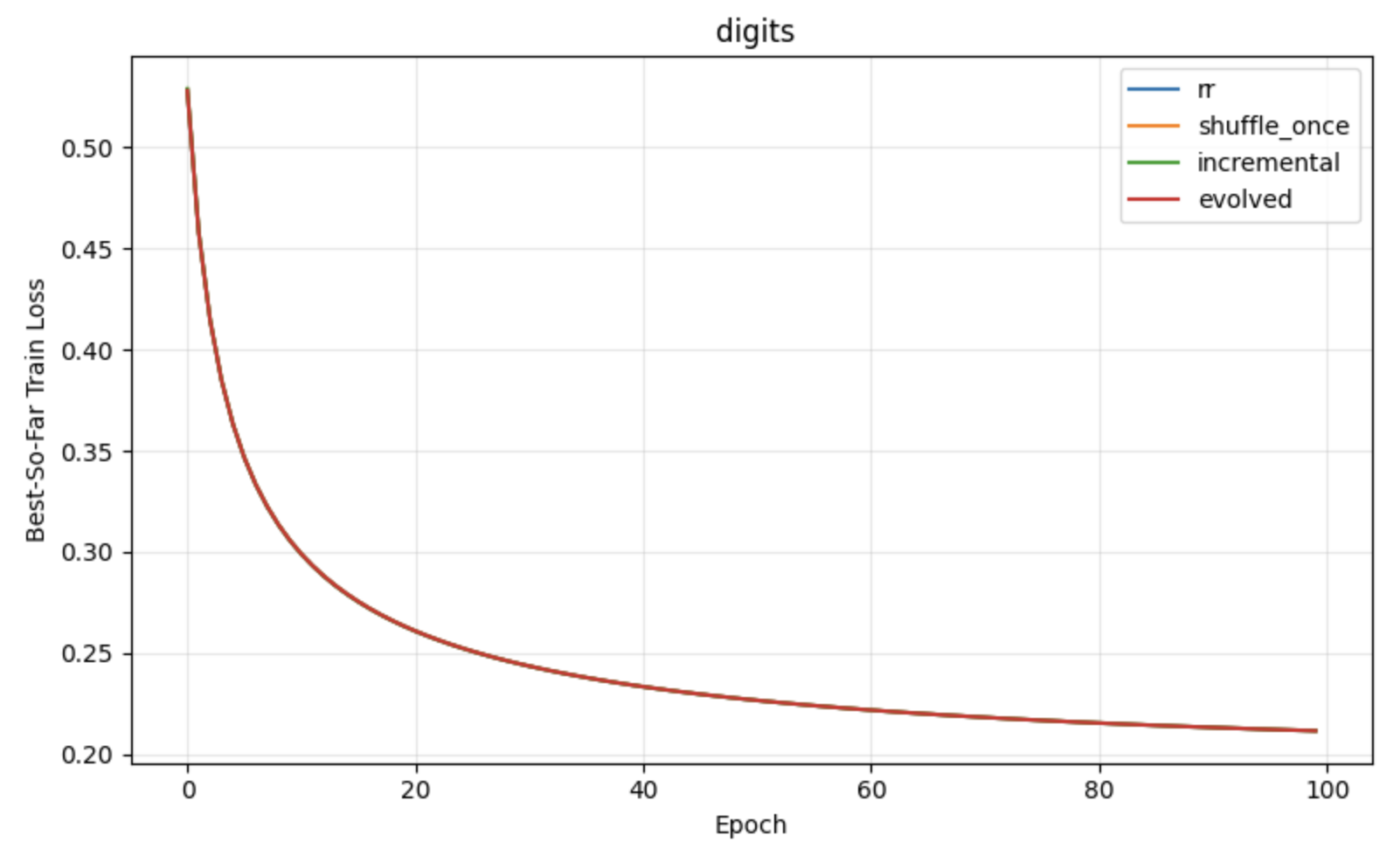}
        \caption{Classification Datasets - Constant Learning Rates}
        \label{fig_classification_constant_lr}
    \end{subfigure}

    \vspace{0.5em}

    \begin{subfigure}[h]{\textwidth}
        \centering
        \includegraphics[width= 0.32\textwidth]{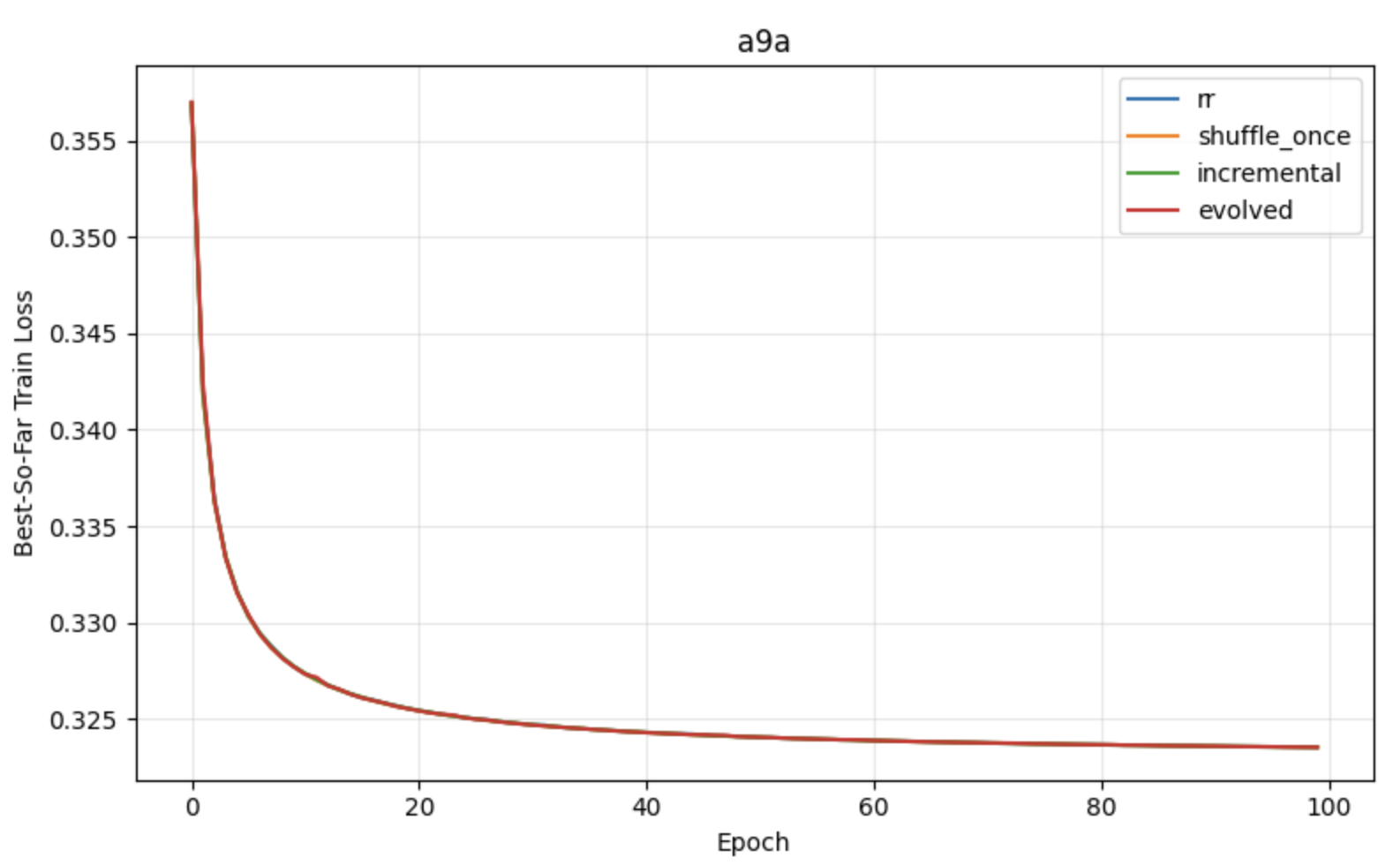}
        \includegraphics[width= 0.32\textwidth]{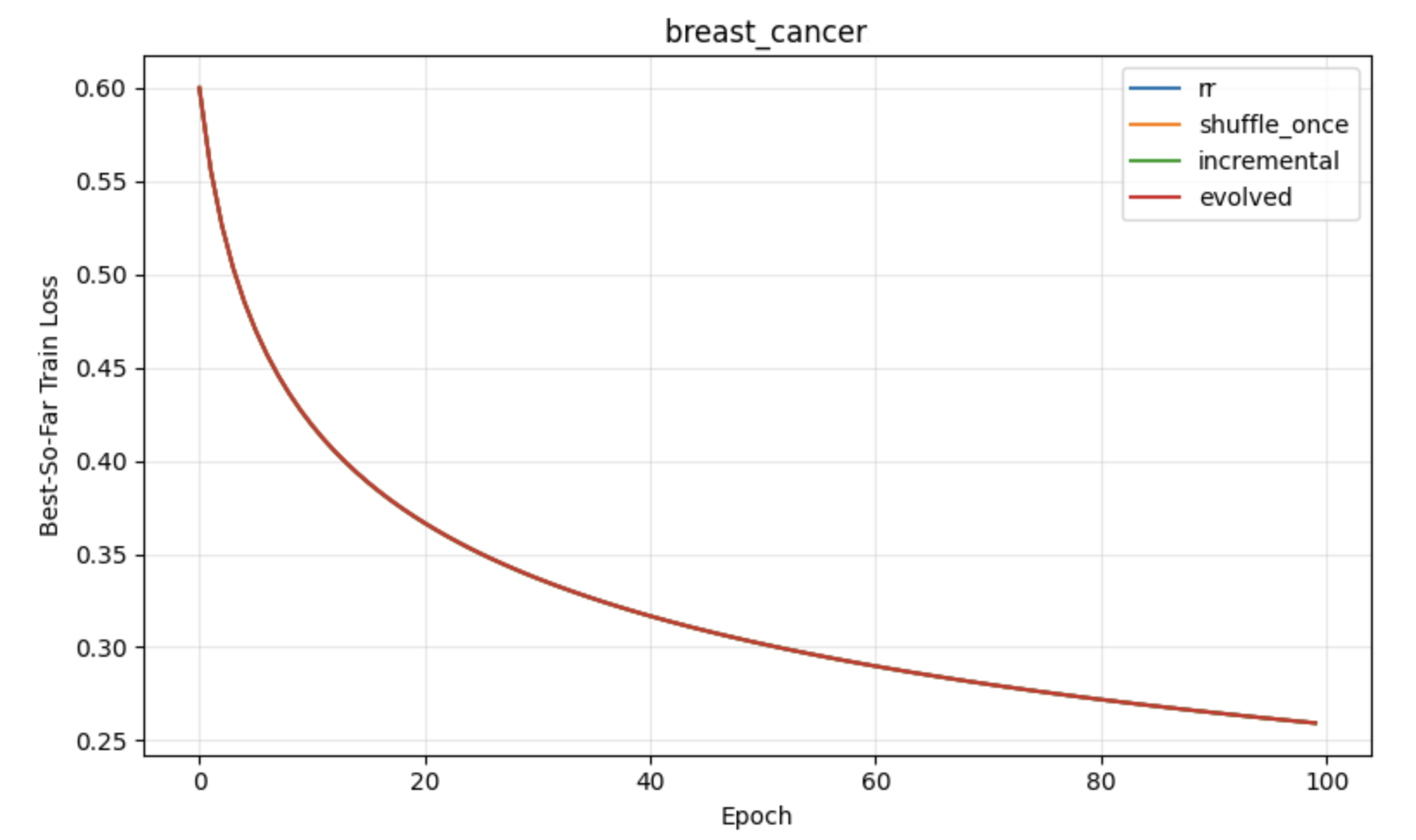}
        \includegraphics[width= 0.32\textwidth]{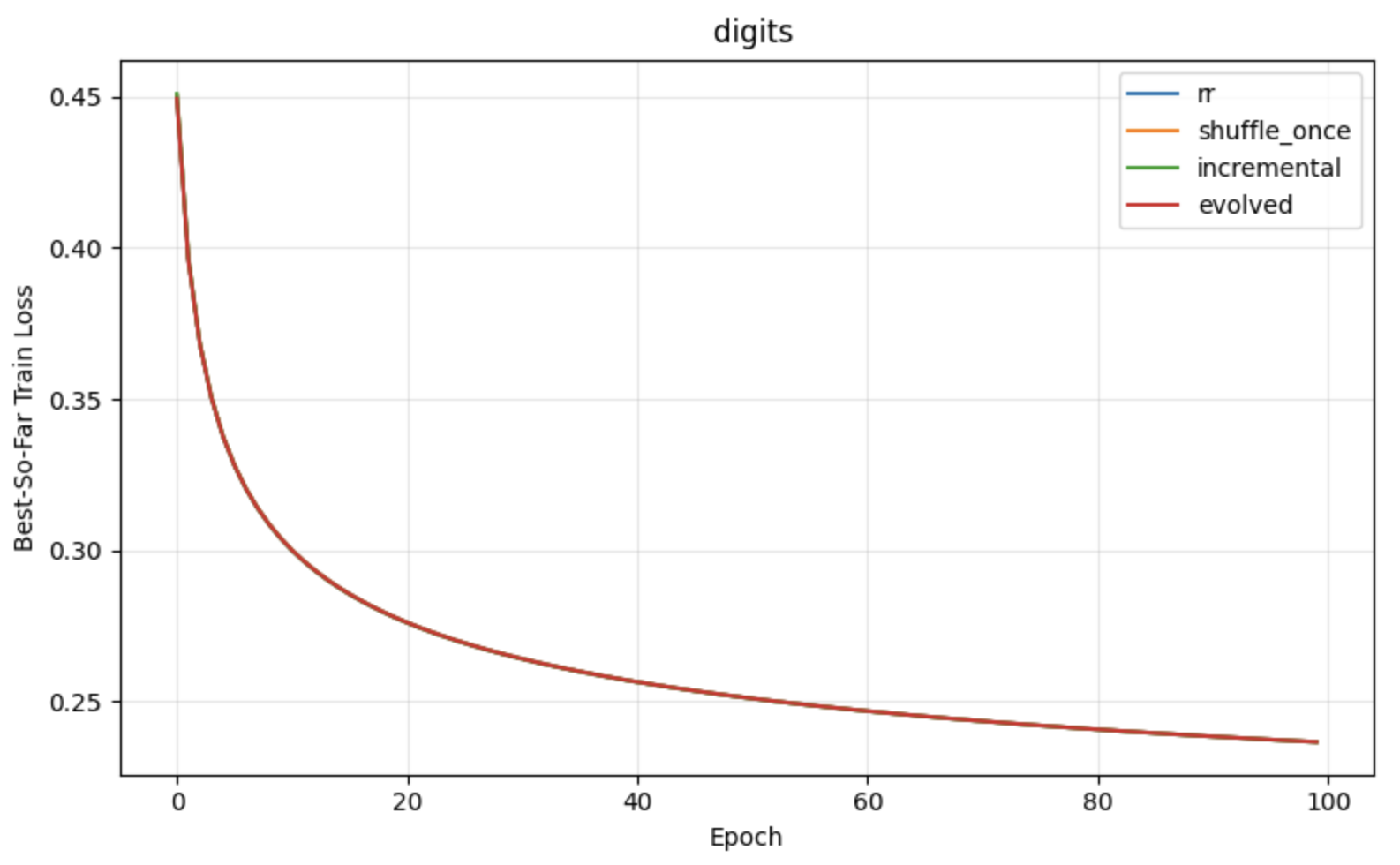}
        \caption{Classification Datasets - Diminishing Learning Rates}
        \label{fig_classification_diminishing_lr}
    \end{subfigure}

    \caption{Classification Datasets}
    \label{fig:experiments_classification_lr_small}
\end{figure}

\begin{figure}[!htbp]
    \centering

    \begin{subfigure}[h]{\textwidth}
        \centering
        \includegraphics[width= 0.32\textwidth]{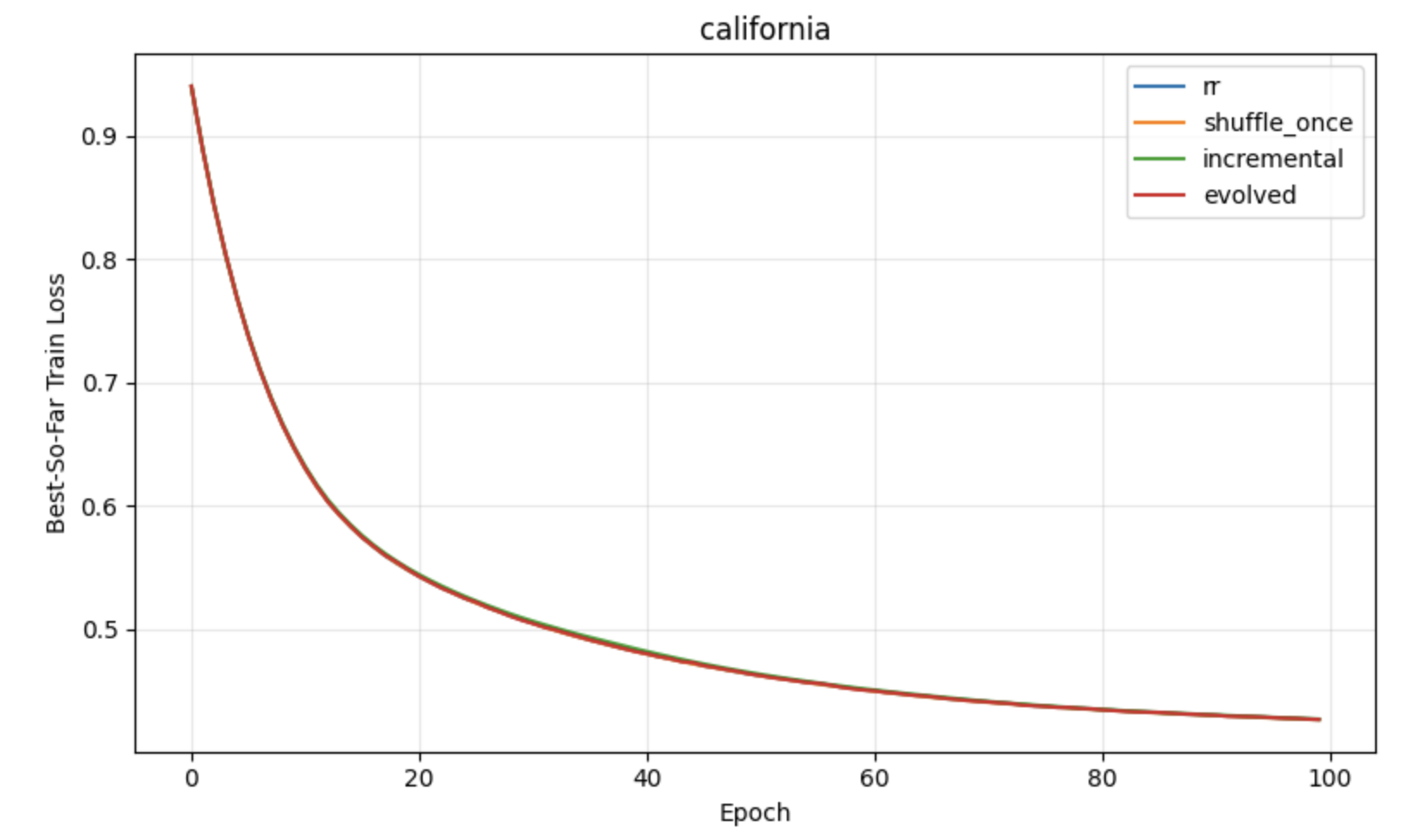}
        \includegraphics[width= 0.32\textwidth]{Figs/boston_constant_small.png}
        \includegraphics[width= 0.32\textwidth]{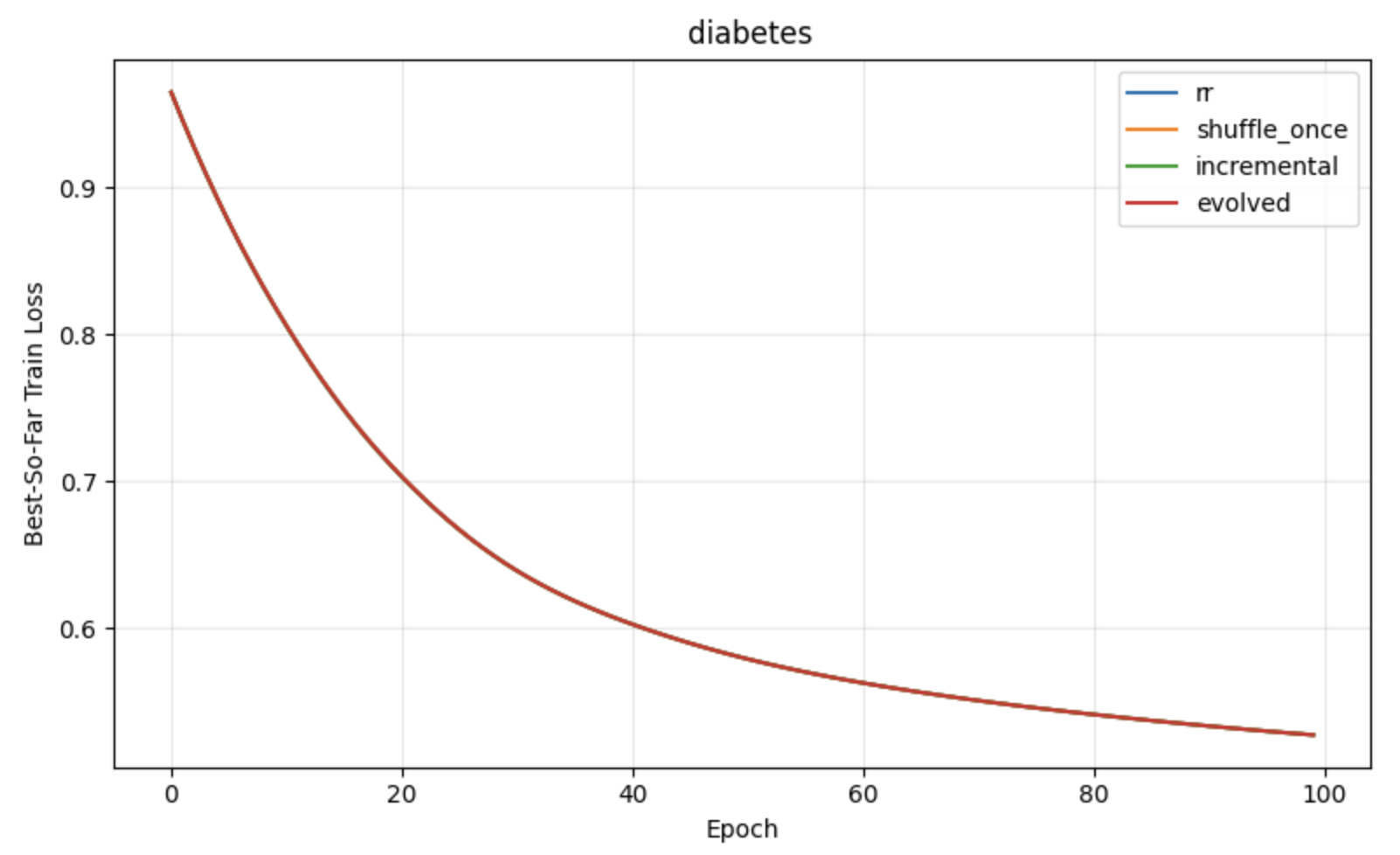}
        \caption{Regression Datasets - Constant Learning Rates}
        \label{fig_regression_constant_lr_small}
    \end{subfigure}

    \vspace{0.5em}

    \begin{subfigure}[h]{\textwidth}
        \centering
        \includegraphics[width= 0.32\textwidth]{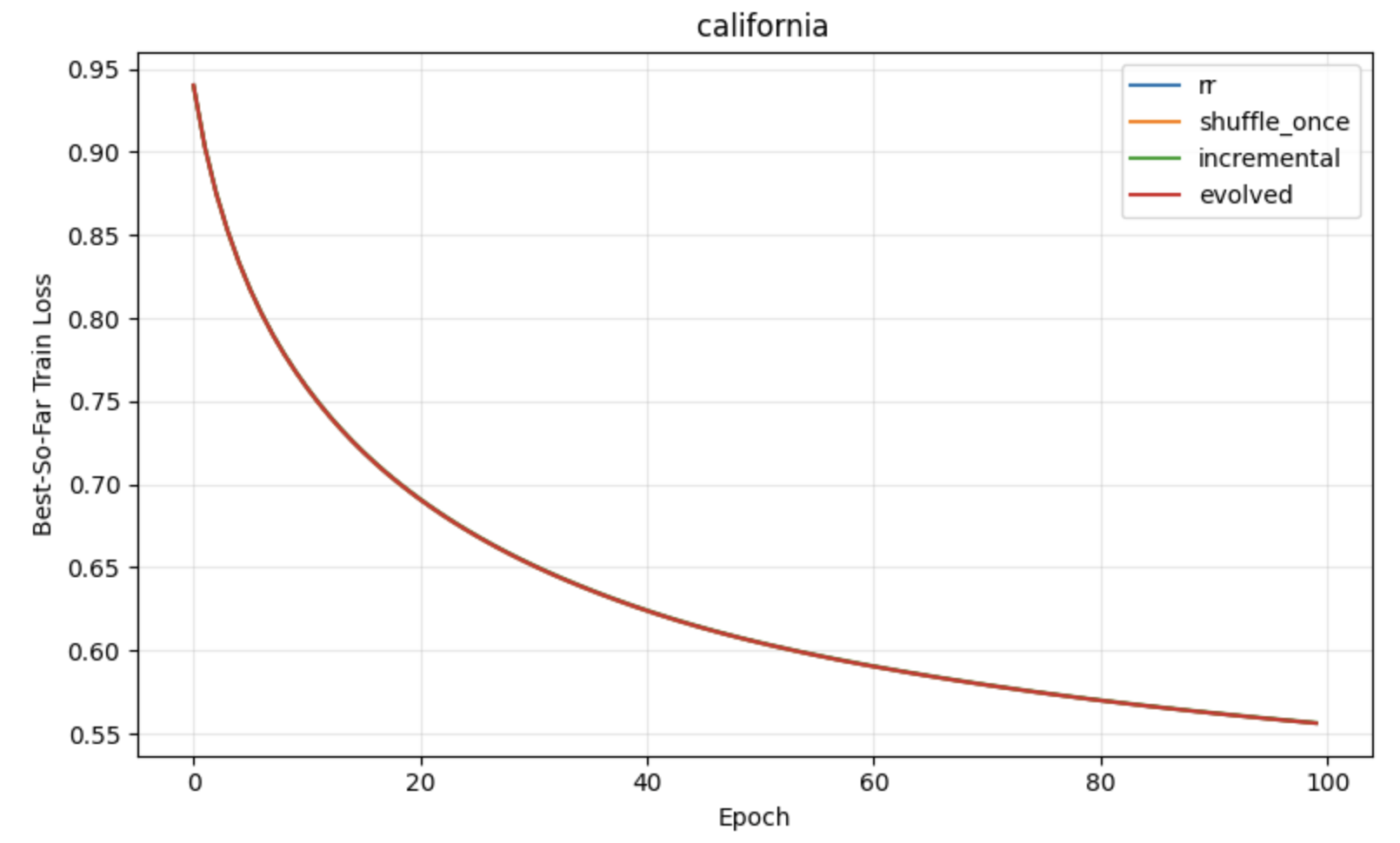}
        \includegraphics[width= 0.32\textwidth]{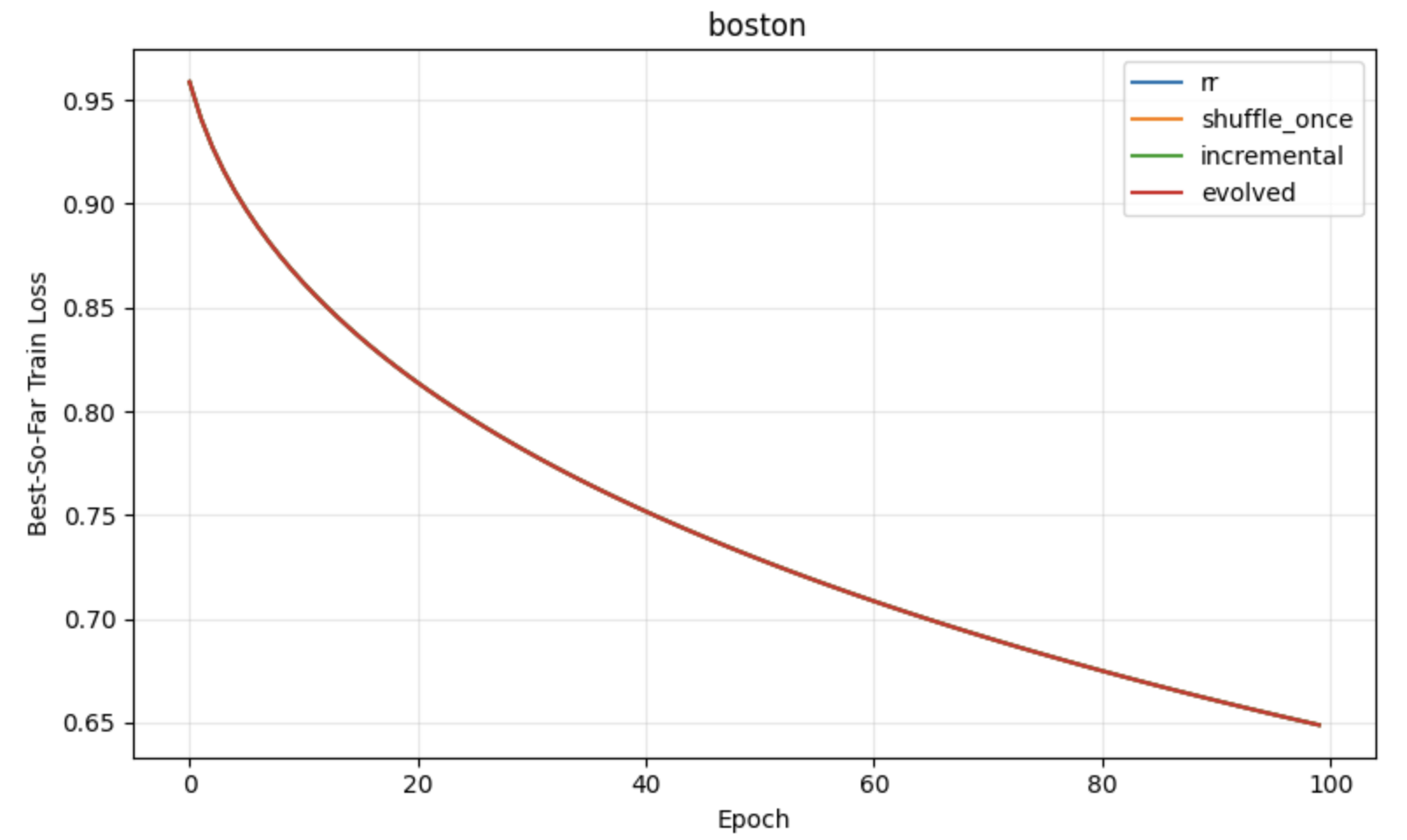}
        \includegraphics[width= 0.32\textwidth]{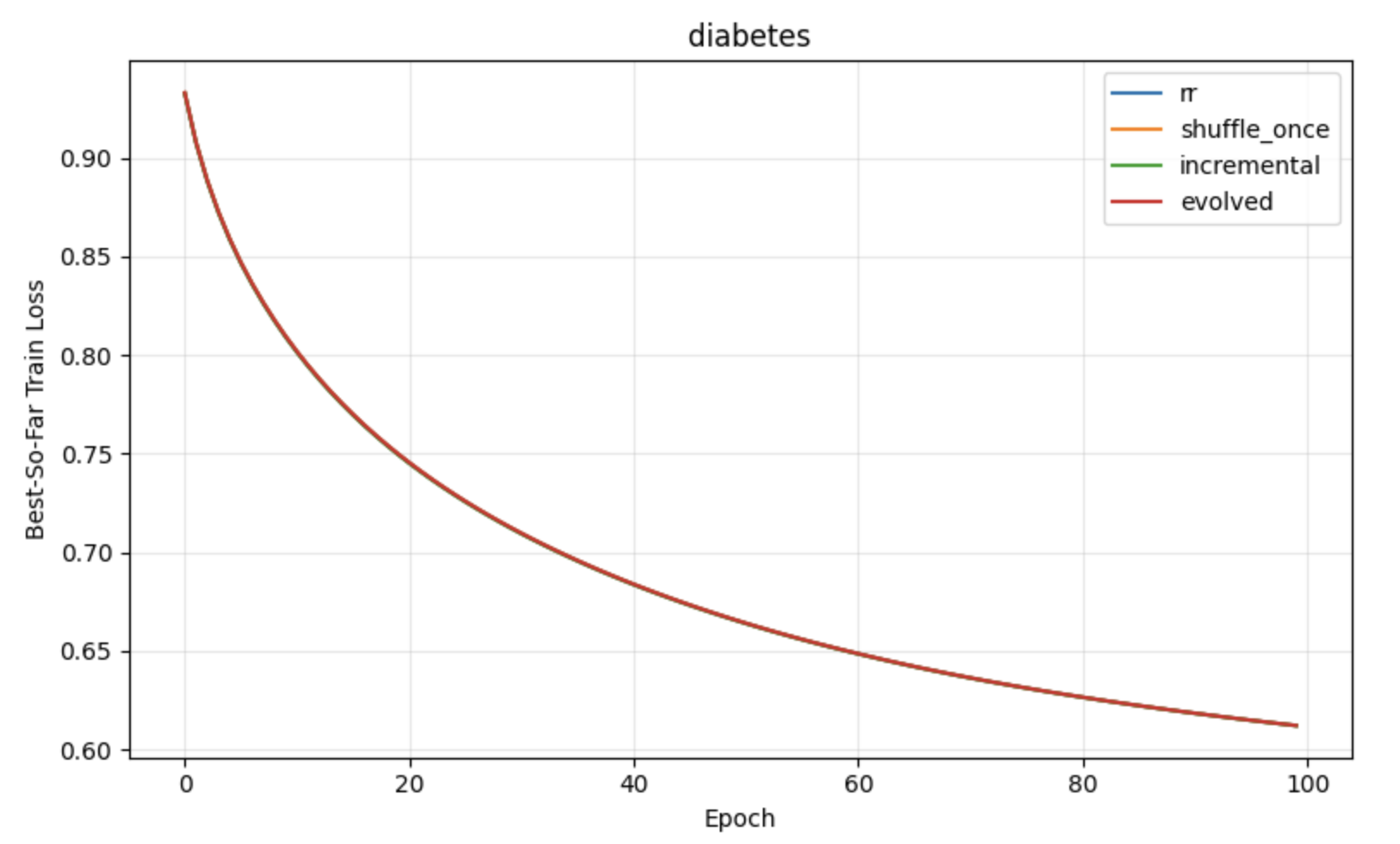}
        \caption{Regression Datasets - Diminishing Learning Rates}
        \label{fig_regression_diminishing_lr_small}
    \end{subfigure}

    \caption{Regression Datasets}
    \label{fig:experiments_regression_lr_small}
\end{figure}

\begin{figure}[!htbp]
    \centering

    \begin{subfigure}[h]{\textwidth}
        \centering
        \includegraphics[width= 0.45\textwidth]{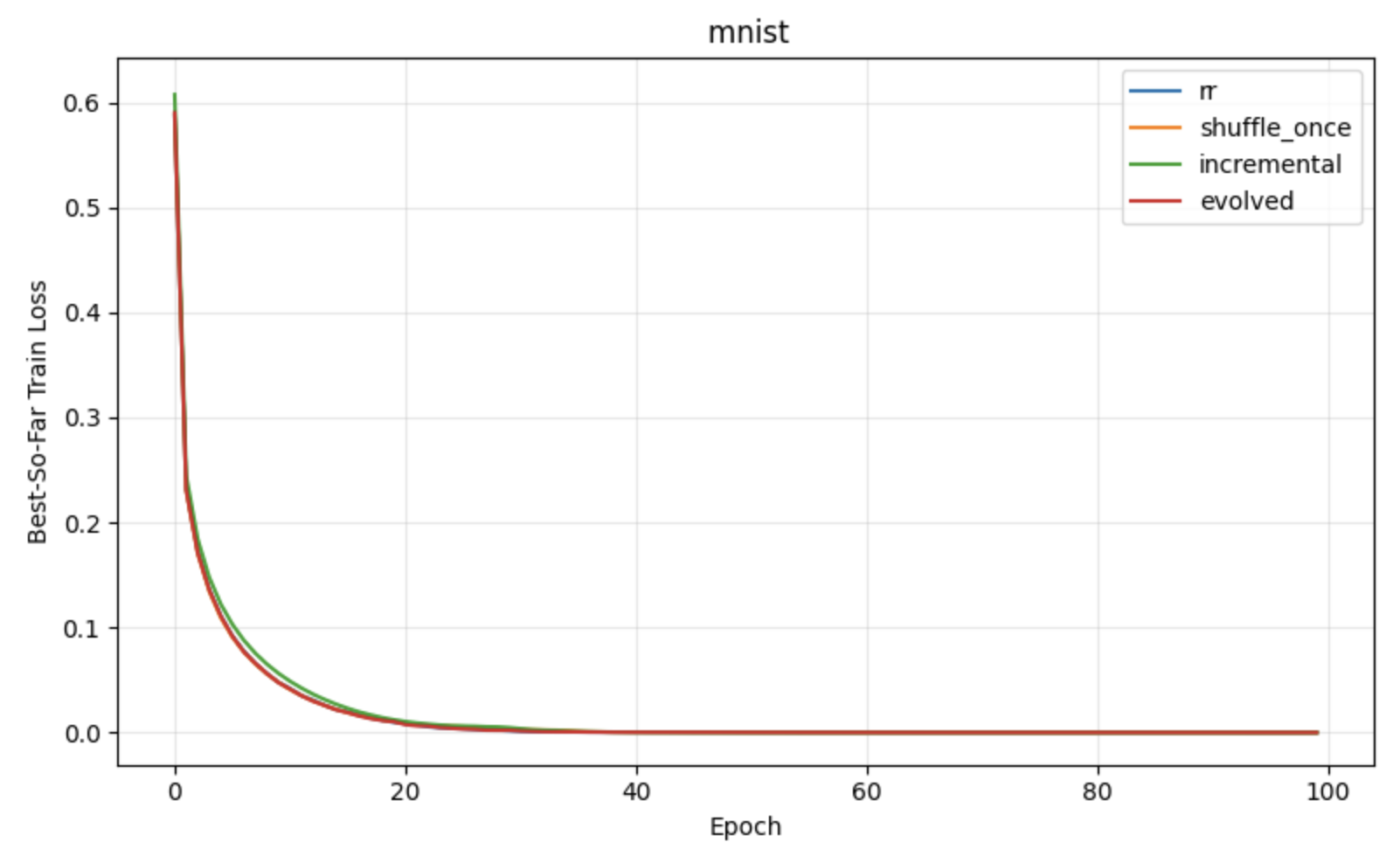}
        \includegraphics[width= 0.45\textwidth]{Figs/fashionmnist_constant_small.png}
        \caption{Classification Datasets (NN) - Constant Learning Rates}
        \label{fig_classification_nn_constant_lr_small}
    \end{subfigure}

    \vspace{0.5em}

    \begin{subfigure}[h]{\textwidth}
        \centering
        \includegraphics[width= 0.45\textwidth]{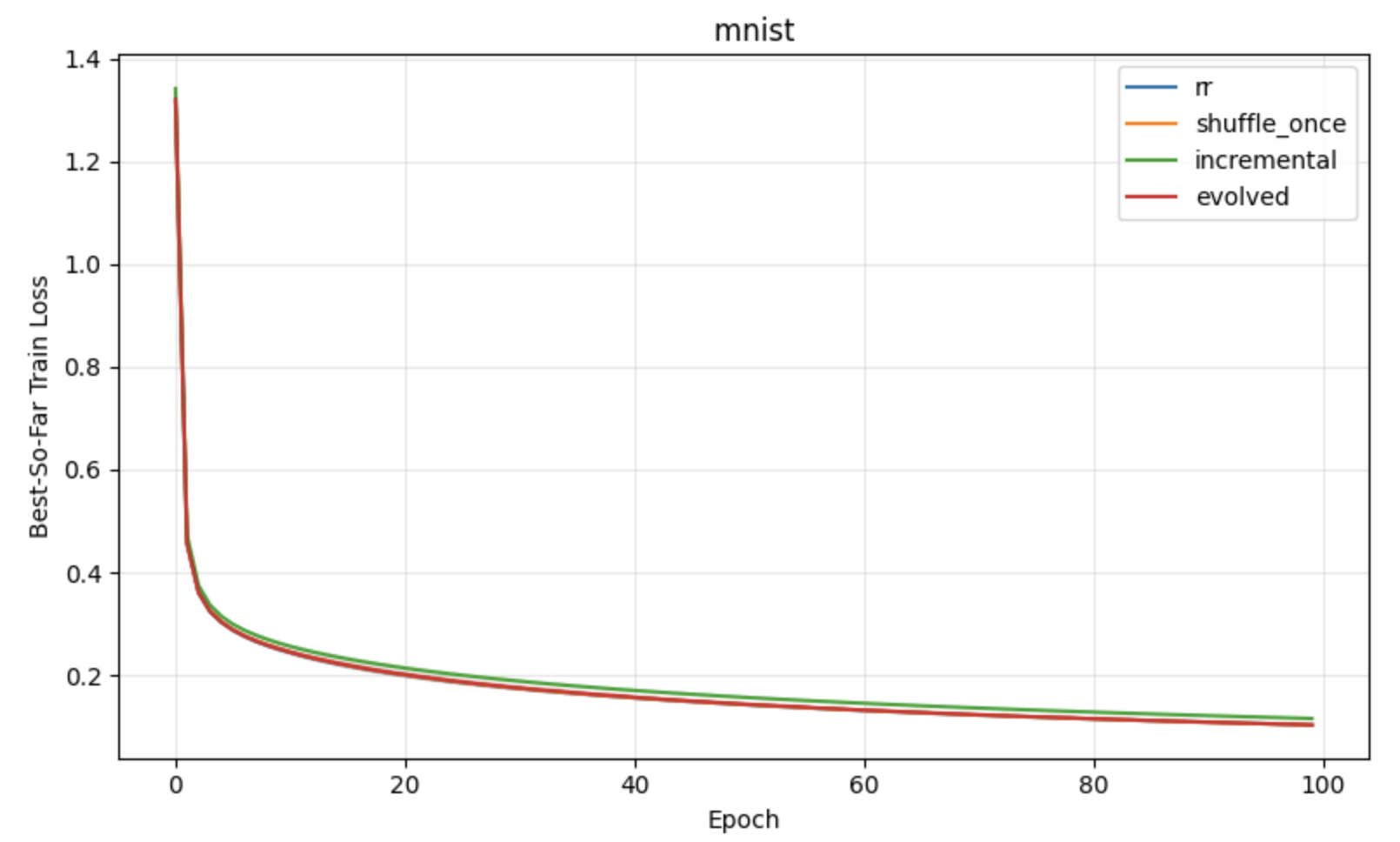}
        \includegraphics[width= 0.45\textwidth]{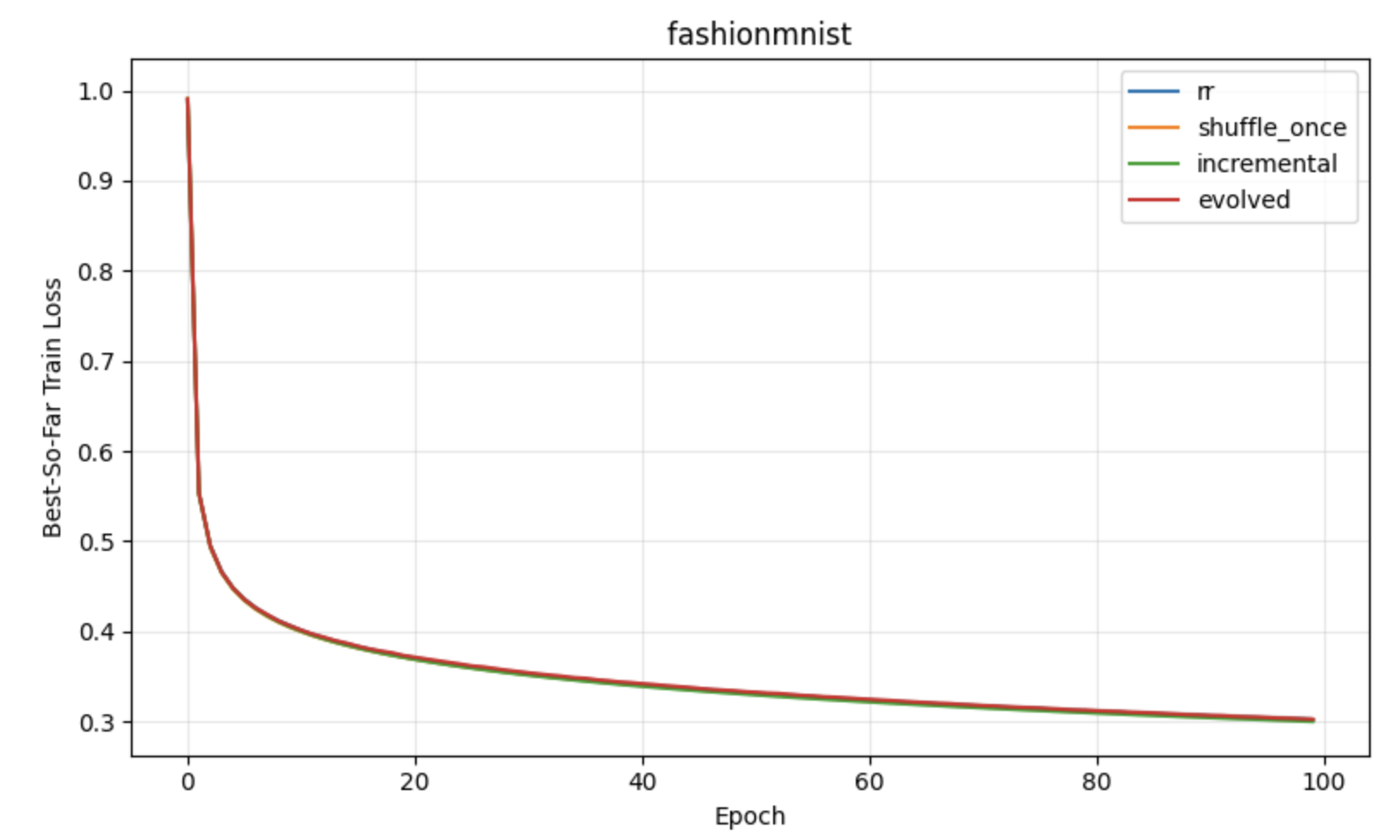}
        \caption{Classification Datasets (NN) - Diminishing Learning Rates}
        \label{fig_classification_nn_diminishing_lr_small}
    \end{subfigure}

    \caption{Classification Datasets (NN)}
    \label{fig:experiments_classification_nn_lr_small}
\end{figure}

\section{LLM-Guided Discovery of Shuffling Rules}\label{sec_app_LLM_guided}

We employ OpenEvolve \citep{openevolve}, a large language model (LLM)-guided program evolution pipeline based on the Microsoft Phi-4 model \citep{abdin2024phi}, to explore a space of deterministic shuffling rules.
Each candidate rule maps epoch-level training statistics, such as loss values or gradient norms from previous epochs, to a permutation of the dataset indices. To ensure practical feasibility, the search space is restricted to permutation generators with $O(n)$ computational complexity.

The pipeline iteratively generates candidate programs via LLM-based mutations, evaluates them on a suite of benchmark tasks, and retains high-performing rules for further refinement.
This process yields a concrete shuffling algorithm, which we refer to as Adaptive Block Reshuffling with Periodic Transforms (APR).
APR adaptively switches between structured blockwise permutations and reversal-based transforms depending on observed training progress.

\paragraph{Evolution setup.}
We used OpenEvolve, an LLM-guided program evolution framework, to explore adaptive data shuffling strategies for SGD.
The search space was restricted to deterministic Python functions producing permutations of $\{0,\dots,n-1\}$ in $O(n)$ time.
The evolved function could access only coarse epoch-level statistics (training loss and gradient norm) and was prohibited from modifying the model, optimizer, or learning rate.

\paragraph{Prompt and constraints.}
The LLM was instructed to design shuffling rules that reduce training loss and gradient norm as rapidly as possible.

\paragraph{Emergent structure.}
The evolved programs exhibited two components:
\begin{itemize}
\item Grouping indices into contiguous blocks and reshuffling at the block level.
\item Applying deterministic reversal or palindromic traversal of blocks, often periodically across epochs.
\end{itemize}

\paragraph{Role of the LLM.}
We emphasize that the LLM does not participate in training, inference, or optimization.
It is used only offline to explore a constrained design space and suggest candidate
shuffling rules.
All theoretical guarantees and experimental results apply to the final APR algorithm,
which is fully specified and deterministic.

\section{Settings for Adaptive Block Reshuffling with Periodic Transforms (APR)}\label{sec_app_experiment_settings}

\subsection{Explanation of Algorithm~\ref{alg:apr}}

\paragraph{Adaptive reshuffling with loss-ratio gating.}
Algorithm~1 generates a permutation $\pi_e \in \mathcal{S}_n$ at each epoch $e$ based on the observed training progress.
The algorithm uses an epoch-level loss ratio
\[
\rho_e = \frac{\ell_e}{\ell_{e-1}+\varepsilon},
\]
which measures relative improvement between consecutive epochs and serves as a coarse indicator of the optimization regime.

When $\rho_e$ indicates strong improvement ($\rho_e < \tau_{\mathrm{strong}}$), the algorithm applies \emph{block reshuffling} with small blocks of size $b_{\mathrm{strong}} = \max(1,\lfloor \alpha_{\mathrm{strong}} n \rfloor)$.
The dataset indices are partitioned into consecutive blocks, the blocks are randomly permuted, and the indices are processed block by block while preserving the order within each block.
At specified epochs, a reversal transform is additionally applied to mitigate order-dependent effects.

When improvement is mild ($\tau_{\mathrm{strong}} \le \rho_e < \tau_{\mathrm{mild}}$), the same block reshuffling procedure is used with larger blocks of size $b_{\mathrm{mild}} = \max(1,\lfloor \alpha_{\mathrm{mild}} n \rfloor)$, introducing more randomness while retaining partial structure.
When no improvement is detected ($\rho_e \ge \tau_{\mathrm{mild}}$), the algorithm falls back to full random reshuffling, optionally followed by an even-odd interleaving transform to reduce local correlations.

All randomization is controlled by a deterministic epoch-dependent seed, ensuring reproducibility.
Importantly, Algorithm~\ref{alg:apr} always outputs a single valid permutation per epoch and therefore remains within the standard without replacement SGD framework.
This allows the method to be analyzed using unified shuffling theory while enabling structured modifications of the data order in response to training dynamics.

\paragraph{Sequence transforms.}
Algorithm~\ref{alg:apr} optionally applies simple deterministic transforms to the generated permutation.
The reversal operator $\mathrm{Rev}(\cdot)$ reverses the order of indices within an epoch, i.e.,
$\mathrm{Rev}(x_1,\dots,x_n) = (x_n,\dots,x_1)$.
This transform is used to counteract systematic order effects arising from sequential data processing.

The even-odd interleaving operator $\mathrm{EO}(\cdot)$ reorders a sequence by listing all odd-indexed elements first, followed by all even-indexed elements,
$\mathrm{EO}(x) = (x_1,x_3,\dots;\,x_2,x_4,\dots)$.
This operation increases the typical distance between originally adjacent indices and helps reduce correlations induced by data locality.
Both operators preserve the without replacement structure and introduce no additional randomness or computational overhead.

\subsection{Experimental Setup for APR}

All experiments are conducted using a specific instance of Adaptive Block Reshuffling with Periodic Transforms (APR) discovered by an LLM-guided program evolution framework.
The LLM proposes concrete reshuffling rules with fixed parameter choices based on training feedback, and the instance reported here was selected due to its consistent empirical performance across datasets.
Importantly, this instance always produces a single permutation per epoch and therefore remains within the standard without-replacement SGD framework.

\paragraph{APR instance and parameter settings.}
The LLM-discovered reshuffling rule corresponds to Algorithm~\ref{alg:apr} with the following fixed parameters:
\begin{itemize}
    \item \textbf{Loss-ratio thresholds:}
    $\tau_{\mathrm{strong}} = 0.9$ and $\tau_{\mathrm{mild}} = 1.0$.
    \item \textbf{Block fractions:}
    $\alpha_{\mathrm{strong}} = 0.1$ (small blocks) and
    $\alpha_{\mathrm{mild}} = 0.2$ (medium blocks).
    \item \textbf{Reversal schedule:}
    reversal is applied every $p_{\mathrm{rev}} = 3$ epochs with phase $r_{\mathrm{rev}} = 0$.
    \item \textbf{Even--odd interleaving:}
    applied every $p_{\mathrm{eo}} = 3$ epochs with phase $r_{\mathrm{eo}} = 1$.
    \item \textbf{Initialization:}
    the first epoch uses a fully random permutation.
\end{itemize}

At each epoch $e$, the reshuffling rule computes the loss ratio
\[
\rho_e = \frac{\ell_e}{\ell_{e-1} + \varepsilon},
\]
with $\varepsilon = 10^{-10}$ for numerical stability.
When $\rho_e < \tau_{\mathrm{strong}}$, the algorithm applies block reshuffling with small blocks, followed by periodic reversal.
When $\tau_{\mathrm{strong}} \le \rho_e < \tau_{\mathrm{mild}}$, block reshuffling with larger blocks is used.
When $\rho_e \ge \tau_{\mathrm{mild}}$, a full random reshuffling is performed, optionally followed by even-odd interleaving.


\bibliographystyle{plainnat}
\bibliography{reference}





\end{document}